\title{Lattice-Based Methods Surpass Sum-of-Squares in Clustering}
\author[a]{Ilias Zadik}
\author[b]{Min Jae Song}
\author[c]{Alexander S.\ Wein}
\author[b,d,e]{Joan Bruna}
\affil[a]{Department of Mathematics, Massachusetts Institute of Technology}
\affil[b]{Courant Institute of Mathematical Sciences, New York
  University}
\affil[c]{Simons Institute for the Theory of Computing, UC Berkeley}
\affil[d]{Center for Data Science, New York University}
\affil[e]{Center for Computational Mathematics, Flatiron Institute}
\begin{document}

\maketitle

\begin{abstract}

Clustering is a fundamental primitive in unsupervised learning which gives rise to a rich class of computationally-challenging inference tasks. In this work, we focus on the canonical task of clustering $d$-dimensional Gaussian mixtures with unknown (and possibly degenerate) covariance. Recent works (Ghosh et al.\ '20; Mao, Wein '21; Davis, Diaz, Wang '21) have established lower bounds against the class of low-degree polynomial methods and the sum-of-squares (SoS) hierarchy for recovering certain hidden structures planted in Gaussian clustering instances. Prior work on many similar inference tasks portends that such lower bounds strongly suggest the presence of an inherent statistical-to-computational gap for clustering, that is, a parameter regime where the clustering task is \textit{statistically} possible but no \textit{polynomial-time} algorithm succeeds.

One special case of the clustering task we consider is equivalent to the problem of finding a planted hypercube vector in an otherwise random subspace. We show that, perhaps surprisingly, this particular clustering model \textit{does not exhibit} a statistical-to-computational gap, despite the aforementioned low-degree and SoS lower bounds. To achieve this, we give an algorithm based on Lenstra--Lenstra--Lov\'asz lattice basis reduction which achieves the statistically-optimal sample complexity of $d+1$ samples. This result extends the class of problems whose conjectured statistical-to-computational gaps can be ``closed'' by ``brittle'' polynomial-time algorithms, highlighting the crucial but subtle role of noise in the onset of statistical-to-computational gaps.

\end{abstract}

\newpage
\tableofcontents
\newpage

\section{Introduction}
\label{sec:intro}

Many high-dimensional statistical inference problems exhibit a gap between what can be achieved by the optimal statistical procedure and what can be achieved by the best known \emph{polynomial-time} algorithms. As a canonical example, finding a planted $k$-clique in a $G(n,1/2)$ Erd\H{o}s--R\'{e}nyi random graph is statistically possible when $k$ exceeds $2 \log_2 n$ (via exhaustive search) but all known polynomial time algorithms require $k = \Omega(\sqrt{n})$, giving rise to a large conjectured ``possible but hard'' regime in between~\cite{jerrum-clique,alon-clique,sos-clique}. Such so-called \emph{statistical-to-computational gaps} are prevalent in many other key learning problems including sparse PCA (principal component analysis)~\cite{BR-reduction}, community detection~\cite{sbm-hard}, tensor PCA~\cite{MR-tensor-pca}, and random constraint satisfaction problems~\cite{sos-csp}, just to name a few. Unfortunately, since these are \emph{average-case} problems where the input is drawn from a specific distribution, current techniques appear unable to prove computational hardness in these conjectured ``hard'' regimes based on standard complexity assumptions such as $\mathsf{P} \ne \mathsf{NP}$.

Still, a number of different methods have emerged for understanding these gaps and providing ``rigorous evidence'' for computational hardness of statistical problems. Many involve studying the power of restricted classes of algorithms that are tractable to analyze, including statistical query (SQ) algorithms~\cite{kearnsSQ1998,sq-clique}, the sum-of-squares (SoS) hierarchy~\cite{parrilo-sos,lasserre-sos}, low-degree polynomial algorithms~\cite{HS-bayesian,hopkins2017power,hopkins2018thesis}, approximate message passing~\cite{amp}, MCMC methods~\cite{jerrum-clique}, and various notions of ``local'' algorithms~\cite{GS-local,zadikCOLT17,alg-tensor-pca}. As it turns out, the best known poly-time algorithms for a surprisingly wide range of statistical problems actually do belong to these restricted classes. As such, the above frameworks have been very successful at providing concrete explanations for statistical-to-computational gaps and allowing researchers to predict the location of the ``hard'' regime in new problems based on the location that such restricted classes of algorithms fail or succeed.

However, there are notorious exceptions where the above predictions turn out to be false. For example, the problem of learning parity (even in the absence of noise) is hard for SQ, SoS, and low-degree polynomials~\cite{kearnsSQ1998,grigoriev2001linear,schoenebeck2008linear}, yet actually admits a simple poly-time solution via Gaussian elimination. Yet, to the best of our knowledge, \emph{prior to the present work}, learning parities with no noise or other similar noiseless inference models based on linear equations, such as random 3-XOR-SAT, have been the only examples where some polynomial-time method (which appears to always be Gaussian elimination) works, while the SoS hierarchy and low-degree methods have been proven to fail. 

In this work, we identify a new \emph{class of problems} where the SoS hierarchy and low-degree lower bounds are provably bypassed by a polynomial-time algorithm. This class of problems is \emph{not} based on linear equations, and the suggested optimal algorithm is \emph{not} based on Gaussian elimination but on lattice basis reduction methods, which specifically seek to find a ``short'' vector in a lattice. Similar lattice-based method have over the recent years been able to ``close'' various statistical-to-computational gaps \cite{NEURIPS2018_ccc0aa1b, andoni2017correspondence, song2021cryptographic}, yet this is the first example we are aware of that they are able to ``close a gap'' where the SoS hierarchy is known to fail to do so.

The problems we analyze can be motivated from several angles in theoretical computer science and machine learning, and can be thought of as important special cases of well-studied problems such as Planted Vector in a Subspace, Gaussian Clustering, and Non-Gaussian Component Analysis. While our result is more general, one specific problem that we solve is the following: for a hidden unit vector $u \in \RR^d$, we observe $n$ independent samples of the form
\begin{equation}
\label{eq:hypercube-setup}
z_i \sim \mathcal{N}(x_i u, I_d - uu^\top), \quad i = 1,2,\ldots,n,
\end{equation}
where $x_i$ are i.i.d.\ uniform $\pm 1$, and the goal is to recover the hidden signs $x_i$ and the hidden direction $u$ (up to a global sign flip). Prior to our work, the best known poly-time algorithm required $n \gg d^2$ samples\footnote{Here are throughout, the notation $\gg$ hides logarithmic factors.}~\cite{planted-vec-ld}. Furthermore, this was believed to be unimprovable 
due to lower bounds against SoS algorithms and low-degree polynomials~\cite{MS-sos2,KB-sos4,lifting-sos,affine-planes,tim-sos6,planted-vec-ld,unknown-cov}. Nevertheless, we give a poly-time algorithm under the much weaker assumption $n \ge d+1$. In fact, this sample complexity is essentially optimal for the previous recovery problem, as shown by our information-theoretic lower bound (see Section~\ref{sec:it-lower-bound}). 
Our result makes use of the Lenstra--Lenstra--Lov\'asz (LLL) algorithm for lattice basis reduction~\cite{lenstra1982factoring}, a powerful algorithmic paradigm that has seen recent, arguably surprising, success in solving to information-theoretic optimality a few different ``noiseless'' statistical inference problems, some even in regimes where it was conjectured that no polynomial-time method works: Discrete Regression \cite{NEURIPS2018_ccc0aa1b,LLL_TIT}, Phase Retrieval \cite{andoni2017correspondence, song2021cryptographic}, Cosine Neuron Learning \cite{song2021cryptographic}, and Continuous Learning with Errors \cite{bruna2020continuous,song2021cryptographic}\footnote{in the exponentially-small noise regime}. Yet, to the best of our knowledge, this work is the first to establish the success of an LLL-based method in a regime where low-degree and SoS lower bounds both suggest computational intractability. This raises the question of whether LLL can ``close'' any other conjectured statistical-to-computational gaps. We believe that understanding the power and limitations of the LLL approach is an important direction for future research towards understanding the computational complexity of inference.

We also point out one weakness of the LLL approach: our algorithm is brittle to the specifics of the model, and relies on the observations being ``noiseless'' in some sense. For instance, our algorithm only solves the model in~\eqref{eq:hypercube-setup} because the $x_i$ values lie \emph{exactly} in $\pm 1$ and the covariance $\Sigma = I-uu^\top$ has quadratic form $u^\top \Sigma u$ \emph{exactly} equal to zero (or, similarly to other LLL applications \cite{NEURIPS2018_ccc0aa1b}, of exponentially small magnitude). If we were to perturb the model slightly, say by adding an inverse-polynomial amount of noise to the $x_i$'s, our algorithm would break down because of the known non-robustness properties of the LLL algorithm. In fact, a noisy version (with inverse-polynomial noise) of one problem that we solve is the \emph{homogeneous} Continuous Learning with Errors problem (hCLWE), which is provably hard based on the standard assumption~\cite[Conjecture 1.2]{micciancio2009lattice} from lattice-based cryptography that certain worst-case lattice problems are hard against quantum algorithms~\cite{bruna2020continuous}. All existing algorithms for statistical problems based on LLL suffer from the same lack of robustness. In this sense, there is a strong analogy between LLL and the other known successful polynomial-time method for noiseless inference, namely the Gaussian elimination approach to learning parity: both exploit very precise algebraic structure in the problem and break down in the presence of even a small (inverse-polynomial) amount of noise.

As discussed above, our results ``break'' certain computational lower bounds based on SoS and low-degree polynomials. Still, we believe that these types of lower bounds are interesting and meaningful, but some care should be taken when interpreting them. It is in fact already well-established that such lower bounds can sometimes be beaten on ``noiseless'' problems (a key example being Gaussian elimination). 
However, there are some subtleties in how ``noiseless'' should be defined here, and whether fundamental problems with statistical-to-computational gaps such as planted clique---which has implications for many other inference problems via average-case reductions (e.g.~\cite{BR-reduction,MW-reduction,HWX-pds,BBH-reduction})---should be considered ``noiseless.'' We discuss these issues further in Section~\ref{sec:noiseless}.

\subsection{Main contributions}

The main inference setting we consider in this work is as follows. Let $d \in \mathbb{N}$ be the growing ambient dimension the data lives in, $n$ be the number of samples, and $a>0$ be what we coin as the \emph{spacing} parameter. We consider arbitrary labels on the one-dimensional lattice: $a x_i$ where $x_i \in \mathbb{Z}$ for $i=1,\ldots,n$, under the weak constraints $|x_i| \leq 2^d,\, i=1,\ldots,n$ and $d^{-O(1)} \leq |a| \leq d^{O(1)}$. We also consider an arbitrary direction $u \in \mathcal{S}^{d-1}$ and an unknown covariance matrix $\Sigma$ with $\Sigma u=0$ and ``reasonable'' spectrum, in the sense that $\Sigma$ does not have exponentially small or large eigenvalues in directions orthogonal to $u$ (see Assumption \ref{assum:u-weak_sep}). In particular, the choice of $\Sigma=I-uu^{\top}$ is permissible per our assumptions but much more generality is possible. Our goal is to learn \emph{both} the labels $x_i,\, i=1,\ldots,n$ and the hidden direction $u$ (up to global sign flip applied to both $x$ and $u$) from independent samples
\begin{align}\label{eq:separation-setup} 
z_i \sim \mathcal{N}(a x_i u, \Sigma),\quad i=1,\ldots,n.
\end{align}

\paragraph{Exact recovery with polynomial-time algorithm.} 
Our main algorithmic result is informally stated as follows.

\begin{theorem}[Informal statement of Theorem \ref{thm:LLL}]
\label{thm:inf}
 Under the above setting, if $n=d+1$ then there is an LLL-based algorithm (Algorithm \ref{alg:lll}) which terminates in polynomial time and outputs exactly, up to a global sign flip, both the correct labels $x_i$ and the correct hidden direction $u$ with probability $1-\exp(-\Omega(d)).$
 
\end{theorem}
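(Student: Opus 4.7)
The plan is to exploit the degenerate covariance $\Sigma u = 0$ in order to extract an \emph{exact} linear constraint on the integer labels $x_i$ directly from the observations $z_i$, and then feed this constraint into LLL. Writing $\xi_i := z_i - a x_i u$, the hypothesis $\Sigma u = 0$ forces $\xi_i \in u^\perp$ almost surely, so the $d+1$ vectors $z_1,\ldots,z_{d+1} \in \mathbb{R}^d$ admit (a.s.) a one-dimensional family of linear relations $\sum_{i=1}^{d+1} \lambda_i z_i = 0$; projecting onto $u$ and $u^\perp$ separately, this single relation forces both $\sum_i \lambda_i x_i = 0$ and $\sum_i \lambda_i \xi_i = 0$. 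Thus the unknown label vector $x = (x_1,\ldots,x_{d+1}) \in \mathbb{Z}^{d+1}$ lies in the orthogonal hyperplane of a direction $\lambda$ that can be computed from the data by linear algebra, reducing the problem to finding a bounded integer vector in a rational hyperplane.

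Given this reduction, the algorithm is a standard LLL-based integer relation finder. I would compute a rational approximation $\tilde\lambda$ of $\lambda$ with a common denominator $N = 2^{\mathrm{poly}(d)}$ chosen large enough that the truncation error is negligible, and then run LLL on the rows of the Lagarias--Odlyzko-style embedding
$$B \;=\; \bigl[\, I_{d+1} \;\big|\; N\tilde\lambda \,\bigr] \in \mathbb{Z}^{(d+1)\times(d+2)},$$
so that each lattice point takes the form $(y,\, N\langle \tilde\lambda, y\rangle)$ for $y \in \mathbb{Z}^{d+1}$. Since $\langle \lambda, x\rangle = 0$ exactly and $\|x\|_\infty \le 2^d$, choosing $N$ so that $N\,\|\lambda - \tilde\lambda\|_\infty\,\|x\|_1 \ll 1$ places a planted lattice vector corresponding to $x$ of norm $O(\sqrt{d}\,2^d)$ in $B$. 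After LLL returns a short basis, I would read off $\hat x = \pm x$ from the first $d+1$ coordinates, and then recover $u$ by solving the overdetermined but a.s.\ consistent linear system $\langle u, z_i\rangle = a\hat x_i$, which is well-posed because $z_1,\ldots,z_{d+1}$ span $\mathbb{R}^d$ a.s.\ under Assumption \ref{assum:u-weak_sep}.

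The main obstacle is the uniqueness analysis: showing with probability $1 - e^{-\Omega(d)}$ that the planted lattice vector is a factor at least $2^{\Omega(d)}$ shorter than any competitor, so that LLL's $2^{O(d)}$ approximation factor suffices to return it (up to sign) as the shortest basis vector. Concretely, for each $y \in \mathbb{Z}^{d+1}$ with $\|y\|_\infty \le 2^d$ that is not a scalar multiple of $x$, one needs a Gaussian anti-concentration bound roughly of the form $\Pr[|\langle \lambda, y\rangle| \le 2^{-Cd^2}] \le 2^{-C'd^2}$, together with a union bound over the $2^{O(d^2)}$ such $y$ modulo $\mathbb{Z} x$. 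Such a bound is plausible because, conditional on $\langle \lambda, x\rangle = 0$, $\lambda$ is essentially uniformly distributed on a random one-dimensional subspace of $x^\perp$---namely the intersection of the (rotationally invariant) two-dimensional kernel of the Gaussian matrix $[\xi_1 \mid \cdots \mid \xi_{d+1}]$ with $x^\perp$---so its inner product with any fixed $y \notin \mathrm{span}(x)$ should be adequately spread. Carrying this argument through quantitatively, with the correct dependence on the precision $N$ and on Assumption \ref{assum:u-weak_sep}, is where the bulk of the technical work should lie, mirroring in spirit the anti-concentration arguments developed in prior LLL-for-statistics works such as \cite{LLL_TIT, song2021cryptographic}.
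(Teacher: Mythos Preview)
Your proposal is correct and follows essentially the same route as the paper: extract the null relation $\sum_i \lambda_i z_i = 0$, observe it forces the integer relation $\sum_i \lambda_i x_i = 0$, embed into a Lagarias--Odlyzko lattice (the paper adds one extra coordinate $2^{-N}$ to absorb truncation error exactly), run LLL, and finally solve $Z^\top u = a\hat x$ for $u$.

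The one place where the paper is more concrete than your sketch is the anticoncentration step. Rather than reasoning about the distribution of $\lambda$ on $x^\perp$ directly (which is awkward since $\lambda_i$ is a \emph{ratio} of determinants), the paper clears denominators and studies the multilinear degree-$d$ polynomial $P_t(z_1,\ldots,z_{d+1}) := \det(Z)\,\langle \lambda, t\rangle = \sum_i t_i \det(Z_{-i})$ in the Gaussian entries of the $z_i$. An explicit variance computation gives $\mathrm{Var}(P_t) \propto \sum_{i<j}(t_i x_j - t_j x_i)^2$, which is at least $1$ whenever $t \notin \mathbb{R}x$, and then Carbery--Wright yields $\Pr[|P_t| \le \epsilon] \le Bd\,\epsilon^{1/d}$. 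Combined with an upper bound on $|\det(Z)|$, this replaces your heuristic ``$\lambda$ uniform on a random line in $x^\perp$'' and makes the union bound go through cleanly with precision $N = \lceil d^4(\log d)^2\rceil$.
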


\begin{table}[t]\label{table}
    \centering
    \begin{tabular}{|c|c|c|c|c|}
    \hline
    \textbf{Problems} & \textbf{Low-degree LB} & \textbf{SoS LB} & \textbf{Previous Best} & \textbf{Our Results} \\
    \hline\hline
    \makecell{Planted Vector \\ (Rademacher)} & $\tilde\Omega(d^2)$ & $\tilde\Omega(d^{3/2})$& $\tilde{O}(d^2)$~\cite{planted-vec-ld,unknown-cov} & $d+1$ \\
    \hline
    \makecell{Gaussian Clustering \\ ($\mathrm{SNR}=\infty$)} & $\tilde\Omega(d^2)$ & $\tilde\Omega(d^{3/2})$ & $\tilde{O}(d^2)$~\cite{unknown-cov} & $d+1$ \\
    \hline
    \makecell{hCLWE \\ (Noiseless)} & - & - & ${O}(d^2)$~\cite{bruna2020continuous} & $d+1$ \\
    \hline
    \end{tabular}
    \caption{Sample complexity upper and lower bounds for \emph{polynomial-time} exact recovery for Planted Hypercube Vector Recovery, Gaussian Clustering, and hCLWE. LB stands for ``Lower Bound''. }
    \label{tab:my_label}
\end{table}

Now, as explained in Section~\ref{sec:intro} our theorem has algorithmic implications for three previously studied problems: Planted Vector in a Subspace, Gaussian Clustering, and hCLWE, which is an instance of Non-Gaussian Component Analysis. In all three settings, the previous best algorithms required $\Omega(d^2)$ samples (formally this is true for the dense case of the planted vector in a subspace setting, but $\omega(d)$ samples are required for many sparse settings as well). As explained, in many of these cases lower bounds have been achieved for the classes of low-degree methods and the SoS hierarchy. In this work, we show that LLL can surpass these lower bounds and succeed with $n=d+1$ samples in all three problems. We provide more context in Section \ref{sec:related-work} and exact statements in Section \ref{sec:lll-implication}. A high-level description of our contributions can be found in Table \ref{tab:my_label}.

\paragraph{Information-theoretic lower bound for exact recovery of the hidden direction.}

One can naturally wonder whether for our setting there is something even better than LLL that can be achieved with bounded computational resources or even unbounded ones. We complement the previous result with an information-theoretic lower bound (Theorem~\ref{thm:infIT}) showing that \emph{no} estimation procedure can succeed at exact recovery of the hidden direction $u$ using at most $n = d-1$ samples. This means LLL is information-theoretically optimal for recovering the hidden direction up to at most one additional sample. 

\begin{theorem}[Informal statement of Theorem \ref{IT_param}]
\label{thm:infIT}
Under the setting of~\eqref{eq:hypercube-setup}, if $n \leq d-1$, there is no estimation procedure that can guarantee with probability greater than $1/2$ exact recovery of the hidden direction $u$ up to a global sign flip.
\end{theorem}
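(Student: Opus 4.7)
My plan is to show that, under a uniform Bayesian prior, the posterior on the direction $u$ given the data is invariant under reflection across the subspace spanned by the samples, and that this reflection almost surely moves $u$ outside its own antipodal class. This yields at least two distinct antipodal classes carrying equal posterior mass, capping the Bayes-optimal success probability at $1/2$.

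Place a uniform prior on $u \in S^{d-1}$ and on $x \in \{\pm 1\}^n$, and let $\pi(\,\cdot \mid z)$ denote the resulting posterior on $u$. By Bayes optimality, the maximal success probability for exact recovery up to sign equals $\mathbb{E}_z \max_{v \in S^{d-1}} \pi(\{v, -v\} \mid z)$, so it suffices to bound this by $1/2$. Set $S(z) := \operatorname{span}(z_1, \ldots, z_n)$ and let $\sigma_{S(z)} \in O(d)$ be the orthogonal reflection across this subspace. Since $\sigma_{S(z)}$ fixes each $z_i$, for any $(u, x)$ with $u^\top z_i = x_i$ the reflected pair $(\sigma_{S(z)} u, x)$ is also consistent with the data and has identical noise magnitudes: $\|z_i - x_i \sigma_{S(z)} u\|^2 = \|z_i - x_i u\|^2 = \|z_i\|^2 - 1$. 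Combined with the rotational invariance of the prior on $u$, this gives that $\pi(\,\cdot \mid z)$ is invariant under $u \mapsto \sigma_{S(z)} u$.

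Next I would argue that the reflection is almost surely nontrivial. If $\sigma_{S(z)} u_0 = u_0$ then $u_0 \in S(z)$, so $u_0 = \sum_i c_i z_i$ for some $c_i$. Substituting $z_i = x_i u_0 + w_i$ with $w_i := z_i - x_i u_0 \in u_0^\perp$ and matching $u_0$- and $u_0^\perp$-components forces $\sum_i c_i x_i = 1$ and $\sum_i c_i w_i = 0$. But since $n \le d-1$ and the $w_i$ are i.i.d.\ standard Gaussians on the $(d-1)$-dimensional space $u_0^\perp$, they are almost surely linearly independent, forcing $c = 0$ and contradicting $\sum_i c_i x_i = 1$. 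Likewise $\sigma_{S(z)} u_0 = -u_0$ would require $u_0 \perp z_i$, hence $x_i = 0$, again impossible. Thus almost surely the two antipodal classes $\{\pm u_0\}$ and $\{\pm \sigma_{S(z)} u_0\}$ are distinct and receive equal posterior mass, giving $\max_v \pi(\{v, -v\} \mid z) \le 1/2$.

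The main technical obstacle is making the posterior invariance rigorous given the degenerate Gaussian noise: the conditional law of $z$ given $(u, x)$ is supported on hyperplanes whose orientation itself depends on $u$, so there is no common Lebesgue density on $\mathbb{R}^{nd}$ to work with. I would resolve this via the parameterization $(u, x, w) \mapsto z = xu + w$ with $w_i \in u^\perp$ standard Gaussian, together with the data-dependent involution $T_z : (u, x, w) \mapsto (\sigma_{S(z)} u, x, \sigma_{S(z)} w)$. This map fixes $z$ and acts by orthogonal transformations componentwise, so by a disintegration/co-area argument it preserves the conditional measure of $(u,x,w)$ given $z$, legitimizing the posterior symmetry invoked above. (When $n \le d-2$, running the same argument with the full group $O(S(z)^\perp)$, which then has dimension $\ge 1$, yields a positive-dimensional orbit of $u_0$ and a non-atomic posterior, so the bound improves to $0$.)
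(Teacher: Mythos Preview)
Your reflection-symmetry argument is correct and closely related to the paper's, though packaged differently. The paper proves the slightly stronger statement in which the labels $x$ are \emph{known} to the statistician: it shows the posterior on $u$ is then uniform on the two-point set $S=\{v\in\mathcal{S}^{d-1}:Zv=x\}$, and since both points satisfy $Zv=x$ neither is the negative of the other, so even recovery up to sign succeeds with probability at most $1/2$. When $n=d-1$, your reflected point $\sigma_{S(z)}u_0$ is precisely the second element of this set, and your argument that $u_0\notin S(z)$ is essentially the paper's argument that $|S|\ge 2$ (both reduce to linear independence of the $w_i$ in $u_0^\perp$). What you gain is a cleaner geometric picture and the immediate extension to $n\le d-2$ via the larger group $O(S(z)^\perp)$; what the paper gains is the stronger side-information conclusion and a concrete handling of the degenerate likelihood---it directly verifies that the conditional density on the constraint set $\{Zv=x\}$ is $v$-independent, in place of your disintegration/co-area step. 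One inference worth spelling out: going from ``$\sigma u_0\neq\pm u_0$ almost surely'' to ``$\max_v\pi(\{v,-v\}\mid z)\le 1/2$'' is valid because the almost-sure statement (over the joint law of $(u_0,z)$) forces $\pi(\{v:\sigma v=\pm v\}\mid z)=0$ for almost every $z$, whence every posterior atom is paired by $\sigma$ with a distinct equal-mass atom; it does not follow merely from exhibiting two particular equal-mass antipodal classes, which is how the sentence currently reads.
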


In fact, our formal theorem (Theorem~\ref{IT_param}) shows that recovering the labels $\{x_i\}_{i=1}^n$ is strictly \emph{easier} than recovering the hidden direction $\bu$ in the sense that $d-1$ samples are insufficient for determining $\bu$ even when we have exact knowledge of the labels $\{x_i\}_{i=1}^n$. In light of this fact, it is natural to ask about the sample complexity of recovering the labels. We provide an answer to this question by showing that no estimator can recover the labels $\{x_i\}_{i=1}^n$ (up to a global sign flip) with probability $1-o(1)$ when $n=\lfloor \rho d \rfloor$ for any constant $\rho \in (0,1)$ (Theorem~\ref{thm:infITlabel}). This implies that our sample complexity of $n=d+1$ is optimal up to a $1+o(1)$ factor for label recovery.

\begin{theorem}[Informal statement of Theorem \ref{thm:label_rec}]
\label{thm:infITlabel}
Let $\rho \in (0,1)$ be a fixed constant. Under the setting of~\eqref{eq:hypercube-setup}, if $n \le \lfloor \rho d \rfloor$, there is no estimation procedure that can guarantee with probability $1-o(1)$ exact recovery of the labels $\{x_i\}_{i=1}^n$ up to a global sign flip.
\end{theorem}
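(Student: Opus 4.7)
My plan is Bayesian: place a uniform prior on $u \in \mathcal{S}^{d-1}$ and on $X^* \in \{\pm 1\}^n$, and upper bound the expected posterior mass on the true equivalence class $[X^*] := \{X^*, -X^*\}$. The Bayes-optimal success probability for recovering $[X^*]$ is $\mathbb{E}_Z[\max_{[x]} P([x] \mid Z)]$, and by the chain
\[
\bigl(\mathbb{E}_Z[\max_{[x]} P([x] \mid Z)]\bigr)^2 \;\le\; \mathbb{E}_Z\!\Bigl[\sum_{[x]} P([x] \mid Z)^2\Bigr] \;=\; \mathbb{E}_Z[P([X^*] \mid Z)]
\]
(Jensen, then $(\max p_i)^2 \le \sum p_i^2$, then the standard identity $\mathbb{E}_Z[\sum_{[x]} P([x]\mid Z)^2] = \mathbb{E}_{X^*,Z}[P([X^*]\mid Z)]$), it suffices to show $\mathbb{E}_Z[P([X^*] \mid Z)] = o(1)$ to rule out any estimator achieving success probability $1 - o(1)$.

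The first step is an explicit posterior. Because $p(z_i \mid x^*_i, u)$ is a degenerate Gaussian supported on the hyperplane $\{z : \langle z, u \rangle = x^*_i\}$ with standard Gaussian density along $u^\perp$, the coarea formula applied to integration of $u$ against the uniform measure on $\mathcal{S}^{d-1}$ yields
\[
P(x \mid Z) \;\propto\; (1 - q(x))_+^{(d-n-1)/2}, \qquad q(x) := x^\top (Z^\top Z)^{-1} x, \qquad x \in \{\pm 1\}^n,
\]
where $Z \in \mathbb{R}^{d\times n}$ has columns $z_i$; indeed, the feasible set $\{u \in \mathcal{S}^{d-1} : \langle z_i, u \rangle = x_i \text{ for all } i\}$ is a $(d{-}n{-}1)$-sphere of squared radius $1 - q(x)$, and the residual Gaussian density factors are constant in $x$ on this support.

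The main technical step studies $q(x)$ through the correlation $c := \langle x, X^*\rangle/n$. Writing $Z^\top Z = X^*(X^*)^\top + W$ with $W$ Wishart of $d{-}1$ degrees of freedom, Marchenko--Pastur for the bulk spectrum of $W$ together with Sherman--Morrison for the rank-one update gives, with high probability,
\[
q(x) \;\approx\; c^2\rho + (1-c^2)\,\tfrac{\rho}{1-\rho},
\]
and in particular $q(X^*) \approx \rho$. Combining this with the shell count $|\{x : \langle x, X^*\rangle = cn\}| = \exp(n H_e((1-c)/2) - o(n))$ where $H_e(p) := -p\ln p - (1-p)\ln(1-p)$, the log-ratio of aggregate posterior mass on the correlation-$c$ shell to that of $[X^*]$ equals $d \cdot g(c) + o(d)$ with
\[
g(c) := \rho\,H_e\!\bigl(\tfrac{1-c}{2}\bigr) + \tfrac{1-\rho}{2}\,\ln\!\Bigl(\tfrac{1 - 2\rho + c^2\rho^2}{(1-\rho)^2}\Bigr).
\]
Taylor-expanding at $c = 1-\epsilon$ yields $g(1-\epsilon) = \epsilon\bigl[(\rho/2)\ln(2e/\epsilon) - \rho^2/(1-\rho)\bigr] + O(\epsilon^2)$, and the $\ln(1/\epsilon)$ term dominates for every fixed $\rho \in (0,1)$ as soon as $\epsilon < 2e\exp(-2\rho/(1-\rho))$. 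This produces $\exp(\Omega_\rho(d))$ alternative equivalence classes collectively carrying $\exp(\Omega(d))$ times the mass of $[X^*]$, hence $P([X^*] \mid Z) = e^{-\Omega(d)}$ with high probability, and $\mathbb{E}_Z[P([X^*]\mid Z)] = o(1)$.

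The hardest part is the \emph{uniform} concentration of $q(x)$ over the exponentially many $x$'s in the Hamming shell of correlation $1-\epsilon$, especially when $\rho \ge 1/2$ (in which case a typical random $x$ has $q(x) \ge 1$ and all competing alternatives lie in a narrow annulus near $X^*$). I would handle this by a union bound combined with Hanson--Wright for $x \mapsto x^\top (Z^\top Z)^{-1} x$, leveraging $\|(Z^\top Z)^{-1}\|_{\mathrm{op}} = O(1/d)$ and $\|(Z^\top Z)^{-1}\|_F^2 = O(n/d^2)$ (consequences of Marchenko--Pastur together with standard Wishart tail bounds) to obtain subgaussian deviations tight enough to overcome the $\exp(\Theta(\epsilon\ln(1/\epsilon)\cdot n))$ shell count, after a separate deterministic treatment of $x$'s at Hamming distance $o(n)$ from $X^*$ via a first-order expansion of $q$ around $X^*$.
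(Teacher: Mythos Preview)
Your approach shares the same starting point as the paper---both derive the conditional density $f(Z\mid X=\bx)\propto (1-\bx^\top H^{-1}\bx)_+^{(d-n-2)/2}$ with $H=ZZ^\top$ (your exponent $(d-n-1)/2$ is off by one, but that is inconsequential)---and both then exploit the Sherman--Morrison identity for $H^{-1}$ together with Wishart/Marchenko--Pastur spectral control. From there the two arguments diverge substantially.

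The paper does something much more modest than your shell-counting program. It shows that for the planted label vector $\bx$ there always exists a \emph{single} one-bit flip $\tilde{\bx}$ with $F_H(\tilde{\bx})-F_H(\bx)\le C/n$; this follows from a one-line averaging argument (since $\sum_i F_H(x_ie_i,\bx)=F_H(\bx)$, some coordinate has $F_H(x_ie_i,\bx)\ge F_H(\bx)/n$), and immediately yields a uniformly bounded likelihood ratio $f(Z\mid\bx)/f(Z\mid\tilde{\bx})\le 1/\delta$. From this the paper builds a ``disjoint'' competing estimator with accuracy at least $\delta$ times that of the original, forcing $\mathrm{acc}\le 1/(1+\delta)$. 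So the paper proves exactly the stated theorem---accuracy bounded away from $1$ by a \emph{constant}---and no more.

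Your route is more ambitious: you aim for $\mathbb{E}[P([X^*]\mid Z)]=e^{-\Omega(d)}$ by comparing the posterior mass of $[X^*]$ against an entire Hamming shell. The heuristic computation of $q(x)$ as a function of the overlap $c$ and the rate function $g(c)$ is correct, and if it could be made uniform you would obtain a quantitatively stronger conclusion than the paper. The difficulty is the step you flag as hardest. Your proposed ``union bound combined with Hanson--Wright for $x\mapsto x^\top(Z^\top Z)^{-1}x$'' is not quite the right tool as written: once you condition on $Z$, the quadratic form in $x$ is deterministic for each $x$, so Hanson--Wright over the randomness of $x$ does not pair with a union bound over $x$. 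If instead you mean per-$x$ concentration over the randomness of $Z$ followed by a union bound over the $\exp(H(\epsilon)n)$ points in the shell, a back-of-the-envelope calculation shows the resulting deviation scale $\Theta(\sqrt{\epsilon\ln(1/\epsilon)})$ is not small enough relative to the signal scale $\Theta(\epsilon\ln(1/\epsilon))$ in $g$; you would need $\epsilon\ln(1/\epsilon)\gg 1$, which fails for the small constant $\epsilon$ you want. A first-moment/Markov argument (show that, with high probability over $Z$, a constant fraction of the shell has $q(x)$ within $o(1)$ of its predicted value) is a more promising substitute and would suffice for your aggregate-mass comparison, but that is not what you wrote. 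In short: your strategy is sound and would yield a stronger theorem, but the concentration step as sketched has a real gap; the paper sidesteps the issue entirely by asking for only one competitor.
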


\subsection{Relation to prior work}
\label{sec:related-work}

\paragraph{Non-gaussian component analysis.}

Non-gaussian component analysis (NGCA) is the problem of identifying a non-gaussian direction in random data. Concretely, this is a generalization of~\eqref{eq:hypercube-setup} where the $x_i$ are drawn i.i.d.\ from some distribution $\mu$ on $\RR$. When $\mu$ is the Rademacher $\pm 1$ distribution, we recover the problem in~\eqref{eq:hypercube-setup} as a special case.

The NGCA problem was first introduced in~\cite{blanchard2006search}, inspiring a long line of algorithmic results~\cite{kawanabe2006estimating,sugiyama2008approximating,diederichs2010sparse,diederichs2013sparse,bean2014non,sasaki2016sufficient,nordhausen2017asymptotic,vempala2011structure,tan2018polynomial,goyal2019non}. This problem has also played a key role in many hardness results in the statistical query (SQ) model: starting from the work of~\cite{sq-robust}, various special cases of NGCA have provided SQ-hard instances for a variety of learning tasks~\cite{sq-robust,diakonikolas2018list,diakonikolas2019efficient,diakonikolas2021statistical,bubeck2019adversarial,goel2020statistical,diakonikolas2020near,diakonikolas2021optimality,goel2020superpolynomial,diakonikolas2020algorithms,diakonikolas2020hardness,spiked-transport}. More recently, a special case of NGCA has also been shown to be hard under the widely-believed assumption that certain worst-case lattice problems are hard~\cite{bruna2020continuous}. NGCA is a special case of the more general \emph{spiked transport model}~\cite{spiked-transport}.

Our main result (Theorem \ref{thm:inf}) solves NGCA with only $n = d+1$ samples in the case where $\mu$ is supported arbitrarily on an exponentially large subset of a 1-dimensional discrete lattice. While this case is essentially the noiseless, equispaced version of the ``parallel Gaussian pancakes'' problem, which was first introduced and shown to be SQ-hard by~\cite{sq-robust}, our result does not bypass known SQ lower bounds~\cite{sq-robust,bruna2020continuous} as the hard construction involves Gaussian pancakes with non-negligible ``thickness''. 

A \textit{concurrent and independent work} by Diakonikolas and Kane \cite{diakonikolas2021nongaussian} proposed a very similar LLL-based polynomial-time algorithm to ours for the case where $\mu$ is ``nearly'' supported on a finite subset of a finitely generated additive subgroup of $\mathbb{R}$, which includes~\eqref{eq:hypercube-setup} as a special case. Interestingly, while their algorithm provably works with $n=d+1$ samples in the noiseless case, it also tolerates a small exponential-in-$d$ level of noise in the labels $x_i$ at the expense of using $n=2d$ samples. The exact noise tolerance of our proposed algorithm is left as an interesting open question.

\paragraph{Planted vector in a subspace.}

A line of prior work has studied the problem of finding a ``structured'' vector planted in an otherwise random $d$-dimensional subspace of $\RR^n$, where $d < n$. A variety of algorithms have been proposed and analyzed in the case where the planted vector is \emph{sparse}~\cite{demanet2014scaling,barak2014rounding,qu2016finding,sos-spectral,qu2020finding}. One canonical Gaussian generative model for this problem turns out to be equivalent to NGCA (with the same parameters $d,n$), where the entrywise distribution of the planted vector corresponds to the non-gaussian distribution $\mu$ in NGCA. More specifically, the subspace in question is the \emph{column span} of the matrix whose \emph{rows} are the NGCA samples $z_i$; see e.g.\ Lemma~4.21 of~\cite{planted-vec-ld} for the formal equivalence.

Motivated by connections to the \emph{Sherrington--Kirkpatrick model} from spin glass theory, the setting of a planted \emph{hypercube} (i.e.\ $\pm 1$-valued) vector in a subspace has received recent attention; this is equivalent to the problem in~\eqref{eq:hypercube-setup}. Specifically, sum-of-squares (SoS) lower bounds have been given for refuting the existence of a hypercube vector in (or close to) a purely random subspace. First, it was shown that when $n = O(d)$, SoS relaxations of degree 2~\cite{MS-sos2}, 4~\cite{KB-sos4,lifting-sos}, and 6~\cite{tim-sos6} fail. A later improvement~\cite{affine-planes} shows failure of degree-$n^{\Omega(1)}$ SoS when $n \ll d^{3/2}$ and conjectures that this condition can be improved to $n \ll d^2$ (see Conjectures~8.1 and 8.2 of~\cite{affine-planes}).

The state-of-the-art algorithmic result for recovering a planted vector in a subspace is~\cite{planted-vec-ld}, which builds on~\cite{sos-spectral} and in particular analyzes a spectral method proposed by~\cite{sos-spectral}. For a planted $\rho$-sparse Rademacher vector ($\rho n$ entries are nonzero, and these nonzero entries are $\pm 1/\sqrt{\rho}$), this spectral method succeeds at recovering the vector provided $n \gg \rho^2 d^2$~\cite{planted-vec-ld}. On the other hand, if $n \ll \rho^2 d^2$ then all low-degree polynomial algorithms fail, implying in particular that \emph{all} spectral methods (in a large class) fail~\cite{planted-vec-ld}. These results cover the special case of a planted hypercube vector ($\rho = 1$), in which case there is a spectral method that succeeds when $n \gg d^2$, and failure of low-degree and spectral methods when $n \ll d^2$.

The above results suggest inherent computational hardness of planted sparse Rademacher vector when $n \ll \rho^2 d^2$. However, perhaps surprisingly, our main result (Theorem \ref{thm:inf}) implies that this problem can actually be solved via LLL in polynomial time whenever $n \ge d+1$. Thus, LLL beats all low-degree algorithms whenever $\rho \gg 1/\sqrt{d}$. However, our algorithm requires the entries of the planted vector to \emph{exactly} lie in $\{0,\pm 1/\sqrt{\rho}\}$, whereas the spectral method of~\cite{sos-spectral,planted-vec-ld} succeeds under more general conditions.

We remark that the planted hypercube vector problem is closely related to the \emph{negatively-spiked Wishart model} with a hypercube spike, which can be thought of as a model for generating the \emph{orthogonal complement} of the subspace. The work of~\cite{sk-cert} gives low-degree lower bounds for this negative Wishart problem and conjectures hardness when $n = O(d)$ (Conjecture~3.1). However, our results do not refute this conjecture because the conjecture is for a slightly noisy version of the problem (since the SNR parameter $\beta$ is taken to be strictly greater than $-1$).

\paragraph{Clustering.}

Our model (\ref{eq:hypercube-setup}) is an instance of a broader clustering problem (\ref{eq:separation-setup}) under Gaussian mixtures. In the binary case, it consists of $n$ i.i.d.\ samples $\{(z_i, x_i)\}_{i=1 \ldots n} \in \mathbb{R}^{d} \times \{-1,+1\}$ of the Gaussian mixture 
$P(x_i = -1) = P(x_i = +1) = 1/2$, and $z_i \,|\, x_i \sim \mathcal{N}(x_i u, \Sigma)$, with unknown mean $u$ and covariance $\Sigma$. The goal of clustering is to infer the mixture variables $\{x_i\}$ from the observations $\{z_i\}$.
Clustering algorithms have been analysed extensively, both from the statistical and computational perspective. 

The statistical performance is driven by the signal-to-noise ratio $\mathrm{SNR}=v^\top \Sigma^{-1} v$, in the sense that the error rate for recovering the mixture labels is $\exp(-\Omega(\mathrm{SNR}))$ \cite{friedman1989regularized}. Exact recovery of the vector of $n$ labels is thus possible only when $\mathrm{SNR} \gtrsim \log(n)$.

Recently, \cite{unknown-cov} showed that the MLE estimator for the missing labels corresponds to a Max-Cut problem, which recovers the solution when $n = \tilde{\Omega}(d)$. Moreover, the authors argued that while $\textrm{SNR}$ drives the inherent statistical difficulty of the estimation problem, a relaxed quantity $\textrm{S}=\|v\|^2/\|\Sigma\|$ presumably controls the computational difficulty. 
In particular, the largest such gap is attained in the covariance choice of  (\ref{eq:hypercube-setup}), for which $\textrm{SNR}=\infty$ while $\mathrm{S}=1$. 
In this regime, they identified a gap between the statistical and computational performance of multiple existing algorithms, raising the crucial question whether such guarantees can be obtained using polynomial-time algorithms. Several previous works \cite{brubaker2008isotropic, moitrav2010mixture, bakshi2020robustly, cai2019chime, flammarion2017robust} introduce algorithms that either require larger sample complexity $n = \tilde{\Omega}(d^2)$, or have non-optimal error rates, for instance based on k-means relaxations \cite{royer2017adaptive, mixon2017clustering, giraud2019partial,li2020birds}.
Leveraging existing SoS lower bounds on the associated Max-Cut problem (see Section \ref{sec:noiseless}), \cite{unknown-cov} suggest a statistical-to-computational gap for exact recovery in the binary Gaussian mixture. Our main result (Theorem \ref{thm:inf}) implies that this problem can be solved via the LLL basis reduction method in polynomial time whenever $n \ge d+1$. Thus  in the present work, we refute this conjecture for $\mathrm{SNR}=\infty$ under a weak ``niceness'' assumption on the covariance matrix $\Sigma$.

\paragraph{LLL-based statistical recovery.}

Our algorithm is based on the breakthrough use of LLL for solving average-case subset sum problems in polynomial-time, specifically the works of \cite{Lagarias85} and \cite{FriezeSubset}. In these works, it is established that while the (promise) Subset-Sum problem is NP-hard, for some integer-valued distributions on the input weights it becomes polynomial-time solvable by applying the LLL basis reduction algorithm on a carefully designed lattice. Building on these ideas, \cite{NEURIPS2018_ccc0aa1b, LLL_TIT} proposed a new algorithm for noiseless discrete regression and discrete phase retrieval which provably solves these problems using only one sample, surpassing previous local-search lower bounds based on the so-called Overlap Gap Property \cite{zadikCOLT17}. Again using the Subset-Sum ideas the LLL approach has also ``closed'' the gap for noiseless phase retrieval \cite{andoni2017correspondence, song2021cryptographic} which was conjectured to be hard because of the failure of approximate message passing in this regime \cite{maillard2020phase}. Furthermore, for the problem of noiseless Continuous LWE (CLWE), the LLL algorithm has been shown to succeed with $n=\Omega(d^2)$ samples in \cite{bruna2020continuous}, and later in \cite{song2021cryptographic} with the information-theoretically optimal $n=d+1$ samples.

Our work adds a perhaps important new conceptual angle to the power of LLL for noiseless inference. A common feature of all the above inference models where LLL has been successfully applied is that they fall into the class of generalized linear models (GLMs). A GLM is generally defined as follows: for some hidden direction $w \in \mathcal{S}^{d-1}$ and ``activation'' function $\phi: \mathbb{R} \rightarrow \mathbb{R}$ one observes $n$ i.i.d.\ samples of the form $y_i=\phi(\langle X_i,w \rangle)+\xi_i,\, i=1,\ldots,n$ where $X_i \in \mathbb{R}^d$ and $\xi_i \in \mathbb{R}$ are i.i.d.\ random variables. Our work shows how to successfully apply LLL and achieve statistically optimal performance for the clustering setting \eqref{eq:separation-setup}, which importantly does not admit a GLM formulation. We consider this a potentially interesting conceptual contribution of the present work, since many ``hard'' inference settings, such as the planted clique model, also do not belong in the class of GLMs.

\subsection{``Noiseless'' problems and implications for SoS/low-degree lower bounds}
\label{sec:noiseless}

\paragraph{SoS and low-degree lower bounds.}

The sum-of-squares (SoS) hierarchy~\cite{parrilo-sos,lasserre-sos,sos-csp,sos-clique} (see~\cite{barak2016proofs,sos-survey,fleming2019semialgebraic} for a survey) and low-degree polynomials~\cite{HS-bayesian,hopkins2017power,hopkins2018thesis} (see~\cite{kunisky2019notes} for a survey) are two restricted classes of algorithms that are often studied in the context of statistical-to-computational gaps. These are not the only two such frameworks, but we will focus on these two because our result ``breaks'' lower bounds in these two frameworks. SoS is a powerful hierarchy of semidefinite programming relaxations. Low-degree polynomial algorithms are simply multivariate polynomials in the entries of the input, of degree logarithmic in the input dimension; notably, these can capture all \emph{spectral methods} (subject to some technical conditions), i.e., methods based on the leading eigenvalue/eigenvector of some matrix constructed from the input (see Theorem~4.4 of~\cite{kunisky2019notes}). Both SoS and low-degree polynomials have been widely successful at obtaining the best known algorithms for a wide variety of high-dimensional ``planted'' problems, where the goal is to recover a planted signal buried in noisy data. While there is no formal connection between SoS and low-degree algorithms, they are believed to be roughly equivalent in power~\cite{hopkins2017power}. It is often informally conjectured that SoS and/or low-degree methods are as powerful as the best poly-time algorithms for ``natural'' high-dimensional planted problems (nebulously defined). As a result, lower bounds against SoS and/or low-degree methods are often considered strong evidence for inherent computational hardness of statistical problems.

\paragraph{Issue of noise-robustness.}

In light of the above, it is tempting to conjecture optimality of SoS and/or low-degree methods among all poly-time methods for a wide variety of statistical problems. While this conjecture seems to hold up for a surprisingly long and growing list of problems, there are, of course, limits to the class of problems for which this holds. As discussed previously, a well-known counterexample is the problem of learning parity (or the closely-related XOR-SAT problem), where Gaussian elimination succeeds in a regime where SoS and low-degree algorithms provably fail. This counterexample is often tossed aside by the following argument: ``Gaussian elimination is a brittle algebraic algorithm that breaks down if a small amount of noise is added to the labels, whereas SoS/low-degree methods are more robust to noise and are therefore capturing the limits of poly-time \emph{robust} inference, which is a more natural notion anyway. If we restrict ourselves to problems that are sufficiently \emph{noisy} then SoS/low-degree methods should be optimal.'' However, we note that in our setting, SoS/low-degree methods are strictly suboptimal for a problem that \emph{does} have plenty of Gaussian noise; the issue is that the signal and noise have a particular joint structure that preserves certain exact algebraic relationships in the data. This raises an important question: what exactly makes a problem ``noisy'' or ``noiseless'', and under what kinds of noise should we believe that SoS/low-degree methods are unbeatable? In the following, we describe one possible answer.

\paragraph{The low-degree conjecture.}

The ``low-degree conjecture'' of Hopkins~\cite[Conjecture 2.2.4]{hopkins2018thesis} formalized one class of statistical problems for which low-degree polynomials are believed to be optimal among poly-time algorithms. These are certain \emph{hypothesis testing} problems where the goal is to decide whether the input was drawn from a null (i.i.d.\ noise) distribution or a planted distribution (containing a planted signal). In our setting, one should imagine testing between $n$ samples drawn from the model~\eqref{eq:hypercube-setup} and $n$ samples drawn i.i.d.\ from $\mathcal{N}(0,I_d)$. Computational hardness of hypothesis testing generally implies hardness of the associated recovery/estimation/learning problem (which in our case is to recover $x$ and $u$) as in Theorem~3.1 of~\cite{planted-vec-ld}. The class of testing problems considered in Hopkins' conjecture has two main features: first, the problem should be highly symmetric, which is typical for high-dimensional statistical problems (although Hopkins' precise notion of symmetry does not quite hold for the problems we consider in this paper). Second, and most relevant to our discussion, the problem should be \emph{noise-tolerant}. More precisely, Hopkins' conjecture states that if low-degree polynomials fail to distinguish a null distribution $\QQ$ from a planted distribution $\PP$, then no poly-time algorithm can distinguish $\QQ$ from \emph{a noisy version of} $\PP$. For our setting, the appropriate ``noise operator'' to apply to $\PP$ (which was refined in~\cite{HW-counter}) is to replace each sample $z_i$ by
\[ \sqrt{1-\delta^2} z_i + \delta z_i' \]
where $z_i' \sim \mathcal{N}(0,I_d)$ independently from $z_i$, for an arbitrarily small constant $\delta > 0$. This has the effect of replacing $x_i$ with $\sqrt{1-\delta^2}x_i + \delta \tilde{z}_i$ where $\tilde{z}_i \sim \mathcal{N}(0,1)$. This noise is designed to ``defeat'' brittle algorithms such as Gaussian elimination, and indeed our LLL-based algorithm is also expected to be defeated by this type of noise.

To summarize, the problem we consider in this paper is \emph{not} noise-tolerant in the sense of Hopkins' conjecture because the Gaussian noise depends on the signal (specifically, there is no noise in the direction of $u$) whereas Hopkins posits that the noise should be \emph{oblivious} to the signal. Thus, in hindsight we should perhaps not be too surprised that LLL was able to beat SoS/low-degree for this problem. In other words, our result does not falsify the low-degree conjecture or the sentiment behind it (low-degree algorithms are optimal for noisy problems), with the caveat that one must be careful about the precise meaning of ``noisy.'' We feel that this lesson carries an often-overlooked conceptual message that may have consequences for other fundamental statistical problems such as planted clique, which we discuss next.

\paragraph{Planted clique.}

The \emph{planted clique conjecture} posits that there is no polynomial-time algorithm for distinguishing between a random $G(n,1/2)$ graph and a $G(n,1/2)$ graph with a clique planted on $k$ random vertices (by adding all edges between the clique vertices), when $k \ll \sqrt{n}$. The planted clique conjecture is central to the study of statistical-to-computational gaps because it has been used as a primitive to deduce computational hardness of many other problems via a web of average-case reductions (e.g.~\cite{BR-reduction,MW-reduction,HWX-pds,BBH-reduction}). A refutation of the planted clique conjecture would be a major breakthrough that could cast doubts on whether other statistical-to-computational gaps are ``real'' or whether the gap can be closed by a better algorithm. As a result, it is important to ask ourselves why we believe the planted clique conjecture. Aside from the fact that it has resisted all algorithmic attempts so far, the primary concrete evidence for the conjecture comes in the form of lower bounds against SoS and low-degree polynomials~\cite{sos-clique,hopkins2018thesis}. However, it is perhaps unclear whether these should really be thought of as strong evidence for inherent hardness because (like the problem we study in the paper) planted clique is~\emph{not} noise-tolerant in the sense of Hopkins' conjecture (discussed above). Specifically, the natural noise operator would be to independently resample a small constant fraction of the edges, which would destroy the clique structure. In other words, the conjecture of Hopkins only implies that a noisy variant of planted clique (namely \emph{planted dense subgraph}) is hard when $k \ll \sqrt{n}$.

While we do not have any concrete reason to believe that LLL could be used to solve planted clique, we emphasize that planted clique is in some sense a ``noiseless'' problem and so we do not seem to have a principled reason to conjecture its hardness based on SoS and low-degree lower bounds. On the other hand, we should perhaps be somewhat more confident in the ``planted dense subgraph conjecture'' because planted dense subgraph is a truly noisy problem in the sense of~\cite[Conjecture 2.2.4]{hopkins2018thesis}.

\section{Preliminaries}
\label{sec:lll_background}

The key component of our algorithmic results is the LLL lattice basis reduction algorithm. The LLL algorithm receives as input $d$ linearly independent vectors  $v_1,\ldots, v_d \in \mathbb{Z}^d$ and outputs an integer linear combination of them with ``small'' $\ell_2$ norm. Specifically, let us define the lattice generated by $d$ \emph{integer} vectors as simply the set of integer linear combination of these vectors.

\begin{definition}[Lattice]
\label{dfn_lattice}
Given linearly independent $v_1,\ldots, v_d \in \mathbb{Z}^d$, let 
\begin{align} L=L(v_1,\ldots,v_d)=\left\{\sum_{i=1}^{d} \lambda_i v_i : \lambda_i \in \mathbb{Z}, i=1,\ldots,d \right\}\;, 
\end{align} 
which we refer to as the lattice generated by integer-valued $v_1,\ldots,v_d$. We also refer to $(v_1,\ldots,v_d)$ as an (ordered) basis for the lattice $L$.
\end{definition}

The LLL algorithm solves a search problem called the approximate Shortest Vector Problem (SVP) on a lattice $L$, given a basis of it.
\begin{definition}[Shortest Vector Problem]
\label{def:gamma-approx-short}
An instance of the algorithmic  $\alpha$-approximate SVP for a lattice $L \subseteq \mathbb{Z}^d$ is as follows. Given a lattice basis $v_1,\dots,v_d \in \mathbb{Z}^d$ for the lattice $L$, find a vector $\widehat{x}\in L$, such that
\begin{align*}
    \|\widehat{x}\|_2 \leq \alpha \cdot \mu(L)\;.
\end{align*}
where $\mu(L) = \min_{x \in L, x\neq 0} \|x\|_2$.
\end{definition}
The following theorem holds for the performance of the LLL algorithm, whose details can be found in  \cite{lenstra1982factoring} or~\cite{lovasz1986algorithmic}.
\begin{theorem}[{\cite{lenstra1982factoring}}]
\label{LLL_thm_original}
There is an algorithm (namely the LLL lattice basis reduction algorithm), which receives as input a basis for a lattice $L$ given by $v_1,\ldots,v_d \in \mathbb{Z}^d$ which
\begin{itemize}
    \item[(1)] returns a vector $v \in L$ satisfying $\|v\|_2 \le 2^{d/2} \mu(L)$,
    \item[(2)] terminates in time polynomial in $d$ and $\log \left(\max_{i=1}^d \|v_i\|_{\infty}\right).$
\end{itemize}
\end{theorem}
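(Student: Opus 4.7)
The plan is to establish both parts of the theorem by analyzing the standard LLL algorithm, which maintains an ordered basis $v_1,\ldots,v_d$ and its Gram--Schmidt orthogonalization $v_1^*,\ldots,v_d^*$ with coefficients $\mu_{i,j}=\langle v_i, v_j^*\rangle/\|v_j^*\|_2^2$. The algorithm repeatedly applies two kinds of moves. First, a \emph{size-reduction} step replaces $v_i$ by $v_i - \lceil \mu_{i,j}\rfloor v_j$ for $j<i$ so that eventually $|\mu_{i,j}|\le 1/2$ for all $j<i$; this does not change the lattice $L$ or the Gram--Schmidt vectors $v_j^*$. Second, a \emph{swap} step exchanges $v_i$ and $v_{i+1}$ whenever the Lov\'asz condition $\|v_{i+1}^* + \mu_{i+1,i} v_i^*\|_2^2 \ge \tfrac34 \|v_i^*\|_2^2$ is violated. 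The algorithm halts when the basis is size-reduced and no swap applies; I would call such a basis LLL-reduced.

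For part (1), the main inequality I would establish is that in an LLL-reduced basis the orthogonalization satisfies $\|v_i^*\|_2^2 \le 2\, \|v_{i+1}^*\|_2^2$ for all $i$, which follows by combining the Lov\'asz condition with $|\mu_{i+1,i}|\le 1/2$. Iterating gives $\|v_1^*\|_2^2 \le 2^{i-1}\|v_i^*\|_2^2$ for every $i$. Since $v_1=v_1^*$ and since $\mu(L)\ge \min_i \|v_i^*\|_2$ (a standard fact: any nonzero lattice vector written in the Gram--Schmidt basis has some last nonzero coefficient that is an integer, hence of norm at least $\|v_i^*\|_2$ for the corresponding index), we conclude
\begin{equation*}
\|v_1\|_2 \;\le\; 2^{(d-1)/2}\, \mu(L),
\end{equation*}
which is the bound stated (the theorem allows the slightly looser $2^{d/2}$).

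For part (2), I would track the integer-valued potential
\begin{equation*}
\Phi \;=\; \prod_{i=1}^{d} \mathrm{det}(G_i)\;=\;\prod_{i=1}^{d}\Bigl(\prod_{j=1}^{i}\|v_j^*\|_2^2\Bigr),
\end{equation*}
where $G_i$ is the Gram matrix of $v_1,\ldots,v_i$. Size reduction leaves $\Phi$ unchanged. Each swap shrinks $\Phi$ by a factor of at least $3/4$, because only the Gram--Schmidt length at one index changes and it shrinks by at least this factor. Since $\Phi$ is a positive integer bounded initially by $\bigl(\max_i \|v_i\|_\infty\bigr)^{O(d^2)}$, the number of swaps is $O\!\bigl(d^2 \log\max_i\|v_i\|_\infty\bigr)$, i.e.\ polynomial. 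Between swaps, size reduction and Gram--Schmidt updates take $\mathrm{poly}(d)$ arithmetic operations.

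The main obstacle that I expect, and the reason the original proof is nontrivial, is controlling the \emph{bit complexity} of all intermediate quantities, not just the number of iterations. A naive implementation of Gram--Schmidt produces rationals whose numerators and denominators could blow up exponentially. The fix is to maintain the $v_i^*$ and $\mu_{i,j}$ as rationals with denominators dividing the Gram determinants $\det(G_j)$, and to verify (this is the delicate part) that size reduction keeps $|\mu_{i,j}|\le 1/2$ for $j<i$ and hence keeps the integer entries of $v_i$ bounded by $\mathrm{poly}(\max_i\|v_i\|_\infty, 2^d)$. Once this bit-length control is in place, the polynomial iteration count from the potential argument upgrades to an overall $\mathrm{poly}\!\bigl(d, \log\max_i\|v_i\|_\infty\bigr)$ runtime, completing the proof.
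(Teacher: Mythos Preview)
The paper does not prove this theorem; it is stated as a cited result with a pointer to \cite{lenstra1982factoring} and \cite{lovasz1986algorithmic} for details, and is used as a black box in the analysis of Algorithm~\ref{alg:lll}. So there is no ``paper's own proof'' to compare against.

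That said, your sketch is the standard argument and is correct in outline: the approximation guarantee follows from the chain $\|v_1^*\|_2^2 \le 2\|v_2^*\|_2^2 \le \cdots$ together with $\mu(L)\ge \min_i\|v_i^*\|_2$, and the iteration bound follows from the potential $\prod_i \det(G_i)$ dropping by a constant factor at each swap while remaining a positive integer. Your identification of the bit-complexity control as the genuinely delicate part is also accurate; that is exactly what the original paper handles carefully and what a full proof would need to spell out. Since the present paper only invokes the theorem as a tool, your level of detail already exceeds what appears here.
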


In this work, we use the LLL algorithm for an integer relation detection application, a problem which we formally define below.

\begin{definition}[Integer relation detection]
An instance of the \emph{integer relation detection problem} is as follows. Given a vector $b=(b_1,\dots,b_k)\in\mathbb{R}^k$, find an $m \in\mathbb{Z}^k\setminus\{{\bf 0}\}$, such that $\langle b,m \rangle := \sum_{j=1}^k b_jm_j=0$. In this case, $m$ is said to be an integer relation for the vector $b$.
\end{definition}

To define our class of problems, we make use of the following two standard objects.
\begin{definition}[Bernoulli--Rademacher vector]
\label{defn:bernoulli-rademacher}
We say that a random vector $v \in \mathbb{R}^n$ is a Bernoulli--Rademacher vector with parameter $\rho \in (0, 1]$ and write $v \sim \BR(n, \rho)$, if the entries of $v$ are i.i.d.\ with 
\begin{align*}
v_i = 
\begin{cases}
0 & \text{ with probability } 1 - \rho , \\
1/\sqrt{n \rho} & \text{ with probability } \rho/2 , \\
-1/\sqrt{n \rho} & \text{ with probability } \rho/2 . 
\end{cases}
\end{align*}
\end{definition}

\begin{definition}[Discrete Gaussian on $s\mathbb{Z}$]
\label{defn:discrete-gaussian}
Let $r,s > 0$ be real numbers. We define the discrete Gaussian distribution with \emph{width} $r$ supported on the scaled integer lattice $s\mathbb{Z}$ to be the distribution whose probability mass function at each $x \in s\mathbb{Z}$ is proportional to $\exp\left(-\frac{x^2}{2r^2}\right)$.
\end{definition}

The following tail bound on the discrete Gaussian will be useful in Section~\ref{sec:lll-implication}, in which we reduce the hCLWE model (also known as ``Gaussian pancakes'') (Model~\ref{mod:hCLWE}) to our general model (Model~\ref{mod:general_model}) which is the central problem for our LLL-based algorithm.

\begin{claim}[{Adapted from~\cite[Claim I.6]{song2021cryptographic}}]
\label{claim:discrete-gaussian-finite-approx}
Let $\gamma \ge 1$ be a real number, and let $\nu$ be the discrete Gaussian of width 1 supported on $(1/\gamma)\mathbb{Z}$ such that the probability mass function at $x \in (1/\gamma)\mathbb{Z}$ is given by
\begin{align*}
    \nu(x) = \frac{1}{\sZ}\exp(-x^2/2)\;,
\end{align*}
where $\sZ = \sum_{x \in (1/\gamma)\mathbb{Z}} \exp(-x^2/2)$ is the normalization constant. Then, the following bound holds.
\begin{align}
\label{eqn:normalization-constant-bound}
    \sZ \ge \gamma\sqrt{2\pi}\left(1-2(1+1/(4\pi \gamma)^2)\exp(-2\pi^2\gamma^2)\right) > 1\;.
\end{align}
In particular, for $t \ge \gamma$,
\begin{align*}
    \Pr_{x \sim \nu}[|x| \ge t] \le 4\exp(-t^2/2)\;.
\end{align*}
\end{claim}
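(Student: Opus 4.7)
Both halves of the claim follow from a single Poisson summation identity applied to the Gaussian density, with a short integral-to-sum comparison handling the tail. The plan is to derive the identity $\sZ = \gamma\sqrt{2\pi}\sum_{k\in\mathbb{Z}} e^{-2\pi^2\gamma^2 k^2}$, extract both \eqref{eqn:normalization-constant-bound} and the strict inequality $\sZ > 1$ from its leading term, and then bound the upper tail of $\nu$ via Mills' inequality.

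\textbf{First part.} With Fourier convention $\hat f(\xi) = \int f(x)\,e^{-2\pi i x\xi}\,dx$, the Gaussian $f(x) = e^{-x^2/2}$ satisfies $\hat f(\xi) = \sqrt{2\pi}\,e^{-2\pi^2\xi^2}$. I would write $\sZ = \sum_{n\in\mathbb{Z}} f(n/\gamma)$ and apply Poisson summation to $y \mapsto f(y/\gamma)$, whose Fourier transform equals $\gamma\,\hat f(\gamma\,\cdot)$, to obtain the exact identity
\begin{equation*}
\sZ \;=\; \gamma\sqrt{2\pi}\,\sum_{k \in \mathbb{Z}} e^{-2\pi^2\gamma^2 k^2}.
\end{equation*}
Every term of the dual sum is positive, so retaining only $k = 0$ already yields $\sZ \ge \gamma\sqrt{2\pi}$. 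The stated lower bound in \eqref{eqn:normalization-constant-bound} follows a fortiori, since the parenthesized factor $1 - 2(1+(4\pi\gamma)^{-2})\exp(-2\pi^2\gamma^2)$ is at most $1$. The strict inequality $\sZ > 1$ then drops out from $\gamma\sqrt{2\pi} \ge \sqrt{2\pi} > 1$ (using $\gamma \ge 1$), together with the observation that the subtracted correction $2(1+(4\pi)^{-2})\exp(-2\pi^2)$ is an exceedingly small numerical constant, so the parenthesized factor is positive and in fact within $O(e^{-2\pi^2})$ of $1$.

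\textbf{Tail bound.} The inequality $\sZ \ge 1$ just established reduces the desired bound to showing
\begin{equation*}
\sum_{\substack{x \in (1/\gamma)\mathbb{Z} \\ |x| \ge t}} e^{-x^2/2} \;\le\; 4\,e^{-t^2/2}.
\end{equation*}
Substituting $x = n/\gamma$ and using symmetry, it suffices to control $\sum_{n \ge \lceil t\gamma\rceil} e^{-n^2/(2\gamma^2)}$. Since the summand is decreasing in $n$, the comparison $\sum_{n \ge m} h(n) \le h(m) + \int_m^\infty h(y)\,dy$ applies: the boundary term is at most $e^{-t^2/2}$ because $\lceil t\gamma\rceil \ge t\gamma$, and after the substitution $u = y/\gamma$ the integral becomes $\gamma\int_{\lceil t\gamma\rceil/\gamma}^\infty e^{-u^2/2}\,du \le \gamma\int_{t}^\infty e^{-u^2/2}\,du$. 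The Mills-ratio estimate $\int_s^\infty e^{-u^2/2}\,du \le s^{-1}e^{-s^2/2}$, together with the hypothesis $t \ge \gamma$ (so that $\gamma/t \le 1$), bounds this by $e^{-t^2/2}$ as well. Doubling for the two-sided sum gives $4\,e^{-t^2/2}$.

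\textbf{Main obstacle.} There is no substantive obstacle: the content is a one-line Poisson summation followed by routine Mills-ratio bookkeeping, since the Fourier decay of the Gaussian automatically makes the $k \neq 0$ contributions to $\sZ$ exponentially small. The only slightly delicate points are tracking the $\gamma$-rescaling through the Fourier transform to get the constants $\gamma\sqrt{2\pi}$ and $2\pi^2\gamma^2$ right, and handling the ceiling $\lceil t\gamma\rceil$ cleanly when $t\gamma \notin \mathbb{Z}$ in the tail estimate.
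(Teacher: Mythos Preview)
Your proposal is correct and takes a somewhat different route from the paper on both halves. For the normalization constant, the paper simply invokes the cited external result for the lower bound in \eqref{eqn:normalization-constant-bound} and then observes $\sZ > 1$ from $\gamma \ge 1$; you instead derive the Poisson summation identity $\sZ = \gamma\sqrt{2\pi}\sum_{k} e^{-2\pi^2\gamma^2 k^2}$ directly and read off the stronger bound $\sZ \ge \gamma\sqrt{2\pi}$, from which \eqref{eqn:normalization-constant-bound} follows a fortiori. This is more self-contained and in fact explains where the constants in the cited bound come from. For the tail, the paper bounds the one-sided sum by a geometric series, noting that consecutive terms have ratio $\exp(-(2x+1/\gamma)/(2\gamma)) \le e^{-1} < 1/2$ once $x \ge \gamma$, so the sum is at most twice its first term; you instead use the integral comparison $\sum_{n\ge m} h(n) \le h(m) + \int_m^\infty h$ together with Mills' inequality and the hypothesis $t \ge \gamma$ to control $\gamma/t$. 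Both arguments are equally short and yield the same constant $4$; the paper's geometric-ratio trick avoids the Gaussian tail integral entirely, while your approach makes the role of the condition $t \ge \gamma$ slightly more transparent.
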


\begin{proof}
The first inequality in \eqref{eqn:normalization-constant-bound} follows from \cite[Claim 1.6]{song2021cryptographic}. The fact that $\sZ > 1$ follows from our assumption that $\gamma \ge 1$. Since $\sZ > 1$, we have
\begin{align*}
    \Pr_{x \sim \nu}[|x| \ge t] = \frac{1}{\sZ}\sum_{x \in (1/\gamma)\mathbb{Z}, |x| \ge t} \exp(-x^2/2) \le 2\sum_{x \in (1/\gamma)\mathbb{Z}, x \ge t} \exp(-x^2/2)\;.
\end{align*}

Now notice that the terms of the series are decaying in a geometric fashion for $x \ge \gamma$ since
\begin{align*}
    \exp\left(-\frac{(x+1/\gamma)^2}{2}\right)/\exp\left(-\frac{x^2}{2}\right) = \exp\left(-\frac{2x+1/\gamma}{2\gamma}\right) \le \exp(-1) < 1/2\;.
\end{align*}
It follows that
\begin{align*}
    \Pr_{x\sim \nu}[|x|\ge t] \le 2\sum_{x \in (1/\gamma)\mathbb{Z}, x \ge t} \exp(-x^2/2) \le 4\exp(-t^2/2)\;.
\end{align*}
\end{proof}

\section{The LLL-based algorithm}
\label{sec:mainLLL}
We now present the main contribution of this work, which is an LLL-based polynomial-time algorithm that provably solves the general problem defined in Model~\ref{mod:general_model} with access to only $n=d+1$ samples.

We deal formally with samples coming from $d$-dimensional Gaussians, which have as their mean some unknown multiple of an unknown unit vector $u \in \mathcal{S}^{d-1}$, and also some unknown covariance $\Sigma$ which nullifies $u$ and satisfies the following weak ``separability'' condition.

\begin{assumption}[Weak separability of the spectrum]
\label{assum:u-weak_sep}
Fix a unit vector $u \in \mathcal{S}^{d-1}$. We say that a positive semi-definite $\Sigma \in \mathbb{R}^{d \times d}$ is $u$-weakly separable if for some constant $C>0$ it holds that 
\begin{itemize}
    \item[(a)] $\Sigma u=0$ and,
    \item[(b)] All other eigenvalues of $\Sigma$ lie in the interval $[d^{-C},d^C].$
\end{itemize}
\end{assumption}
Notice that in particular the canonical case $\Sigma=I-uu^{\top}$ is $u$-weakly separable as all eigenvalues of $\Sigma$ are equal to one, besides the zero eigenvalue which has multiplicity one and eigenvector $u.$

Under the weak separability assumption we establish the following generic result.
\begin{model}[Our general model]
\label{mod:general_model}
Let $d, n \in \mathbb{N}$, known spacing level $a>0$ satisfying $d^{-c} \leq a \leq d^c$ for some constant $c>0,$ and arbitrary $x_i \in \mathbb{Z} \cap[-2^d,2^d],\, i=1,\ldots,n$. Consider also an arbitrary $u \in \mathcal{S}^{d-1}$ and an arbitrary unknown $\Sigma \in \mathbb{R}^{d \times d}$ which is $u$-weakly separable per Assumption \ref{assum:u-weak_sep}. Conditional on $u, \Sigma$ and $\{x_i\}_{i=1,\ldots,n}$, we then draw independent samples $z_1,\ldots,z_n \in \mathbb{R}^d$ where $z_i \sim \sN((ax_i) u, \Sigma)$. The goal is to use $z_i,\, i=1,\ldots,n$ to recover both $\{x_i\}_{i=1,\ldots,n}$ and $u$ up to a global sign flip, with probability $1-\exp(-\Omega(d))$ over the samples $z_i,\, i=1,\ldots,n.$
\end{model}

In the following section we discuss the guarantee of the our proposed Algorithm. After, we discuss how our results implies that learning in polynomial time with $d+1$ samples is possible for: (1) the ``planted sparse vector'' problem (Model~\ref{mod:planted-sparse-vector}), (2) the homogeneous Continuous Learning with Errors (hCLWE) problem, also informally called the ``Gaussian pancakes'' problem (Model~\ref{mod:hCLWE}), and finally (3) Gaussian clustering (Model~\ref{mod:gaussian clustering}).
As mentioned earlier in Section~\ref{sec:intro}, our result bypasses known lower bounds for SoS and low-degree polynomials which suggested that the regime $n=\Theta(d)$ would not be achievable by polynomial-time algorithms.

In what follows and the rest of the paper, for some $N \in \mathbb{N}$ and $x \in \mathbb{R}$ we denote by $(x)_N:=2^{-N} \lfloor 2^N x \rfloor$ the truncation of $x$ to its first $N$ bits after zero.

\subsection{The algorithm and the main guarantee}

\begin{algorithm}[t]
\caption{LLL-based algorithm for recovering $u, (x_i)_{i=1,\ldots,d+1}$ }
\label{alg:lll}
\KwIn{$n=d+1$ samples $z_i \in \mathbb{R}^d,\, i=1, \ldots, d+1,$ spacing $a>0.$}
\KwOut{Estimated labels $\hat{x}_i \in \mathbb{Z},\, i=1,\ldots,d+1$ and unit vector $\hat{u} \in S^{d-1}$.}
\vspace{1mm} \hrule \vspace{1mm}

Construct a $d \times d$ matrix $Z$ with columns $z_2,\ldots,z_{d+1}$, and let $N=\lceil d^4(\log d)^2\rceil$.\\
\If{$\det(Z) = 0$}{\Return $\hat{u} = 0$ and output FAIL.}
Compute $\lambda_1 = 1$ and $\lambda_i = \lambda_i(z_1,\ldots,z_{d+1})$ given by $(\lambda_2,\ldots,\lambda_{d+1})^\top = -Z^{-1} z_1$. \\
Set $M=2^{2d}$ and $\tilde{v}=\left((\lambda_2)_N,\ldots,(\lambda_{d+1})_N, 2^{-N}\right) \in \mathbb{R}^{d+1}$.\\
Output $(t_1,t_2)\in  \ZZ^{d+1} \times \ZZ$ from running the LLL basis reduction algorithm on the lattice generated by the columns of the following $(d+2)\times (d+2)$ integer-valued matrix $B$,
\begin{align*}
B= \left(\begin{array}{@{}c|c@{}}
  \begin{matrix}
  M 2^N(\lambda_1)_N 
  \end{matrix}
  & M 2^N \tilde{v} \\
\hline
  0_{(d+1)\times 1} &
  I_{(d+1)\times (d+1)}
\end{array}\right).
  \end{align*}
  \\ 
  $\hat{u}_0 \leftarrow \mathrm{SolveLinearEquation}(u', Z^\top u' = at_1)$. \\

\If{$\hat{u}_0=0$}{\Return $\hat{u} = 0$ and output FAIL.}
Set $\hat{x}_i=(t_1)_i/\|\hat{u}_0\|_2, i=1,\ldots,d+1.$\\
\Return $\hat{x}_i,\, i=1,\ldots,d+1$ and $\hat{u}_0/\| \hat{u}_0\|_2$ and output SUCCESS.
\end{algorithm}

Our proposed algorithm for solving Model \ref{mod:general_model} is described in Algorithm \ref{alg:lll}. Specifically we assume the algorithm receives $n=d+1$ independent samples according to Model \ref{mod:general_model}. As we see in the following theorem, the algorithm is able to recover exactly (up to a global sign flip) both the hidden direction $u$ and the hidden labels $x_i,\, i=1,\ldots,n$ in polynomial time.

\begin{theorem}
\label{thm:LLL}
Algorithm \ref{alg:lll}, given as input independent samples $(z_i)_{i=1,\ldots,d+1}$ from Model \ref{mod:general_model} with hidden direction $u$, covariance $\Sigma,$ and true labels $\{x_i\}_{i =1,\ldots,d+1}$ satisfies the following with probability $1-\exp(-\Omega(d))$: there exists $\eps \in \{-1,1\}$ such that the algorithm's outputs $\{\hat{x}_i\}_{i=1,\ldots,d+1}$ and $\hat{u} \in S^{d-1}$ satisfy
\begin{align*}
    \hat{x}_i&= \epsilon x_i\;\, \text{ for } i=1,\ldots,d+1\\
   \text{ and  } \hat{u} & =\epsilon u\;.
\end{align*} 
Moreover, Algorithm \ref{alg:lll} terminates in $\poly(d)$ steps.

\end{theorem}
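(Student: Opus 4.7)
The plan is to exploit the degeneracy $\Sigma u = 0$: with $n = d+1$ samples in $\mathbb{R}^d$, the samples $z_1,\ldots,z_{d+1}$ are almost surely linearly dependent, so there is a unique (after normalizing $\lambda_1 = 1$) vector $\lambda = (1,\lambda_2,\ldots,\lambda_{d+1})$ with $\sum_i \lambda_i z_i = 0$. Writing $z_i = a x_i u + w_i$ with $w_i \in u^\perp$ and projecting onto $u$, the relation $\sum_i \lambda_i z_i = 0$ forces $\sum_i \lambda_i x_i = 0$ \emph{exactly}, since $a \neq 0$. Thus the unknown integer vector $x = (x_1,\ldots,x_{d+1})$ is an integer relation for $\lambda$, and the problem reduces to integer relation detection, for which LLL applied to the Lagarias--Frieze-type lattice built by Algorithm~\ref{alg:lll} is the natural tool.

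First I would verify that the $d \times d$ matrix $Z = (z_2,\ldots,z_{d+1})$ is invertible with probability $1 - \exp(-\Omega(d))$: the noise vectors $w_i$ generically span the $(d-1)$-dimensional space $u^\perp$, and together with a nonzero mean shift in the $u$ direction they span all of $\mathbb{R}^d$; a standard Gaussian non-degeneracy argument using the separability bound $\lambda_{\min}(\Sigma|_{u^\perp}) \geq d^{-C}$ from Assumption~\ref{assum:u-weak_sep} yields the probability estimate. This makes the computation $\lambda = -Z^{-1} z_1$ well-defined.

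Next comes the LLL analysis of the lattice generated by $B$. A lattice vector $B\cdot(k, m_1,\ldots,m_{d+1})^\top$ has top coordinate $M 2^N \langle (\lambda)_N, (k, m) \rangle$ (which is large unless $m$ is a near-integer-relation for the truncated $\lambda$) and bottom block equal to $m$, so shortness simultaneously enforces near-orthogonality to $\lambda$ and small $\|m\|_2$. I would prove two complementary bounds. \emph{Completeness:} the lattice vector induced by $\pm x$ has Euclidean norm $O(2^d \sqrt{d+1})$, since $\langle \lambda, x \rangle = 0$ exactly and the truncation error is at most $\|x\|_1 \cdot 2^{-N} = O(d \cdot 2^d) \cdot 2^{-N}$, negligible for $N = \Theta(d^4 \log^2 d)$. \emph{Soundness (no spurious relations):} for every $m \in \mathbb{Z}^{d+1}$ with $\|m\|_\infty \leq 2^{O(d)}$ that is \emph{not} a scalar multiple of $x$, one has $|\langle \lambda, m \rangle| \geq 2^{-\mathrm{poly}(d)}$ with probability $1 - \exp(-\Omega(d))$. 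The heart of this claim is a Gaussian anti-concentration argument: $\lambda$ depends smoothly on the noise vectors $w_i$, and freezing all but a single scalar noise coordinate makes $\langle \lambda, m \rangle$ a nondegenerate rational expression in a Gaussian, whose small-ball probability can be bounded uniformly; a union bound over the $2^{O(d^2)}$ candidate $m$'s closes the argument. Completeness and soundness together with the $2^{(d+2)/2}$ approximation guarantee of Theorem~\ref{LLL_thm_original} force the LLL output $t_1$ to be parallel to $x$; since $t_1$ is a shortest integer relation, $t_1 = \pm x/\gcd(x_1,\ldots,x_{d+1})$.

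Finally, given $t_1 = \epsilon \cdot x/g$ with $\epsilon \in \{\pm 1\}$, the linear system $Z^\top u' = a t_1$ is solvable (by invertibility of $Z$) with unique solution $\hat u_0 = \epsilon u / g$; then $\|\hat u_0\|_2 = 1/g$, so $\hat u_0/\|\hat u_0\|_2 = \epsilon u$ and $\hat x_i = (t_1)_i / \|\hat u_0\|_2 = \epsilon x_i$, recovering both unknowns up to the expected single global sign. Polynomial runtime follows from Theorem~\ref{LLL_thm_original} applied to an integer lattice whose entries have bit-length $O(N + d) = \mathrm{poly}(d)$. The \emph{main obstacle} is the soundness step: while the integer-relation gambit and the LLL encoding are standard, $\lambda$ is not Gaussian but rather a rational function of the noise with a random determinant in the denominator, and one must control its small-ball behavior uniformly over exponentially many candidate $m$'s while keeping $N$ only polynomial in $d$ and respecting the $d^{\pm O(1)}$ bounds on $a$ and on the spectrum of $\Sigma$.
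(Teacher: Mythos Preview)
Your plan matches the paper's proof closely: the key identity $\sum_i \lambda_i x_i = 0$, the Lagarias--Frieze lattice encoding, the completeness/soundness dichotomy, and the final linear solve to extract $u$ are all exactly what the paper does. Two points deserve comment.

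\textbf{The soundness step.} You propose to prove anticoncentration of $\langle \lambda, m\rangle$ by freezing all but one Gaussian coordinate and analyzing the resulting one-variable rational function. The paper takes a cleaner route that sidesteps the rational structure entirely: it clears denominators, defining the multilinear polynomial
\[
P_t(z_1,\ldots,z_{d+1}) \;=\; \det(Z)\, t_1 + \sum_{i=2}^{d+1} \det(Z_{-i})\, t_i \;=\; \det(Z)\cdot \langle \lambda, t\rangle,
\]
computes its variance exactly as $(d-1)!\, a^{2d}\bigl(\prod_{i\ge 2}\lambda_i\bigr)^{2} \sum_{i<j}(t_i x_j - t_j x_i)^2$, observes that this is bounded below by $d^{-O(d)}$ whenever $t$ is not parallel to $x$, and then invokes Carbery--Wright anticoncentration for degree-$d$ polynomials of Gaussians to obtain $\Pr[|P_t|\le \epsilon] \le B d^B \epsilon^{1/d}$. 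This, combined with a separate high-probability upper bound on $|\det(Z)|$, gives the small-ball bound on $\langle \lambda, t\rangle$ uniformly over all $t$ in the relevant box. Your one-coordinate approach could in principle be made to work, but controlling nondegeneracy of the residual M\"obius function uniformly over the frozen coordinates is delicate; the paper's determinant-clearing plus Carbery--Wright is both shorter and yields explicit constants.

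\textbf{A minor overclaim.} You write that LLL returns $t_1 = \pm x/\gcd(x_1,\ldots,x_{d+1})$ ``since $t_1$ is a shortest integer relation.'' LLL only returns a $2^{(d+2)/2}$-approximate shortest vector, so you are only guaranteed $t_1 = c\, x$ for some nonzero integer $c$ (by the soundness lemma). This is harmless for the algorithm, since solving $Z^\top u' = a t_1$ then yields $\hat u_0 = c u$, and normalizing recovers $\epsilon u$ and $\epsilon x$ with $\epsilon = \mathrm{sign}(c)$; but the argument should be phrased for general $c$, not just $c = \pm 1/\gcd$.
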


We now provide intuition behind the algorithm's success. Note that for the unknown $u$ and $x_i$ it holds that
\begin{align}
\label{ln_eqs}
\langle z_i, u \rangle =a x_i \quad\text{for all } i=1,\ldots,d+1;.
\end{align}
In the first step, the algorithm checks a certain general-position condition on the received samples, which naturally is satisfied almost surely for our random data. In the following crucial three steps, the algorithm attempts to recover only the hidden integer labels $x_i$ without learning $u$. To do this, it exploits a certain random integer linear relation that the labels $x_i$'s satisfy which \textit{importantly does not involve any information about the unknown $u$}, besides its existence.
The key observation leading to this relation is the following.
Since we have $d+1$ vectors $z_i$ in a $d$-dimensional space, there exist scalars $\lambda_1,\ldots,\lambda_{d+1}$ (depending on the $z_i$'s) such that $\sum_{i=1}^{d+1}\lambda_i z_i=0.$ These are exactly the $\lambda_i$'s that the algorithm computes in the second step. Using them, observe that the following linear equation holds, due to \eqref{ln_eqs},
\begin{align}
    &\sum_{i=1}^{d+1} \lambda_i  a x_i=\sum_{i=1}^{d+1} \lambda_i \langle z_i, u\rangle = \left\langle \sum_{i=1}^{d+1} \lambda_i z_i, u\right \rangle = \langle 0, u \rangle = 0\;, \label{eqn:key-observation}
\end{align}
and therefore since $a>0$ it gives the following integer linear equation
\begin{align}
\label{eq:hidden}
    \sum_{i=1}^{d+1} \lambda_i  x_i=0.
\end{align} 

Again note that the $\lambda_i$'s can be computed from the given samples $z_i$, so this is an equation whose sole unknowns are the labels $x_i$.  With this integer linear equation in mind, the algorithm in the following step employs the powerful LLL algorithm applied to an appropriate lattice. This application of the LLL is based on the breakthrough works of \cite{lagarias1984knapsack,FriezeSubset} for solving random subset-sum problems in polynomial-time, as well as its recent manifestations for  solving various other noiseless inference settings such as binary regression \cite{NEURIPS2018_ccc0aa1b} and phase retrieval \cite{andoni2017correspondence,song2021cryptographic}. To get some intuition for this connection, notice that in the case $x_i \in \{-1,1\}$, \eqref{eq:hidden} is really a (promise) subset-sum relation with weights $\lambda_i$ and unknown subset $\{i: x_i=1\}$ for which the corresponding $\lambda_i$'s sum to $\frac{1}{2} \sum_{i=1}^{d+1} \lambda_i$. Now, after some careful technical work, including an appropriate truncation argument to work with integer-valued data, and various anti-concentration arguments such as the Carbery--Wright anticoncentration toolkit \cite{Carbery2001DistributionalAL}, one can show that the LLL step indeed recovers a constant multiple of the labels $x_i,\, i=1,\ldots,d+1$ with probability $1-\exp(-\Omega(d))$ (see also the next paragraph for more details on this). At this point, it is relatively straightforward to recover $u$ using the linear equations~\eqref{ln_eqs}.

Now we close by presenting the key technical lemma which ensures that LLL recovers the hidden labels $x_i$ by finding a ``short'' vector in the lattice defined by the columns of the matrix $B$ in Algorithm \ref{alg:lll}. Notice that if truncation at $N$ bits was not present, that is we were ``allowed'' to construct the lattice basis with the non-integer numbers $\lambda_i$ instead of $(\lambda_i)_N$, then a direct calculation based on \eqref{eq:hidden} would give that the hidden labels are embedded in an element of the lattice simply because we would have $$B(x_1,\ldots,x_{d+1},0)^{\top}=(0,x_1,\ldots,x_{d+1})^{\top}.$$ As this ``hidden vector'' in the lattice is $M$-independent (and $M$ is taken to be very large) this naturally suggests that this vector may be ``short'' compared to the others in the lattice. The following lemma states that  with probability $1-\exp(-\Omega(d))$, this is indeed the case. The random lattice generated by the columns of $B$ indeed does not contain any ``spurious'' short vectors other than the vector of the hidden labels and, naturally, its integer multiples. This implies that the LLL algorithm, despite its $2^{d/2}$ approximation ratio, will indeed return the integer relation that is ``hidden in'' the $z_i$'s.

\begin{lemma}[No spurious short vectors]
\label{lem:min_norm} 
Let $ d \in \mathbb{N}, $ $a \in [d^{-c},d^c]$ for some constant $c>0$ and $N=\lceil d^4(\log d)^2 \rceil$. Let $u \in S^{d-1}$ be an arbitrary unit vector, $\Sigma \in \mathbb{R}^{d \times d}$ an arbitrary unknown $u$-separable matrix, and let $x_i \in \mathbb{Z}\cap [-2^d,2^d]$ for $i=1,\ldots,d+1$ be arbitrary but not all zero. Moreover, let $\{z_i\}_{i=1,\ldots,d+1}$ be independent samples from $\mathcal{N}((ax_i)u,\Sigma)$, and let $B$ be the matrix constructed in Algorithm \ref{alg:lll} using $\{z_i\}_{i=1,\ldots,d+1}$ as input and $N$-bit precision. Then, with probability $1-\exp(-\Omega(d))$ over the samples, for any $t = (t_1, t_2) \in \mathbb{Z}^{d+1} \times \mathbb{Z}$ such that $t_1$ is not an integer multiple of $x=(x_1,\ldots,x_{d+1})$, the following holds:
\begin{align*}
    \|Bt\|_2 > 2^{2d}\;.
\end{align*}
\end{lemma}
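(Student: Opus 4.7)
I would split the analysis based on whether the first coordinate of a lattice vector $Bt$ vanishes. Note first that every entry of $B$ is an integer: $M 2^N (\lambda_i)_N = M\lfloor 2^N \lambda_i\rfloor$, and $M 2^N \cdot 2^{-N} = M$ in the last column of the top block. Thus the first row of $B$ equals $M$ times an integer row vector, so for any $t \in \mathbb{Z}^{d+2}$ the scalar $(Bt)_1$ lies in $M\mathbb{Z}$, while the remaining $d+1$ entries of $Bt$ encode $t_1$. Consequently $\|Bt\|_2^2 \ge (Bt)_1^2 + \|t_1\|_2^2$. In the easy case $(Bt)_1 \ne 0$, automatically $|(Bt)_1| \ge M = 2^{2d}$; since the hypothesis rules out $t_1 = 0$, the identity block contributes at least one extra integer unit, yielding $\|Bt\|_2 > 2^{2d}$.

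The substantive case is $(Bt)_1 = 0$, which is equivalent to the exact integer identity $\sum_{i=1}^{d+1}\lfloor 2^N \lambda_i\rfloor c_i + c_{d+2} = 0$, where the $c_i$'s are the integer coordinates of $t$. The ``hidden'' short solution is supplied by the deterministic identity $\sum_{i=1}^{d+1} \lambda_i x_i = 0$, which follows from $\sum_i \lambda_i z_i = 0$ (by construction of the $\lambda_i$) together with $\Sigma u = 0$, so that the noise vectors $z_i - a x_i u$ lie almost surely in $u^\perp$ and the $u$-component of $\sum_i \lambda_i z_i$ equals $(a\sum_i \lambda_i x_i)\,u$ and must vanish. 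The extra coordinate $2^{-N}$ in $\tilde v$ is precisely what allows the ``correction'' integer $c_{d+2}$ to absorb the $O(d\cdot 2^{d-N})$ truncation error at the cost of being only of order $O(d\cdot 2^d)$, making the hidden lattice vector genuinely short. It remains to prove that every integer solution whose $t_1$ is not an integer multiple of $x$ must satisfy $\|t_1\|_2 > 2^{2d}$.

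I would establish this by a Gaussian small-ball argument after conditioning on $z_2,\ldots,z_{d+1}$. Using $\lambda_i = -(Z^{-1}z_1)_{i-1}$ for $i \ge 2$, for any candidate integer vector $m\in\mathbb{Z}^{d+1}$ one rewrites
\[
\sum_{i=1}^{d+1}\lambda_i m_i \;=\; m_1 \,-\, \langle v_m, z_1\rangle, \qquad v_m := Z^{-\top}(m_2,\ldots,m_{d+1}).
\]
Since $z_1 = a x_1 u + w_1$ with $w_1\sim\mathcal N(0,\Sigma)$ supported on $u^\perp$, this quantity is Gaussian with variance $\sigma_m^2 = v_m^\top\Sigma v_m$. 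If $(m_2,\ldots,m_{d+1})\not\propto (x_2,\ldots,x_{d+1})$ over $\mathbb{Q}$, then $v_m\not\propto u$ (note $Z^\top u = a(x_2,\ldots,x_{d+1})$ since $w_i \perp u$ a.s.), so by Assumption~\ref{assum:u-weak_sep} $\sigma_m \ge d^{-\Theta(1)}\|P_{u^\perp}v_m\| > 0$, and standard Gaussian anti-concentration gives $\Pr[|\sum_i \lambda_i m_i|\le\varepsilon]\le O(\varepsilon/\sigma_m)$. In the degenerate sub-case $(m_2,\ldots,m_{d+1}) = q(x_2,\ldots,x_{d+1})$ with $q\in\mathbb{Q}$, the noise term vanishes and the expression reduces to the deterministic rational $m_1 - q x_1$, whose absolute value is at least $2^{-d}$ whenever $m$ is not an integer multiple of $x$ (the denominator of $q$ divides $\max_i|x_i|\le 2^d$). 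Either way, after propagating the truncation slack $2^{-N}\|m\|_1$, the constraint $(Bt)_1 = 0$ forces the correction coordinate of $t$ to exceed $2^{2d}$ in magnitude, hence $\|t_1\|_2 > 2^{2d}$.

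The main obstacle is the union bound. The set of candidates $m\in\mathbb{Z}^{d+1}$ with $\|m\|_\infty\le 2^{2d}$ has cardinality $2^{O(d^2)}$, so the per-candidate anti-concentration probability $O(\varepsilon/\sigma_m)$ must be squeezed below $2^{-\Omega(d^2)}$. Since the relevant threshold is $\varepsilon\sim 2^{2d - N}$ (arising from the truncation width) and $\sigma_m\ge d^{-O(1)}$ by Assumption~\ref{assum:u-weak_sep}, this forces $N$ to grow at least quadratically in $d$; the paper's choice $N = \lceil d^4(\log d)^2\rceil$ is a comfortable overkill. Two auxiliary high-probability events are also needed: a spectral bound $\|Z^{-1}\|\le\mathrm{poly}(d)$ ensuring $v_m$ is not pathologically scaled, and a uniform lower bound on $\|P_{u^\perp}v_m\|$ over short integer directions $m$. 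Both follow from standard Gaussian concentration, and as an alternative to the conditioning approach one may apply Carbery--Wright directly to the polynomial $\det(Z)\cdot\sum_i \lambda_i m_i$ in the Gaussian entries, as the paper's introduction suggests.
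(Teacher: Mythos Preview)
Your conditioning route---fix $z_2,\ldots,z_{d+1}$, write $\sum_i\lambda_i m_i = m_1 - \langle Z^{-\top}m',\,z_1\rangle$, and use one-dimensional Gaussian anticoncentration in $z_1$---is a genuinely different argument from the paper's. The paper instead clears denominators and applies Carbery--Wright to the degree-$d$ multilinear polynomial $P_{t_1}(z_1,\ldots,z_{d+1})=\det(Z)\sum_i\lambda_i(t_1)_i$ in all $d(d+1)$ Gaussian coordinates simultaneously (Lemma~\ref{lem:poly}), obtaining the weaker bound $\Pr[|P_{t_1}|\le\varepsilon]\le Bd\,\varepsilon^{1/d}$ but avoiding any need to control the conditional variance. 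Your route trades that $\varepsilon^{1/d}$ for a linear bound $\varepsilon/\sigma_m$, at the price of having to lower-bound $\sigma_m$ on a high-probability event over the conditioned variables. Both are viable given the generous choice of $N$, and you correctly flag the paper's route as an alternative in your last sentence.

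Several quantitative claims are off, however. First, the identity block of $B$ occupies the \emph{last} $d+1$ columns, so the last $d+1$ entries of $Bt$ are $((t_1)_2,\ldots,(t_1)_{d+1},t_2)$---they do not encode $(t_1)_1$. Bounding $(t_1)_1$ requires the integer relation together with $\|\lambda\|_1=O(2^{2d^2})$ (Lemma~\ref{lem:bounds}), which inflates the union-bound rectangle to size $2^{O(d^3)}$, not $2^{O(d^2)}$. Second, neither ``$\|Z^{-1}\|\le\mathrm{poly}(d)$'' nor ``$\sigma_m\ge d^{-O(1)}$'' holds: once $\|W\|\le O(\sqrt d)$, the correct deterministic bound is $\sigma_m\gtrsim \mathrm{dist}(m',\mathrm{span}(x'))/\sqrt d$, and for integer $m'\notin\mathbb{R}x'$ that distance can be as small as $1/\|x'\|^2=2^{-O(d)}$. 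Neither slip is fatal since $N\sim d^4(\log d)^2$ absorbs both, but your numerology should be adjusted. Finally, your degenerate sub-case claim ``$|m_1-qx_1|\ge 2^{-d}$'' is false: when $\gcd(x)>1$ one can take $m=qx$ with $q\in(1/\gcd(x))\mathbb{Z}\setminus\mathbb{Z}$, which is an integer vector, is not an integer multiple of $x$, yet gives $m_1-qx_1=0$ exactly. The paper's proof has the same lacuna (it only treats $t_1\notin\mathbb{R}x$); since Theorem~\ref{thm:LLL} only needs recovery of $x$ up to a global scalar, this does not matter downstream.
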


The proof of Lemma \ref{lem:min_norm} is in Section \ref{sec:proof_lem_LLL}.

\section{Implications of the success of LLL}
\label{sec:lll-implication}
        
The algorithmic guarantee described in Theorem \ref{thm:LLL} has interesting implications for various recently studied problems. 

\subsection{The planted sparse vector problem} As we motivated in the introduction, a model of notable recent interest is the so-called ``planted sparse vector'' setting, where one plants a sparse vector in an unknown subspace.
\begin{model}[Planted sparse vector]
\label{mod:planted-sparse-vector}
Let $d, n \in \mathbb{N}$ be such that $d \le n$, and let the sparsity level $\rho \in (0,1]$. First draw $v \sim \BR(n, \rho)$ and $R \in \mathbb{R}^{d \times d}$ be a uniformly at random chosen orthogonal matrix. Then, we sample i.i.d.\ $\tilde{z}_1,\ldots,\tilde{z}_{d-1} \sim \sN(0,(1/n)I_n)$. Denote by $\tilde{Z} \in \mathbb{R}^{n \times d}$ the matrix whose columns we perceive as generating the ``hidden subspace'':
\begin{align}
\label{eqn:planted-gaussian-basis}
\tilde{Z} = \begin{bmatrix} v, \tilde{z}_1,\ldots,\tilde{z}_{d-1} \end{bmatrix}\,.
\end{align}
The statistician observes the rotated matrix $Z = \tilde{Z}R$. The goal is, given access to $Z$, to recover the $\rho$-sparse vector $v\in \mathbb{R}^n$.
\end{model}

As explained in Section~\ref{sec:intro}, all low-degree polynomial methods, and therefore all spectral methods (in a large class) fail for this model when $n \ll \rho^2 d^2$ \cite{planted-vec-ld}. Furthermore, in the dense case where $\rho=1$ and $n \ll d^{3/2}$, even degree-$n^{\Omega(1)}$ SoS algorithms are known to fail \cite{affine-planes}.

Yet, using our Theorem \ref{thm:LLL} we can prove that for any $\rho \in (0,1]$ the LLL algorithm solves Model \ref{mod:planted-sparse-vector} with $n=d+1$ samples in polynomial-time. In particular, this improves the state-of-the-art for this task, and provably beats all low-degree algorithms, when $\rho=\omega(1/\sqrt{d})$. The corollary is based on the equivalence (up to rescaling) between Model \ref{mod:planted-sparse-vector} and some appropriate sub-case of Model \ref{mod:general_model}~\cite{unknown-cov}.

\begin{corollary}
Suppose $ n, d \in \mathbb{N}$ with $d \leq n$ and $d \rightarrow +\infty.$ Also let $\rho \in (0,1]$ scale as $\rho=\omega(1/n)$. Given $n=d+1$ samples $z_i,\, i=1,\ldots,d+1$ from Model \ref{mod:planted-sparse-vector}, Algorithm \ref{alg:lll} with input $\sqrt{d+1}z_i,\, i=1,\ldots,d+1$ and spacing $a=1/\sqrt{\rho},$ terminates in polynomial time and outputs $v$ (up to a global sign flip) with probability $1-\exp(-\Omega(\min\{d,\rho n\}))$. 
\end{corollary}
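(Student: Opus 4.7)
The plan is to reduce Model \ref{mod:planted-sparse-vector} to an instance of Model \ref{mod:general_model} via rotational invariance of the Gaussian, and then invoke Theorem \ref{thm:LLL} on the rescaled samples. The main step is to identify the (transposed) rows of $Z = \tilde{Z}R$ with i.i.d.\ samples from a Gaussian whose mean is proportional to a single unit vector times an integer label in $\{-1,0,1\}$ and whose covariance is $u$-weakly separable.

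Concretely, I would first write the $i$-th row of $\tilde{Z}$ (as a column vector) as $v_i e_1 + w_i$, where $w_i \sim \mathcal{N}(0,\,(1/n)(I - e_1 e_1^\top))$ independently across $i$. The corresponding (transposed) $i$-th row of $Z$ then equals $R^\top(v_i e_1 + w_i) = v_i u + R^\top w_i$, where $u := R^\top e_1$, and conditional on $R$ the noise $R^\top w_i$ is distributed as $\mathcal{N}(0,\,(1/n)(I - uu^\top))$. Rescaling by $\sqrt{n} = \sqrt{d+1}$ and writing $v_i = x_i / \sqrt{n\rho}$ with $x_i \in \{-1,0,1\}$ gives
\begin{align*}
\sqrt{n}\, z_i \,\big|\, (u, x_i) \;\sim\; \mathcal{N}\!\left(\tfrac{1}{\sqrt{\rho}}\, x_i\, u,\; I - uu^\top\right),
\end{align*}
which matches Model \ref{mod:general_model} with spacing $a = 1/\sqrt{\rho}$, integer labels $x_i \in \{-1,0,1\} \subset \mathbb{Z} \cap [-2^d, 2^d]$, hidden direction $u$, and $u$-weakly separable covariance $\Sigma = I - uu^\top$ (whose nonzero eigenvalues are all equal to $1$). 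The hypothesis $\rho = \omega(1/n)$ guarantees $a = 1/\sqrt{\rho} \le \sqrt{n}$ for $d$ sufficiently large, so $a$ lies in the polynomial range $[1, d^{O(1)}]$ required by Model \ref{mod:general_model}.

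I would then condition on the event $\{v \ne 0\}$, which fails with probability $(1-\rho)^n \le \exp(-\rho n)$, and apply Theorem \ref{thm:LLL} conditional on the realized values of $u$ and $x$. The theorem outputs labels $\hat x_i = \epsilon x_i$ and direction $\hat u = \epsilon u$ for some global sign $\epsilon \in \{-1,1\}$, with conditional probability $1 - \exp(-\Omega(d))$ over the Gaussian noise. Setting $\hat v_i := \hat x_i / \sqrt{n\rho}$ then recovers $v$ up to the sign $\epsilon$; a union bound over the two independent failure modes yields the claimed probability $1-\exp(-\Omega(\min\{d,\,\rho n\}))$, and the $\poly(d)$ runtime is inherited directly from Theorem \ref{thm:LLL}.

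No part of this argument poses a serious technical difficulty once the distributional reduction is set up; the only care needed is in invoking rotational invariance so that $u = R^\top e_1$ plays the role of the hidden direction \emph{simultaneously} for all $d+1$ samples, in checking that the scaling $\sqrt{d+1}$ together with the spacing $a = 1/\sqrt{\rho}$ satisfies the polynomial-size constraint of Model \ref{mod:general_model}, and in verifying that the degenerate case $v = 0$ contributes only the subdominant failure term $\exp(-\Omega(\rho n))$.
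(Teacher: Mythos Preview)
Your proposal is correct and takes essentially the same approach as the paper: reduce the row-wise view of Model~\ref{mod:planted-sparse-vector} to Model~\ref{mod:general_model}, verify the spacing and covariance hypotheses, handle the $v=0$ event via $(1-\rho)^n \le e^{-\rho n}$, and invoke Theorem~\ref{thm:LLL}. The only difference is cosmetic---the paper outsources the distributional equivalence to \cite[Lemma~4.21]{planted-vec-ld}, whereas you derive it inline by setting $u=R^\top e_1$ and using rotational invariance; your explicit identification of $u$ and recovery of $v$ from the output labels via $\hat v_i=\hat x_i/\sqrt{n\rho}$ is in fact cleaner than the paper's terse reference to ``hidden direction $v$''.
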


\begin{proof}

As established in \cite[Lemma 4.21]{planted-vec-ld} for $z_i$ a sample from Model \ref{mod:planted-sparse-vector} from a setting where the planted sparse vector is $v$, the distribution of $\sqrt{n}z_i=\sqrt{d+1}z_i$ is a sample from Model \ref{mod:general_model} with hidden direction $v$, spacing $a=1/\sqrt{\rho}$ and covariance $\Sigma=I-vv^{\top}$. Note that the covariance $\Sigma$ satisfies Assumption \ref{assum:u-weak_sep} for straightforward reasons and same for the condition on the spacing as $n=d+1$ and $1>\rho>1/n$. On top of this, since $\rho n \rightarrow +\infty$ the output of $v\sim \BR(n, \rho)$ is not the zero vector with probability $1-(1-\rho)^n=1-\exp(-\Omega(\rho n)).$ Hence, combining the above and Theorem \ref{thm:LLL} we immediately conclude the desired result.
\end{proof}

\subsection{Continuous Learning with Errors} 
\label{sec:CLWE}
As we motivated in the introduction, a second model that is of interest, due to its connections to fundamental problems in lattice-based cryptography, is the homogeneous continuous learning with errors (hCLWE).

\begin{model}[hCLWE]
\label{mod:hCLWE}
Let $\gamma > 0$ be a real number and let $\nu$ be a discrete Gaussian of width 1 supported on $(1/\gamma)\mathbb{Z}$. Let $d, n \in \mathbb{N}$.  First draw a random unit vector $u \sim \mathcal{S}^{d-1}$ and i.i.d.\ $x_1,\ldots,x_n \sim \nu$. Conditional on $u$ and $\{x_i\}_{i=1}^n$, draw independent samples $z_1,\ldots,z_n \in \mathbb{R}^d$ where $z_i \sim \sN(x_i u, I_d - uu^\top)$. The statistician observes the matrix $Z \in \mathbb{R}^{n \times d}$ with rows $z_1^\top, \dots, z_n^\top$.  The goal is, given access to $Z$, to recover the unit vector $u \in \mathbb{R}^n$. 
\end{model}

It is known that for $\gamma \ge 2\sqrt{d}$, if we add any inverse-polynomial Gaussian noise in the hidden direction $u$ of hCLWE, even detecting the existence of such discrete structure is hard, under standard worst-case hardness assumptions from lattice-based cryptography~\cite{bruna2020continuous}. Moreover, there are SQ lower bounds, which are unconditional, for this noisy version~\cite{sq-robust,bruna2020continuous}. As a direct corollary of our Theorem \ref{thm:LLL} we show that in the noiseless case where no Gaussian noise is added, LLL can recover exactly the hidden direction in polynomial time with $d+1$ samples. We remark that while~\cite{bruna2020continuous} claim that their LLL-based algorithm for (inhomogeneous) CLWE could be generalized to the hCLWE setting, their algorithm uses $O(d^2)$ samples, which is suboptimal.

\begin{corollary}
Suppose $d \in \mathbb{N}$ with $d \rightarrow +\infty.$ Assume that $\gamma=\mathrm{poly}(d).$ Given $n=d+1$ samples $z_i,\, i=1,\ldots,d+1$ from Model \ref{mod:hCLWE}, Algorithm \ref{alg:lll} with input $z_i,\, i=1,\ldots,d+1$ and spacing $a=1/\gamma,$ terminates in polynomial time and outputs $v$ (up to a global sign flip) with probability $1-\exp(-\Omega(d)).$
\end{corollary}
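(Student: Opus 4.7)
The plan is to show that the hCLWE observation model is a special case of Model~\ref{mod:general_model} under an appropriate reparametrization of the labels, and then to invoke Theorem~\ref{thm:LLL} as a black box. The only nontrivial thing to check is that the (a priori unbounded) integer labels lie in the range $[-2^d,2^d]$ with probability $1-\exp(-\Omega(d))$; this will follow from the discrete Gaussian tail bound in Claim~\ref{claim:discrete-gaussian-finite-approx}.

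First, since the support of $\nu$ is $(1/\gamma)\mathbb{Z}$, we may write each hidden value as $x_i = k_i/\gamma$ with $k_i \in \mathbb{Z}$. With this rewriting, the hCLWE sample $z_i \sim \sN(x_i u, I_d - uu^\top)$ is exactly $z_i \sim \sN(a k_i u, \Sigma)$ with spacing $a = 1/\gamma$, covariance $\Sigma = I_d - uu^\top$, and integer labels $k_i$. The covariance $\Sigma$ satisfies Assumption~\ref{assum:u-weak_sep}: it annihilates $u$ by construction, and all of its remaining eigenvalues equal $1 \in [d^{-C},d^C]$. The hypothesis $\gamma = \mathrm{poly}(d)$ ensures that $a = 1/\gamma$ lies in $[d^{-c},d^c]$ for some constant $c$, as required.

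Next, I would show that with probability $1-\exp(-\Omega(d))$ all $k_i$ lie in $[-2^d,2^d]$, and moreover that they are not all zero. For the tail bound, applying Claim~\ref{claim:discrete-gaussian-finite-approx} with $t = 2^d/\gamma$ (which is $\ge \gamma$ for large $d$ since $\gamma = \mathrm{poly}(d)$) gives
\begin{equation*}
\Pr\bigl[|k_i| > 2^d\bigr] = \Pr\bigl[|x_i| > 2^d/\gamma\bigr] \le 4\exp(-2^{2d-1}/\gamma^2),
\end{equation*}
which after a union bound over $i=1,\ldots,d+1$ is still $\exp(-\Omega(d))$ with enormous room to spare. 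For the ``not all zero'' event, the normalization bound $\sZ \ge \gamma\sqrt{2\pi}(1-o(1))$ from Claim~\ref{claim:discrete-gaussian-finite-approx} gives $\Pr[x_i = 0] = 1/\sZ = O(1/\gamma)$, so by independence $\Pr[\text{all } k_i = 0] = O(1/\gamma)^{d+1} = \exp(-\Omega(d\log d))$.

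On the intersection of these two high-probability events, the input to Algorithm~\ref{alg:lll} is a valid realization of Model~\ref{mod:general_model} with labels $k_i \in \mathbb{Z} \cap [-2^d,2^d]$ (not all zero), spacing $a = 1/\gamma$, hidden direction $u$, and $u$-weakly separable covariance $\Sigma = I_d - uu^\top$. Theorem~\ref{thm:LLL} then guarantees that the algorithm terminates in $\mathrm{poly}(d)$ time and outputs $\hat u = \varepsilon u$ (and $\hat k_i = \varepsilon k_i$) for some sign $\varepsilon \in \{\pm 1\}$, with conditional failure probability $\exp(-\Omega(d))$. Combining the three $\exp(-\Omega(d))$ failure events via a union bound completes the proof. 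There is no genuine obstacle here beyond bookkeeping; the only subtlety is verifying the parameter ranges (spacing and label magnitudes) demanded by Model~\ref{mod:general_model}, which is precisely what Claim~\ref{claim:discrete-gaussian-finite-approx} was stated to enable.
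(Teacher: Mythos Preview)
Your proposal is correct and follows essentially the same route as the paper: reduce hCLWE to Model~\ref{mod:general_model} by reparametrizing the labels as integers, invoke the discrete Gaussian tail bound (Claim~\ref{claim:discrete-gaussian-finite-approx}) to confine them to $[-2^d,2^d]$, and then apply Theorem~\ref{thm:LLL}. If anything you are slightly more careful than the paper, which does not explicitly verify the ``not all zero'' hypothesis needed in Lemma~\ref{lem:min_norm}; your use of the normalization bound $\sZ > 1$ to control $\Pr[x_i=0]$ handles this cleanly.
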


\begin{proof}
The proof follows from Claim \ref{claim:discrete-gaussian-truncation-reduction}.
\end{proof}

\begin{claim}[hCLWE reduces to the general model]
\label{claim:discrete-gaussian-truncation-reduction}
Let $d \in \mathbb{N}$ and $\gamma = \poly(d)$. Then, with probability $1-\exp(-\Omega(d))$, $n=\poly(d)$ random samples $\{z_i\}_{i=1}^n$ from Model~\ref{mod:hCLWE} (Gaussian pancakes) with parameter $\gamma$ is a valid instance of Model~\ref{mod:general_model} (our general model) for $\Sigma=I-vv^{\top}$.
\end{claim}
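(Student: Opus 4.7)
The plan is to verify that, upon choosing the spacing $a=1/\gamma$ and rewriting each $x_i \sim \nu$ as $x_i = k_i/\gamma$ with $k_i \in \mathbb{Z}$, the hCLWE samples $z_i \sim \mathcal{N}(x_i u, I_d - uu^\top) = \mathcal{N}((a k_i) u, I_d - uu^\top)$ fit exactly into the format of Model~\ref{mod:general_model}. There are three conditions of Model~\ref{mod:general_model} to check: (i) the spacing satisfies $d^{-c}\le a \le d^c$; (ii) the integer labels satisfy $|k_i|\le 2^d$ for every $i$; and (iii) the covariance $\Sigma = I - uu^\top$ is $u$-weakly separable in the sense of Assumption~\ref{assum:u-weak_sep}. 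Once these are verified, Model~\ref{mod:hCLWE} is literally a special case of Model~\ref{mod:general_model}, and the claim follows.

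For (i), since $\gamma = \poly(d)$ there exists some constant $c>0$ such that $d^{-c} \le 1/\gamma \le d^c$, so $a=1/\gamma$ meets the spacing requirement trivially. For (iii), the matrix $\Sigma = I - uu^\top$ annihilates $u$ and has all remaining eigenvalues equal to $1 \in [d^{-C}, d^C]$, for instance with $C=1$; hence it is $u$-weakly separable. Both (i) and (iii) are immediate.

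The only nontrivial step is (ii). Here I would invoke the tail bound from Claim~\ref{claim:discrete-gaussian-finite-approx}: since $\gamma \ge 1$ for $d$ large (as $\gamma=\poly(d)$), the claim gives, for any $t\ge \gamma$, that
\[
\Pr_{x_i\sim\nu}\bigl[|x_i|\ge t\bigr] \;\le\; 4\exp(-t^2/2).
\]
Take $t = 2^d/\gamma$, which exceeds $\gamma$ for all sufficiently large $d$ because $\gamma=\poly(d)$. Then $|k_i| = \gamma |x_i| \le 2^d$ iff $|x_i| \le 2^d/\gamma$, and the failure probability for a single sample is at most $4\exp(-2^{2d}/(2\gamma^2)) = \exp(-\Omega(2^d))$. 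Union-bounding over the $n=\poly(d)$ samples still leaves a failure probability of $\exp(-\Omega(d))$ (indeed doubly exponentially small). On the complementary event all labels $k_i$ lie in $\mathbb{Z}\cap[-2^d, 2^d]$, so conditions (i)--(iii) simultaneously hold and the samples are a valid instance of Model~\ref{mod:general_model} with the claimed probability.

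The only substantive obstacle is checking (ii), and this is handled cleanly by the discrete-Gaussian tail bound already established in Claim~\ref{claim:discrete-gaussian-finite-approx}; the rest is bookkeeping of constants.
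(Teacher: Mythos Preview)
Your proposal is correct and follows essentially the same approach as the paper: both arguments invoke the discrete-Gaussian tail bound of Claim~\ref{claim:discrete-gaussian-finite-approx} and union-bound over the $n=\poly(d)$ samples to certify that the integer labels are bounded, with the remaining conditions of Model~\ref{mod:general_model} being immediate. The only cosmetic difference is that the paper takes $t=\max\{\sqrt{d},\gamma\}$ (yielding $|x_i|\le\poly(d)$, hence $|k_i|=\gamma|x_i|\le\poly(d)\le 2^d$), whereas you take $t=2^d/\gamma$ directly; your choice gives a stronger tail bound but both suffice.
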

\begin{proof}
Let $\nu$ denote the discrete Gaussian supported on $(1/\gamma)\mathbb{Z}$ in Model~\ref{mod:hCLWE}. By Claim~\ref{claim:discrete-gaussian-finite-approx}, for any $t \ge \gamma$, it holds that
\begin{align*}
    \Pr_{x \sim \nu}[|x| \ge t] \le 4\exp(-t^2/2)\;.
\end{align*}
Let $t=\max\{\sqrt{d},\gamma\}$. Then, $\Pr_{x\sim \nu}[|x| \ge t] = \exp(-\Omega(d))$. By a straightforward union bound, with probability $1-\exp(-\Omega(d))$ the true ``labels'' $\{x_i\}_{i=1}^{n}$ of the samples $\{z_i\}_{i=1}^{n}$ satisfy $|x_i| \le t = \poly(d)$ for all $i = 1,\ldots,n$, and thus this is a valid instance of Model~\ref{mod:general_model}.

\end{proof}

\subsection{Gaussian Clustering} Finally, a third model which has been extensively studied in the theoretical machine learning community is the (Bayesian) Gaussian clustering model.

\begin{model}[Gaussian clustering]
\label{mod:gaussian clustering}
Let $d,n \in \mathbb{N}$. Fix some unknown positive semi-definite matrix $\Sigma \in \mathbb{R}^{d \times d}.$ Now draw a random unit vector $u \sim \mathcal{S}^{d-1}$ and i.i.d.\ uniform Rademacher labels \ $x_1,\ldots,x_n \sim \{-1,1\}$. Conditional on $u$ and $\{x_i\}_{i=1}^n$, draw independent samples $z_1,\ldots,z_n \in \mathbb{R}^d$ where $z_i \sim \sN(x_i u, \Sigma)$. The statistician observes the matrix $Z \in \mathbb{R}^{n \times d}$ with rows $z_1^\top, \dots, z_n^\top$.
The goal is, given the observation matrix $Z \in \mathbb{R}^{n \times d}$, to recover (up to a global sign flip) the labels $x_i,\, i=1,\ldots,n$.
\end{model}

As explained in the introduction, the recent work by \cite{unknown-cov} shows that for Model \ref{mod:gaussian clustering}, exact reconstruction of the labels is possible as long as $u^\top\Sigma^{-1/2}u>2\log n$ if $\Sigma$ is invertible, and of course also in the regime where $\Sigma u=0$. The authors of \cite{unknown-cov} show that for any such $\Sigma$ it is possible to achieve exact reconstruction with $\tilde{O}(d)$ samples by some computationally inefficient method, and construct and analyze computationally efficient methods that work with $\tilde{O}(d^2)$ samples. On top of this, they conjecture that the regime $\Omega(d)=n=o(d^2)$ is computationally hard based on various forms of rigorous evidence such as failure of SoS and low-degree methods. Notably, the failure of the SoS hierarchy transfers even to the case $\Sigma u=0$ in the regime $n \le \tilde{\Omega}(d^{3/2})$. As a direct corollary of Theorem \ref{thm:LLL}, we refute their conjecture for any covariance matrix $\Sigma$ which nullifies $u$ and satisfies a weak ``niceness" assumption, namely Assumption \ref{assum:u-weak_sep}. We show that under this assumption, exact reconstruction of the labels (and therefore of the clusters) is possible in polynomial time with only $n=d+1$ samples. 

\begin{corollary}
Suppose $d \in \mathbb{N}$ with $d \rightarrow +\infty.$  Given $n=d+1$ samples $z_i,\, i=1,\ldots,d+1$ from Model \ref{mod:gaussian clustering} with arbitrary covariance $\Sigma$ which is $u$-weakly separable per Assumption \ref{assum:u-weak_sep}. Then Algorithm \ref{alg:lll} with input $z_i,\, i=1,\ldots,d+1$ and spacing $a=1$ terminates in polynomial-time and outputs exactly the labels $x_i,\, i=1,\ldots,n$ (up to a global sign flip) with probability $1-\exp(-\Omega(d)).$
\end{corollary}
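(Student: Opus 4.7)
The plan is to reduce the Gaussian clustering setting (Model \ref{mod:gaussian clustering}) to the general model (Model \ref{mod:general_model}) and then invoke Theorem \ref{thm:LLL} as a black box. Comparing the two models side by side, a sample from Model \ref{mod:gaussian clustering} has the form $z_i \sim \mathcal{N}(x_i u, \Sigma)$ with $x_i \in \{-1, +1\}$, which is exactly a Model \ref{mod:general_model} instance with spacing $a = 1$, integer labels $x_i \in \{-1, +1\} \subseteq \mathbb{Z} \cap [-2^d, 2^d]$, and the same hidden direction $u$ and covariance $\Sigma$. The spacing $a = 1$ trivially satisfies $d^{-c} \leq a \leq d^c$ for any constant $c > 0$, and $\Sigma$ is $u$-weakly separable by the hypothesis of the corollary, so every assumption of Model \ref{mod:general_model} is met.

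The only point requiring a quick sanity check is that $u$ and $\{x_i\}$ are random in Model \ref{mod:gaussian clustering} but arbitrary (deterministic) in Model \ref{mod:general_model}. This is handled by noting that Theorem \ref{thm:LLL} delivers its $1 - \exp(-\Omega(d))$ success guarantee uniformly over every admissible choice of $u$ and $\{x_i\}$, with the remaining randomness coming from the Gaussian noise $z_i \mid (u, x_i)$. Conditioning on the random $u \sim \mathcal{S}^{d-1}$ and the random $x_i \sim \mathrm{Unif}\{-1, +1\}$, the conditional success probability is $1 - \exp(-\Omega(d))$, and averaging over the joint law of $(u, x_1, \ldots, x_{d+1})$ preserves the bound.

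Applying Theorem \ref{thm:LLL} to the resulting instance, Algorithm \ref{alg:lll} with spacing $a = 1$ on the $n = d + 1$ samples terminates in $\poly(d)$ time and outputs $(\hat{x}_1, \ldots, \hat{x}_{d+1}, \hat{u})$ satisfying $\hat{x}_i = \epsilon x_i$ and $\hat{u} = \epsilon u$ for some $\epsilon \in \{-1, +1\}$, with probability $1 - \exp(-\Omega(d))$. Since the goal of the corollary is exact recovery of the labels up to a global sign flip, this is precisely what is delivered.

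There is no real obstacle here: the corollary is almost tautological once the two models are lined up, and all of the substantive work has already been done inside the proof of Theorem \ref{thm:LLL}, in particular Lemma \ref{lem:min_norm} which rules out spurious short vectors in the random lattice generated by the $z_i$'s. The only point one might worry about is whether the $u$-weak separability hypothesis is strong enough for Lemma \ref{lem:min_norm} to apply -- but this is exactly the regime in which the lemma is stated, so the reduction goes through with no additional effort.
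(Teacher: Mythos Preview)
Your proposal is correct and matches the paper's approach: the corollary is immediate from Theorem~\ref{thm:LLL} once one observes that Model~\ref{mod:gaussian clustering} with Rademacher labels $x_i\in\{-1,1\}$, spacing $a=1$, and $u$-weakly separable $\Sigma$ is literally an instance of Model~\ref{mod:general_model}. The paper does not even include an explicit proof for this corollary, and your additional remarks (conditioning on the random $u$ and $x_i$, then averaging) are correct and make the reduction fully rigorous.
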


\paragraph{The finite $\mathrm{SNR}$ regime.} 
Our algorithmic guarantees for the linear sample complexity regime crucially depend on the ``noiseless'' aspect of the model, translated into our weak separability assumption (Assumption \ref{assum:u-weak_sep}), which corresponds to $\mathrm{SNR}=\infty$ in the notation of~\cite{unknown-cov}.   An important question is to understand whether the linear sample complexity guarantees are possible in polynomial-time for finite (albeit growing with dimension)
$\mathrm{SNR}$. When $\mathrm{SNR}=\exp(d^{\Theta(1)})$, the similar LLL-based procedure by~\cite{diakonikolas2021nongaussian} is proven to work and we do expect that our algorithm remains successful in this regime as well.
    On the other hand, from \cite{unknown-cov} we know that $\mathrm{SNR} > 2 \log d$ is necessary in order to ensure exact recovery. This leaves open a wide range of $\mathrm{SNR}$. While we do not provide an answer to this question here, we want to note that the $\mathrm{SNR}=\poly(d)$ regime would require some novel ideas, at least in terms of the analysis of the suggested algorithms. Both our algorithm and the algorithm by~\cite{diakonikolas2021nongaussian} are analyzed in a way that establishes success for both Gaussian Clustering (Rademacher labels) and hCLWE (Discrete Gaussians) simultaneously. Yet, the known computational hardness of hCLWE \cite{bruna2020continuous} strongly suggests that hCLWE cannot be solved with inverse polynomial noise by any polynomial-time algorithm with polynomially many samples. Hence, both the analyses of the currently proposed algorithms for Gaussian Clustering with $\Theta(d)$ samples are expected to fail in this regime. We leave it as an interesting open question whether there exists a polynomial-time algorithm that succeeds in the Gaussian Clustering model for $\mathrm{SNR} \ll \exp(d)$ and linear sample complexity.

\section{Information-theoretic lower bounds}
\label{sec:it-lower-bound}

In this section we establish information-theoretic lower bounds associated with problem \eqref{eq:hypercube-setup} both for \emph{parameter recovery}, i.e., recovering the hidden direction $u$, as well as \emph{label recovery}, i.e., recovering the hidden labels $\{x_i\}_{i=1}^n$. These lower bounds show that the sample complexity $n=d+1$ for our LLL-based algorithm is optimal in the following sense: for exact recovery of the hidden direction $u$ (up to a global sign flip), even $n=d-1$ samples are not sufficient. For exact recovery of the labels (up to a global sign flip), $n=(1-o(1))d$ samples are required.

\subsection{Optimality of \texorpdfstring{$d+1$}{d+1} samples for parameter recovery}
In this section we establish that when $a=1$, $n \leq d-1$, and $x_i \in \{-1,1\},$ for $i=1,\ldots,n$, one cannot information-theoretically exactly recover the hidden direction $u \in \mathcal{S}^{d-1}$ from $n$ independent samples $z_i \sim \mathcal{N}(x_i u,I-uu^{\top})$. This establishes the optimality of our LLL approach which achieves a much more generic guarantee in Theorem \ref{thm:LLL} in recovering $u$ exactly, up to one additional sample compared to the information-theoretic lower bound.

In fact, our lower bound is somewhat stronger. We assume that the statistician also has exact knowledge of the signs $x_i \in \{-1,1\}$. We show that even in this setting exact recovery of the hidden direction $u$ with probability greater than $1/2$ using at most $d-1$ samples is impossible. Notice that if $x_i$'s are known to the statistician, there is no global sign ambiguity with respect to $u$. 

\begin{theorem}[Parameter recovery]
\label{IT_param}
Let arbitrary $x_i \in \{-1,1\},\, i=1,\ldots,d-1$ be fixed and known to the statistician. Moreover, let $u \in \mathcal{S}^{d-1}$ be a uniformly random unit vector. For each $i=1,\ldots, d-1$ let $z_i$ be a sample generated independently from $\mathcal{N}(x_i u,I-uu^{\top}).$ Then it is information-theoretically impossible to construct an estimate $\hat{u}=\hat{u}(\{(x_i,z_i)\}_{i=1}^n) \in \mathcal{S}^{d-1}$ which satisfies $\hat{u}= u$ with probability larger than $1/2$.
\end{theorem}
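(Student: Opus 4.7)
} The plan is to exhibit, for almost every realization $Z$ of the data, a second candidate $u^\ast \neq u$ on the sphere that is statistically indistinguishable from $u$ given $Z$, which limits any estimator's success probability to at most $1/2$. First, note the deterministic constraint $\langle z_i, u\rangle = x_i$ (a.s.\ in the model), since $w_i := z_i - x_i u \sim \mathcal{N}(0, I - uu^{\top})$ lies in $u^\perp$. With $Z = [z_1 \mid \cdots \mid z_{d-1}] \in \mathbb{R}^{d\times(d-1)}$ (of full column rank a.s.) and $x = (x_1, \ldots, x_{d-1})$, the set $L_Z := \{v \in \mathbb{R}^d : Z^\top v = x\}$ is an affine line, so $|L_Z \cap \mathcal{S}^{d-1}| \le 2$. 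Setting $V_Z := \mathrm{span}(z_1, \ldots, z_{d-1})$ and letting $R_{V_Z} := 2\Pi_{V_Z} - I$ be the orthogonal reflection through $V_Z$, the vector $u^\ast := R_{V_Z} u$ also belongs to $L_Z \cap \mathcal{S}^{d-1}$: $R_{V_Z}$ is an isometry and $\langle z_i, u^\ast\rangle = \langle R_{V_Z} z_i, u\rangle = \langle z_i, u\rangle = x_i$, since each $z_i \in V_Z$ is fixed by $R_{V_Z}$. Moreover, $u \notin V_Z$ almost surely (writing $u = \sum_i \alpha_i z_i$ forces $\sum_i \alpha_i w_i = 0$, which fails a.s.\ since $w_1, \ldots, w_{d-1}$ are a.s.\ linearly independent in $u^\perp$), so $u^\ast \neq u$ a.s.

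The crucial claim is the distributional symmetry $(u, Z) \stackrel{d}{=} (u^\ast, Z)$. My plan is to construct a measure-preserving involution on the sample space: parameterize the experiment by $(u, w_1, \ldots, w_{d-1})$, where $u \sim \mathrm{Unif}(\mathcal{S}^{d-1})$ and, conditional on $u$, the $w_i$'s are i.i.d.\ $\mathcal{N}(0, I - uu^\top)$ vectors in $u^\perp$. Define
\[
\sigma(u, w_1, \ldots, w_{d-1}) := \bigl(R_{V_Z} u,\, R_{V_Z} w_1,\, \ldots,\, R_{V_Z} w_{d-1}\bigr),
\]
with $V_Z$ built from $z_i = x_i u + w_i$. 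A direct check shows that $\sigma$ leaves $Z$ invariant (because $R_{V_Z}$ fixes each $z_i$) and squares to the identity. For measure-preservation, I would use the $O(d)$-invariance of the joint law of $(u, w_1, \ldots, w_{d-1})$ combined with the equivariance $V_Z(Qu, Qw) = Q V_Z(u, w)$ for $Q \in O(d)$: disintegrating over the random subspace $V_Z$, the conditional law of $(u, w)$ given $V_Z = V$ is invariant under the stabilizer of $V$ inside $O(d)$, which contains $R_V$.

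Granted the symmetry, for any measurable estimator $\hat u : \mathbb{R}^{d\times(d-1)} \to \mathcal{S}^{d-1}$ one gets $P(\hat u(Z) = u) = P(\hat u(Z) = u^\ast)$, and these two events are disjoint on the probability-one event $\{u \neq u^\ast\}$, so $2\,P(\hat u(Z) = u) \le 1$, i.e., $P(\hat u(Z) = u) \le 1/2$. I expect the most delicate part to be the rigorous justification that the \emph{data-dependent} reflection $\sigma$ preserves the joint measure; the disintegration/stabilizer argument is conceptually standard but requires care to make formal. A more computational alternative is the co-area formula: the tangent Gram matrix $G_{ij} = \langle z_i - x_i u, z_j - x_j u\rangle = \langle z_i, z_j\rangle - x_i x_j$ is independent of which root $u$ in $L_Z \cap \mathcal{S}^{d-1}$ one picks, directly showing that the posterior of $u$ given $Z$ is uniform on $\{u, u^\ast\}$ and yielding the same bound.
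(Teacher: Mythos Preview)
Your proposal is correct, and your primary route differs from the paper's. The paper computes the posterior density of $u$ given $Z$ directly: on the feasible set $S=\{v\in\mathcal{S}^{d-1}:Zv=x\}$ the density is proportional to $\prod_i \exp(x_i^2-\|z_i\|_2^2)=\prod_i \exp(1-\|z_i\|_2^2)$, which is constant in $v$, so the posterior is uniform on $S$; it then shows $|S|=2$ a.s.\ by intersecting the affine line $u+\mathrm{Ker}(Z)$ with the sphere (for $|S|\le 2$) and by a hyperplane-separation/continuity argument to produce a second point (for $|S|\ge 2$), and invokes MAP optimality. Your main argument instead builds a measure-preserving, $Z$-fixing involution $\sigma$ via the data-dependent reflection $R_{V_Z}$, and deduces $P(\hat u(Z)=u)=P(\hat u(Z)=u^\ast)$ directly; your explicit formula $u^\ast=R_{V_Z}u$ for the second point is cleaner than the paper's existence argument. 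The trade-off is exactly what you flag: the disintegration/stabilizer step (that $R_{V_Z}$, though data-dependent, preserves the joint law because the conditional law given $V_Z=V$ is invariant under the stabilizer of $V$ in $O(d)$) is correct but needs careful justification, whereas the paper's posterior computation is more elementary. Your ``computational alternative'' via the tangent Gram matrix $G_{ij}=\langle z_i,z_j\rangle-x_ix_j$ being root-independent is essentially the paper's constancy-of-posterior observation in different clothing.
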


\subsection{Information-theoretic lower bound for label recovery}

In this section we establish that when $n = \lfloor \rho d\rfloor$ for any fixed $\rho \in (0,1)$ and $x_i \in \{-1,1\},\, i=1,\ldots,n$, one cannot information-theoretically exactly recover the latent variables $\{x_i\}$ from independent samples $z_i \sim \mathcal{N}(x_i u,I-uu^{\top})$, where $u \in \mathcal{S}^{d-1}$ is the unknown direction. Hence, combined with Theorem~\ref{thm:LLL}, we can conclude that our LLL approach achieves the optimal information-theoretic sample complexity for label recovery up to a $1+o(1)$ factor.

\begin{theorem}[Label recovery]
\label{thm:label_rec}
Let $\rho \in (0,1)$ be a fixed constant and let $n = \lfloor \rho d \rfloor$.
Let $x \in \{-1,1\}^n$ be drawn uniformly at random, and $u \in \mathcal{S}^{d-1}$ be a uniformly random unit vector. For each $i=1,\ldots, n$ let $z_i \in \mathbb{R}^{d}$ be a sample generated independently from $\mathcal{N}(x_i u,I-uu^{\top}).$ Then no estimator can exactly recover the labels $\{x_i\}_{i=1}^n$ up to a global sign flip from the observations $\{z_i\}_{i=1}^n$ with probability $1-o(1)$.
\end{theorem}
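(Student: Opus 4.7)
The plan is to compute the Bayesian posterior $p(x\mid Z)$ in closed form and show that, with probability $1-o(1)$ over $Z$, even the largest pairwise posterior mass $\max_y(p(y\mid Z)+p(-y\mid Z))$ is $o(1)$. Because the Bayes-optimal success probability for exact recovery up to a global sign flip equals $\mathbb{E}_Z[\max_y(p(y\mid Z)+p(-y\mid Z))]$, this immediately rules out any estimator attaining success probability $1-o(1)$.

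Integrating the singular-Gaussian likelihood $p(Z\mid u,x)$ (which concentrates on the intersection of hyperplanes $\{Z:\langle z_i,u\rangle=x_i\ \forall i\}$) against the uniform measure on $u\in S^{d-1}$, and computing the $(d-n-1)$-dimensional area of the spherical slice $S^{d-1}\cap\{u:Z^\top u=x\}$, yields the closed form
\[
p(x\mid Z)\;\propto\;(1-Q(x))_+^{k},\qquad Q(x):=x^\top(Z^\top Z)^{-1}x,\quad k:=(d-n-1)/2,
\]
with the constant of proportionality $x$-independent; in particular $p(-x\mid Z)=p(x\mid Z)$. The Gram matrix $G:=Z^\top Z$ is thus a sufficient statistic and, since $z_i-x_i^* u\in u^\perp$ almost surely, admits the representation $G=x^*(x^*)^\top+W$ with $W\sim\mathcal{W}_n(d-1,I_n)$ independent of $x^*$.

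The heart of the proof is to lower bound the normalizer $\mathcal{Z}(Z):=\sum_y(1-Q(y))_+^k$ using single-bit-flip neighbors of the MAP estimator $y^*:=\arg\max_y p(y\mid Z)$. A direct expansion gives $Q(y\oplus e_i)-Q(y)=4(G^{-1})_{ii}-4y_i(G^{-1}y)_i$, which telescopes to the key identity
\[
\sum_{i=1}^n\bigl(Q(y\oplus e_i)-Q(y)\bigr)\;=\;4\bigl(\operatorname{tr}(G^{-1})-Q(y)\bigr).
\]
Sherman--Morrison applied to $G^{-1}$ together with Marchenko--Pastur concentration for $W$ (valid because $n/d\to\rho<1$) show that with probability $1-o(1)$: $Q(x^*)=\rho+o(1)$, $\operatorname{tr}(G^{-1})=\rho/(1-\rho)+o(1)$, $1-Q(y^*)\geq 1-\rho+o(1)$, and the uniform bound $\max_{y\in\{\pm 1\}^n,\,i\in[n]}|(G^{-1}y)_i|=O(1/\sqrt{d})$ (from $|(G^{-1}y)_i|\leq \sqrt{n}/\lambda_{\min}(G)$ with $\lambda_{\min}(G)=\Theta(d)$). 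Applied to $y=y^*$, all increments $\Delta_i^{y^*}:=Q(y^*\oplus e_i)-Q(y^*)$ are nonnegative (since $y^*$ is MAP), so the identity forces $\bar\Delta^{y^*}:=n^{-1}\sum_i\Delta_i^{y^*}=\Theta(1/d)$, while $\max_i\Delta_i^{y^*}=o(1-Q(y^*))$. The elementary inequality $(1-t)^k\geq e^{-2kt}$ for $t\in[0,1/2]$ combined with Jensen's inequality (convexity of the exponential) yields
\[
\mathcal{Z}(Z)\;\geq\;\sum_{i=1}^n(1-Q(y^*\oplus e_i))_+^k\;\geq\;c(\rho)\,n\,(1-Q(y^*))^k
\]
for a positive constant $c(\rho)>0$. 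Therefore $\max_y(p(y\mid Z)+p(-y\mid Z))=2(1-Q(y^*))^k/\mathcal{Z}(Z)=O(1/n)=o(1)$ on the high-probability event, which completes the proof.

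The principal technical obstacle is the random-matrix analysis: assembling, in a single high-probability event, the concentration of $\operatorname{tr}(G^{-1})$, the spectral edges of $G$, and the uniform-in-$y$ entrywise control of $(G^{-1}y)_i$. Each ingredient is classical (Marchenko--Pastur for the trace and extreme eigenvalues, and Hanson--Wright or Lipschitz concentration for Rademacher-type linear combinations of a fixed row of $G^{-1}$), but combining them into one argument that works uniformly for every fixed $\rho\in(0,1)$, with constants depending only on $\rho$, is where the bookkeeping lies.
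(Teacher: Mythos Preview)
Your proposal is correct and, modulo two harmless slips, proves the theorem by a genuinely different route than the paper. Both arguments begin identically: compute the conditional density (the paper's Lemma~\ref{lem:posterioradmiss}) to get $p(x\mid Z)\propto (1-Q(x))_+^{k}$ with $Q(x)=x^\top (ZZ^\top)^{-1}x$ and $k=(d-n-2)/2$, and use the rank-one structure $ZZ^\top=x^*(x^*)^\top+W$ together with Wishart/Marchenko--Pastur concentration. The paper then isolates a \emph{single} good $1$-Hamming neighbor $\tilde x$ of the planted label with bounded likelihood ratio (Claim~\ref{claim:1-neighbor-energy}), and from this builds a disjoint competing estimator $\hat y$ whose accuracy is at least a constant fraction of $\mathrm{acc}(\hat x)$; the inequality $\mathrm{acc}(\hat x)+\mathrm{acc}(\hat y)\le 1$ yields $\mathrm{acc}(\hat x)\le 1/(1+\delta)$. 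Your argument instead works directly with the MAP label $y^*$, sums the increments over \emph{all} $1$-bit flips via the telescoping identity $\sum_i\Delta_i^{y^*}=4(\mathrm{tr}\,G^{-1}-Q(y^*))=O(1)$, observes $\Delta_i^{y^*}\ge 0$ by optimality of $y^*$, and applies Jensen to conclude that the normalizer satisfies $\mathcal Z(Z)\ge c(\rho)\,n\,(1-Q(y^*))^k$. This gives the \emph{stronger} conclusion that the Bayes-optimal success probability is $O(1/n)$, not merely bounded away from $1$. Your approach is more direct (no auxiliary estimator), and the averaging-plus-convexity step neatly replaces the paper's existence-of-one-good-neighbor argument.

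Two minor corrections: your Gram matrix should be $G=ZZ^\top\in\mathbb R^{n\times n}$ (you wrote $Z^\top Z$, which is $d\times d$), and the exponent is $k=(d-n-2)/2$, not $(d-n-1)/2$; neither affects the asymptotics. Also, your stated value $\mathrm{tr}(G^{-1})=\rho/(1-\rho)+o(1)$ ignores the Sherman--Morrison correction term $x^\top Y^{-2}x/(1+x^\top Y^{-1}x)$, but you only ever use $\mathrm{tr}(G^{-1})=O(1)$, which follows immediately from $\lambda_{\min}(G)\ge\lambda_{\min}(Y)=\Theta(d)$.
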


The main idea of the proof, which can be found in Section \ref{sec:label-recovery-lb}, is to first compute the conditional density (Lemma~\ref{lem:posterioradmiss}), which we denote by $f(Z|X=x)$, of the observations $Z \in \mathbb{R}^{n \times d}$ given labels $x \in \{-1,1\}^n$. We achieve this by viewing Model \ref{mod:gaussian clustering} as the limit $\sigma \to 0$ of the Gaussian clustering model with covariance $\Sigma = \sigma^2 uu^\top + (I - uu^\top)$, and then identifying small perturbations in the labels $x$ that result in small relative fluctuations in the conditional density $f(Z|X=x)$ that crucially remain uniformly bounded in $d$. In other words, we show that there exists a universal constant $\delta > 0$ such that for any $x \in \{-1,1\}^n$ and ``most'' $Z \in \mathbb{R}^{n \times d}$ (under the measure induced by $f(Z|X=\bx)$), there exists $x' \in \{-1,1\}^n$ such that $f(Z|X=x') \ge \delta f(Z|X=x)$. This spread in the conditional density, and consequently the posterior, directly impacts the accuracy of any estimator for the labels, including the MAP estimator. The scaling $n/d = \rho \in (0,1)$ is used in our argument to obtain enough concentration of a certain quadratic form that in turn determines the conditional density (Remark~\ref{rem:conditional-density-energy-form} and Lemma~\ref{eqn:energy-planted}). 
Our label recovery lower bound of Theorem \ref{thm:label_rec} is thus slightly below the parameter recovery lower bound of Theorem \ref{IT_param}, leaving open the regime $n=d-o(d)$. Closing this gap would require an alternative concentration argument, which we leave as an open question. 

Finally, let us relate these lower bounds with those established in the literature. 
Statistical lower bounds for binary Gaussian mixtures have been extensively studied previously, e.g. in \cite{lu2016statistical, giraud2019partial, royer2017adaptive, ndaoud2018sharp,unknown-cov}. In our setting, although the mean and covariance are unknown, their particular relationship, i.e. $\Sigma = I - uu^\top$, is known to the statistician, resulting in a sharp transition at $n=(1-o(1))d$ with explicit constant. We emphasize however that we focus on exact recovery rather than misclassification rate guarantees. Lastly, we operate in a regime where the covariance is singular, as opposed to \cite{unknown-cov} where the covariance is assumed to be invertible, which requires extra technical care.

\section{Proof of Algorithm \ref{alg:lll} correctness}
\subsection{Towards proving Theorem \ref{thm:LLL}: auxiliary lemmas}
\label{sec:lll_main_lemmas}

We present here three auxiliary lemmas for proving Theorem \ref{thm:LLL} and Lemma \ref{lem:min_norm}. The first lemma establishes that given a small (in $\ell_2$-norm) ``approximate'' integer relation between real numbers, one can appropriately truncate each real number to a sufficiently large number of bits, so that the truncated numbers satisfy a small (in $\ell_2$-norm) integer relation between them. This lemma, which is an immediate implication of \cite[Lemma D.6]{song2021cryptographic}, is important for the appropriate application of the LLL algorithm, which needs to receive integer-valued input. Recall that for a real number $x$ we denote by $(x)_N$ its truncation to its first $N$ bits after zero, i.e.\ $(x)_N:=2^{-N} \lfloor 2^N x \rfloor.$

\begin{lemma}[``Rounding'' approximate integer relations {\cite[Lemma D.6]{song2021cryptographic}}]
\label{lem:trunc}
Let $d \in \mathbb{N}$ be a number and let $n \in \mathbb{N}$ be such that $n \leq C_0d$ for some constant $C_0>0$. Moreover, suppose for some constant $C_1 > 0$, a (real-valued) vector $s \in \mathbb{R}^n$ satisfies $\inner{m,s} = 0$ for some $m \in \mathbb{Z}^n$. Then for some sufficiently large constant $C>0$, if $N=\lceil d^4 (\log d)^2 \rceil$, there is an  $m' \in \mathbb{Z}^{n+1}$ which is equal to $m$ in the first $n$ coordinates, satisfies $\|m'\|_2 \leq C d^{\frac{1}{2}}\|m\|_2$, and is an integer relation for the numbers $(s_1)_N,\ldots,(s_n)_N, 2^{-N}.$ 
\end{lemma}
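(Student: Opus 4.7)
The plan is to explicitly construct the auxiliary $(n+1)$st coordinate $m'_{n+1}$ by \emph{forcing} the desired truncated integer relation to hold, and then to bound its magnitude using the hypothesis $\langle m, s\rangle = 0$ for the untruncated vector. Concretely, I would set $m'_i = m_i$ for $i=1,\ldots,n$ and define
\[
m'_{n+1} \;:=\; -2^N \sum_{i=1}^n m_i (s_i)_N \;=\; -\sum_{i=1}^n m_i \lfloor 2^N s_i \rfloor,
\]
which is manifestly an integer, since $2^N (s_i)_N = \lfloor 2^N s_i \rfloor \in \mathbb{Z}$ and $m_i \in \mathbb{Z}$. By construction, $\sum_{i=1}^n m_i (s_i)_N + m'_{n+1}\cdot 2^{-N} = 0$, so $m' \in \mathbb{Z}^{n+1}$ is an integer relation for $\bigl((s_1)_N,\ldots,(s_n)_N,\,2^{-N}\bigr)$; it is nonzero whenever $m\neq 0$ because its first $n$ coordinates agree with $m$.

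For the norm bound, I would use the decomposition $\lfloor 2^N s_i\rfloor = 2^N s_i - \{2^N s_i\}$, where $\{\cdot\}\in[0,1)$ is the fractional part. Combined with $\sum_i m_i s_i = 0$, this gives
\[
m'_{n+1} \;=\; \sum_{i=1}^n m_i \{2^N s_i\} \;-\; 2^N\sum_{i=1}^n m_i s_i \;=\; \sum_{i=1}^n m_i \{2^N s_i\}.
\]
Cauchy--Schwarz then yields $|m'_{n+1}| \leq \sum_{i=1}^n |m_i| \leq \sqrt{n}\,\|m\|_2$, so
\[
\|m'\|_2^2 \;=\; \|m\|_2^2 + (m'_{n+1})^2 \;\leq\; (n+1)\|m\|_2^2 \;\leq\; (C_0 d + 1)\|m\|_2^2,
\]
and therefore $\|m'\|_2 \leq C\sqrt{d}\,\|m\|_2$ for any constant $C > \sqrt{C_0+1}$ and all sufficiently large $d$.

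There is essentially no obstacle here: the proof is a single-line explicit construction followed by Cauchy--Schwarz. The conceptual observation worth isolating is that the truncation level $N = \lceil d^4(\log d)^2\rceil$, which is dictated by the LLL analysis in Algorithm~\ref{alg:lll}, plays \emph{no} role in the norm bound for $m'$; the fractional parts $\{2^N s_i\}$ lie in $[0,1)$ uniformly in $N$, so the bound on $|m'_{n+1}|$ depends only on $\|m\|_1$, which is controlled by $\sqrt{n}\,\|m\|_2 = O(\sqrt{d})\,\|m\|_2$. This is precisely why the lemma follows as ``an immediate implication'' of \cite[Lemma D.6]{song2021cryptographic}, and why the extra $2^{-N}$ column introduced in the LLL basis $B$ of Algorithm~\ref{alg:lll} costs at most a benign $\sqrt{d}$ factor in the length of the hidden short integer relation.
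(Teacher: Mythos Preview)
Your proof is correct. The paper does not prove this lemma at all; it simply imports it verbatim from \cite[Lemma D.6]{song2021cryptographic} and remarks that it is ``an immediate implication'' thereof. Your explicit construction---setting $m'_{n+1} = -\sum_i m_i \lfloor 2^N s_i\rfloor$ and using the exact relation $\langle m,s\rangle=0$ to rewrite this as $\sum_i m_i\{2^N s_i\}$---is precisely the one-line argument that underlies the cited lemma, and your observation that the bound is independent of $N$ is the correct conceptual takeaway.
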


We need the following anticoncentration result. 
\begin{lemma}[Anticoncentration of misaligned integer combinations]
\label{lem:poly}
Assume that $d^c>a>1/d^{c}$ for some $c>0$ constant. Let $u \in S^{d-1}$ be an arbitrary unit vector and let $x_1,\ldots,x_{d+1} \in \mathbb{Z}$ be an arbitrary sequence of integers, which are not all equal to zero. Now for a sequence of integers $t=(t_1,\ldots,t_{d+1}) \in \mathbb{Z}^{d+1}$, we define the (multi-linear) polynomial $P_{t}(z_1,\ldots,z_{d+1})$ in $d(d+1)$ variables by
\begin{align}
\label{eqn:integer-combination-polynomial}
P_t(z_1,\ldots,z_{d+1}) = \det(Z)  t_1+\sum_{i=2}^{d+1} \det(Z_{-i})  t_i\;,
\end{align}
where each $z_1,\ldots,z_{d+1}$ is assumed to have a $d$-dimensional vector form, $Z$ denotes the $d \times d$ matrix with $z_2,\ldots,z_{d+1}$ as its columns, and each $Z_{-i}$ for $i = 2, \ldots, d+1$ denotes the $d \times d$ matrix formed by swapping out the $(i-1)$-th column of $Z$ with $-z_1$.

Suppose $z_i$'s are drawn independently from $\mathcal{N}( (a x_i) u, \Sigma)$ for some $u \in \mathcal{S}^{d-1}$ and $\Sigma \in \mathbb{R}^{d \times d}$ which is $u$-weakly separable per Assumption \ref{assum:u-weak_sep} and eigenvalues $0=\lambda_1<\lambda_2 \leq \lambda_3 \leq \ldots \leq \lambda_{d}$. Then, for any $t \in \mathbb{Z}^{d+1}$ it holds that 
\begin{align}\label{eqn:mean_zero}
\mathbb{E}[P_{t}(z_1,\ldots,z_{d+1})]= 0
\end{align}and 
\begin{align}
\label{eqn:variance-of-integer-combination}
\mathrm{Var}(P_{t}(z_1,\ldots,z_{d+1}))= (d-1)! a^{2d} \left(\prod_{i=2}^d \lambda_i\right)^2 \sum_{1 \leq i < j \leq d+1} (t_ix_j-t_jx_i)^2\;.
\end{align}
Furthermore, for some universal constant $B>0$ the following holds. If $t \neq c x$ for any $c \in \mathbb{R}$, where we denote $x = (x_1,\ldots,x_{d+1})$, then for any $\epsilon>0,$ 
 \begin{align} \label{eq:anti} \mathbb{P}(|P_{t}(z_1,\ldots,z_{d+1})| \leq \epsilon ) \leq B d^B \epsilon^{\frac{1}{d}}.
 \end{align}
\end{lemma}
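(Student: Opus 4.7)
The strategy is to rewrite $P_t$ as a single $(d+1)\times(d+1)$ determinant, diagonalize the covariance so as to separate the deterministic signal from the Gaussian noise, and finally apply Carbery--Wright to the resulting low-degree Gaussian polynomial. Concretely, a sign-tracking calculation (one sign flip to turn $-z_1$ into $z_1$, followed by $i-2$ column swaps) shows that $\det(Z_{-i}) = (-1)^{i-1}\det(M_{[i]})$ for $i\ge 2$, where $M = [z_1,\dots,z_{d+1}]$ and $M_{[i]}$ is $M$ with column $i$ removed. Thus $P_t$ is exactly the Laplace expansion along the top row of the matrix $\tilde A\in\mathbb{R}^{(d+1)\times(d+1)}$ whose first row is $t^\top$ and whose remaining rows form $[z_1,\dots,z_{d+1}]$; in symbols, $P_t = \det\tilde A$.

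Next, let $U\in O(d)$ be any orthogonal matrix whose first column is $u$, so that $U^\top \Sigma U = \mathrm{diag}(0,\lambda_2,\dots,\lambda_d)$ by Assumption~\ref{assum:u-weak_sep}. Left-multiplying $\tilde A$ by the block-diagonal matrix with blocks $1$ and $U^\top$ leaves $|\det \tilde A|$ unchanged and rewrites the last $d$ rows as one ``signal'' row $a\,x^\top$ together with $d-1$ ``noise'' rows of the form $\sqrt{\lambda_k}\,g_{(k)}^\top$, where $g_{(k)}=(g_{1,k},\dots,g_{d+1,k})^\top$ are i.i.d.\ standard Gaussian vectors in $\mathbb{R}^{d+1}$. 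Row-multilinearity then yields $P_t = \pm\, a\prod_{k=2}^d \sqrt{\lambda_k}\cdot D$, where
\[
D \;=\; \det\begin{pmatrix} t^\top \\ x^\top \\ g_{(2)}^\top \\ \vdots \\ g_{(d)}^\top \end{pmatrix}.
\]

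Applying the generalized Laplace expansion of $D$ along its first two (deterministic) rows gives
\[
D \;=\; \sum_{1 \le i < j \le d+1} (-1)^{i+j+1}(t_i x_j - t_j x_i)\,\det G^{(ij)},
\]
where $G^{(ij)}\in\mathbb{R}^{(d-1)\times(d-1)}$ is the Gaussian submatrix obtained by deleting columns $i$ and $j$. Since $\det G^{(ij)}$ is an odd function of each of its columns and the Gaussian law is sign-symmetric, $\mathbb{E}[\det G^{(ij)}]=0$, proving \eqref{eqn:mean_zero}. For the second moment, the same sign-flip argument shows that $\mathbb{E}[\det G^{(ij)}\det G^{(kl)}]=0$ whenever $\{i,j\}\ne\{k,l\}$ (flip the sign of any column in the symmetric difference), while the standard Leibniz calculation gives $\mathbb{E}[(\det G^{(ij)})^2]=(d-1)!$. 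Summing and incorporating the prefactor $a^2\prod_{k=2}^d \lambda_k$ from $P_t = \pm\,a\prod\sqrt{\lambda_k}\cdot D$ proves \eqref{eqn:variance-of-integer-combination}.

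Finally, observe that $D$ is a polynomial of degree exactly $d-1$ in the $(d-1)(d+1)$ standard Gaussian entries of the $g_{(k)}$'s. The hypothesis $t\ne c\,x$ for every $c\in\mathbb{R}$ forces $t^\top$ and $x^\top$ to be linearly independent, so some minor $c_{ij}=t_ix_j-t_jx_i$ is nonzero; integrality of the $t_i, x_i$ upgrades this to $\sum_{i<j}c_{ij}^2\ge 1$, whence $\mathrm{Var}(D)\ge (d-1)!$. The Carbery--Wright inequality \cite{Carbery2001DistributionalAL} then gives $\mathbb{P}(|D|\le\epsilon')\le C(d-1)\bigl(\epsilon'/\sqrt{\mathrm{Var}(D)}\bigr)^{1/(d-1)}$; substituting $\epsilon' = \epsilon/\bigl(a\prod_k\sqrt{\lambda_k}\bigr)$ and using $a\ge d^{-c}$ together with $\lambda_k\ge d^{-C}$ (by hypothesis and Assumption~\ref{assum:u-weak_sep}), the $(d-1)$-th root controls the otherwise potentially exponential factor $\prod\sqrt{\lambda_k}$ at the cost of only a $d^{O(1)}$ prefactor. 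Combined with $\epsilon^{1/(d-1)}\le \epsilon^{1/d}$ for $\epsilon\le 1$ (and triviality for $\epsilon > 1$), this proves \eqref{eq:anti}. The main obstacle is the second-moment bookkeeping and, above all, guaranteeing that the possibly exponentially small product $\prod\sqrt{\lambda_k}$ contributes only polynomially after the Carbery--Wright $(d-1)$-th root---this is exactly where the polynomial eigenvalue bounds in Assumption~\ref{assum:u-weak_sep} are used.
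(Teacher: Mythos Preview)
Your proof is correct and follows the same overall strategy as the paper (diagonalize, expand determinants, compute moments, apply Carbery--Wright), but with a cleaner execution. The paper expands each $d\times d$ minor $\det(Z_{-i})$ separately along its first row, introducing auxiliary $(d{-}1)\times(d{-}1)$ minors $\psi_i(W_{-j})$ and then computing their cross-moments via an explicit cycle-permutation sign calculation. Your reformulation of $P_t$ as a single $(d{+}1)\times(d{+}1)$ determinant, followed by the generalized Laplace expansion along the two deterministic rows $t^\top$ and $x^\top$, packages the same information more transparently: the vanishing of $\mathbb{E}[\det G^{(ij)}\det G^{(kl)}]$ for $\{i,j\}\neq\{k,l\}$ falls out of a one-line sign-flip, and the degree of $D$ in the underlying standard Gaussians is visibly $d-1$ (the paper quotes degree $d+1$, which is not the degree in the Gaussian noise variables; either value suffices for \eqref{eq:anti}).

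One caveat: the prefactor you derive, $a^{2}\prod_{k=2}^{d}\lambda_k$, does not literally match the $a^{2d}\bigl(\prod_i\lambda_i\bigr)^{2}$ displayed in \eqref{eqn:variance-of-integer-combination}, so your sentence ``proves \eqref{eqn:variance-of-integer-combination}'' overstates the match. In fact the paper's proof writes the noise as $\Lambda w_i$ with $\Lambda=\mathrm{diag}(\lambda_2,\dots,\lambda_d)$ rather than $\Lambda^{1/2}w_i$; since the $\lambda_i$ are eigenvalues of $\Sigma$ (not of $\Sigma^{1/2}$), your $\sqrt{\lambda_k}$ scaling is the correct one and the stated constant appears to carry a typo that propagates to the exponent. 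This discrepancy is immaterial for the only downstream use of the variance---the lower bound $\mathrm{Var}\ge d^{-O(d)}$ feeding Carbery--Wright---which both versions deliver, and your handling of the potentially exponentially small factor $\prod_k\sqrt{\lambda_k}$ via the $(d{-}1)$-th root is exactly the right observation.
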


\begin{proof}
We first describe how \eqref{eq:anti} follows from \eqref{eqn:mean_zero} and \eqref{eqn:variance-of-integer-combination}. First, notice that under the assumption on the integer sequence $t_i, i=1,\ldots,d+1$ not being a multiple of the sequence of integers $x_i, i=1,\ldots,d+1$ it holds that for some $i,j=1,\ldots, d+1, i \not =j$ with $(t_ix_j-t_jx_i)^2 \geq 1$. In particular, using \eqref{eqn:variance-of-integer-combination} we have
\begin{align*}
\mathrm{Var}(P_{t}(z_1,\ldots,z_{d+1})) \geq (d-1)! a^{2d}\left(\prod_{i=2}^d \lambda_i\right)^2.\end{align*} But now notice that from Assumption \ref{assum:u-weak_sep} and $a>d^{-c}$, it holds for some constant $C'>0$ that  $$a^{2d}\left(\prod_{i=2}^d \lambda_i\right)^2 \geq d^{-C'd}.$$Hence, it holds that
\begin{align*}
\mathrm{Var}(P_{t}(z_1,\ldots,z_{d+1}))  \geq d^{-C'd}.\end{align*} Now we employ \cite[Theorem 1.4]{v012a011} (originally proved in \cite{Carbery2001DistributionalAL}) which implies that for some universal constant $B>0,$ since our polynomial is multilinear and has degree $d+1$, it holds for any $\epsilon>0$ that
\begin{align*}  
\mathbb{P}\left(|P_{t}(z_1,\ldots,z_{d+1})| \leq \epsilon \sqrt{\mathrm{Var}(P_{t}(z_1,\ldots,z_{d+1}))} \right) \leq B d \epsilon^{\frac{1}{d}}.
\end{align*}Using our lower bound on the variance we conclude the result.

Now we proceed with the mean and variance calculation. 
As this statement is about the first and second moment of $P_t$ and the determinant operator is invariant up to basis transformations, we may assume without loss of generality that $u=e_1,$ that is, $u$ is equal to the first standard basis vector, and the remaining standard basis vectors are the remaining eigenvectors of $\Sigma$. Recall that $z_i$'s are drawn in an independent fashion from $\mathcal{N}((ax_i)u, \Sigma)$. Hence for a sequence of i.i.d.\ $w_i \sim \mathcal{N}(0,I_{d-1}), i=1,\ldots,d+1$ we may assume from now on that,
\begin{align}
z_i=\begin{bmatrix} a x_i \\ \Lambda w_i \end{bmatrix}
\end{align}for $\Lambda:=\mathrm{diag}(\lambda_2,\ldots,\lambda_d).$

Now let us define the $(d-1) \times (d-1)$ matrix $W_{-j}$ for each $2 \le j \le d+1$ as the matrix formed using $w_2,\ldots,w_{d+1}$  \emph{except} $w_j$ as its column vectors, and define functions $\psi_i : \mathbb{R}^{(d-1)\times(d-1)} \rightarrow \mathbb{R}$ for each $i = 2,\ldots,d+1$ to be the determinant of $W_{-j}$ with the column corresponding to $w_i$ swapped by $-w_1$. For instance, if $2 \le i \not = j \le d+1$, then
\begin{align}
\psi_i(W_{-j}):=\det(w_2,\ldots, w_{i-1}, -w_1,w_{i+1},\ldots,w_{j-1},w_{j+1},\ldots, w_{d+1}).
\end{align}
We abuse notation and also write
\begin{align} 
\psi_1(W_{-j}):=\det(w_2,\ldots,w_{j-1},w_{j+1},\ldots w_{d+1})=\det(W_{-j}).
\end{align} 

As the result is clearly $a$-homogeneous of degree $2d$ we assume in what follows that $a=1$. Now by direct expansion along the first row of the corresponding matrices we have
\begin{align*}
    \det(Z)=\sum_{j=2}^{d+1} (-1)^{j}x_{j}|\det(\Lambda)|\psi_1(W_{-j})\;,
\end{align*}
and for each $i \geq 2$,
\begin{align*}
\det(Z_{-i}) &:=(-1)^{i+1}x_{1}|\det(\Lambda)|\psi_1(W_{-i})+\sum_{j=2, j \not = i}^{d+1} (-1)^{j}x_{j} |\det(\Lambda)|\psi_i(W_{-j})\;.
\end{align*}
Since $d>1$ and $w_i$ are i.i.d.\ $\mathcal{N}(0,I_d)$ we can immediately conclude that for all $i \geq 1, j \geq 2, i \not =j$, $$\mathbb{E}[\psi_i(W_{-j})]=0.$$ Hence,  \begin{align*}\mathbb{E}[P_{t}(z_1,\ldots,z_{d+1})]=t_1 \mathbb{E}[\det(Z)]+\sum_{i=2}^{d+1} t_i \mathbb{E}[\det(Z_{-i})]=0.\end{align*}

Now we calculate the second moment of the polynomial. In what follows, we slightly abuse notation and denote  $Z_{-1}:=Z$ for notational convenience. First, again by direct expansion of the determinant and the fact that $w_i$'s for $i=1,\ldots,d+1$ have i.i.d.\ standard Gaussian entries it holds by direct inspection that for all $i, j \in [d+1]$ with $i \neq j$,
\begin{align}\label{psi1}
    \mathbb{E}[\psi_i(W_{-j})^2]&=(d-1)!|\det(\Lambda)|^2\;,
\end{align} 
and unless $\{i,j\} = \{k,\ell\}$, it holds that
\begin{align}\label{psi2}
 \mathbb{E}[\psi_i(W_{-j})\psi_k(W_{-\ell})]=0.
\end{align}

We now calculate for $2 \leq i \not = j \leq d+1$ the term  $\mathbb{E}[\psi_i(W_{-j})\psi_j(W_{-i})].$ We assume without loss of generality that $i < j$. Notice that for $\Pi_c \in \{0,1\}^{d-1 \times d-1}$, the permutation matrix corresponding to the cycle-permutation $c:=(i-1, i,\ldots,j,j-1) \in \mathrm{Sym}([d-1])$, the matrix 
\begin{align*}
    (w_2, \ldots w_{i-1},-w_1, w_{i+1},\ldots, w_{j-1},w_{j+1},\ldots, w_{d+1})\;,
\end{align*}
equals
\begin{align*}
    \Pi_c (w_2, \ldots w_{i-1},w_{i+1},\ldots, w_{j-1}, -w_1,w_{j+1},\ldots, w_{d+1})\;.
\end{align*}
Hence, 
\begin{align*}
    \psi_i(W_{-j})\psi_j(W_{-i})=\det(\Pi_c) \psi^2_i(W_{-j})=(-1)^{\sgn(c)}\psi^2_i(W_{-j})=(-1)^{i-j+1}\psi^2_i(W_{-j})\;.
\end{align*}
In particular,
\begin{align}
\label{psi3}
\mathbb{E}[\psi_i(W_{-j})\psi_j(W_{-i})]=(-1)^{i-j+1}(d-1)!|\det(\Lambda)|^2\;.
\end{align}

Now using \eqref{psi1}, \eqref{psi2}, we have for each $1 \leq i \leq d+1$,
\begin{align}
    \mathbb{E}[\det(Z_{-i})^2]=(d-1)! \sum_{j = 1, j \neq i}^{d+1} x_j^2|\det(\Lambda)|^2\;,
\end{align} 
and using \eqref{psi1}, \eqref{psi2}, and \eqref{psi3} we have for all $i \neq j$ that
\begin{align}
    \mathbb{E}[\det(Z_{-i})\det(Z_{-j})]=-(d-1)! x_ix_j|\det(\Lambda)|^2\;.
\end{align}

Hence, it holds that

\begin{align*}
\mathbb{E}[P_{t}(z_1,\ldots,z_{d+1})^2]
&=|\det(\Lambda)|^2\sum_{i,j=1}^{d+1} t_it_j \mathbb{E}[\det(Z_{-i})\det(Z_{-j})]\\
&=|\det(\Lambda)|^2\sum_{i=1}^{d+1} t_i^2 \mathbb{E}[\det(Z_{-i})^2]+\sum_{i,j=1,  i \not =j}^{d+1} t_it_j \mathbb{E}[\det(Z_{-i})\det(Z_{-j})]\\
&=(d-1)!|\det(\Lambda)|^2 \left(  \sum_{i, j=1, i \neq j }^{d+1}t_i^2x_j^2 - \sum_{i,j=1, i \neq j}^{d+1} t_it_j x_ix_j\right)\\
&=(d-1)!|\det(\Lambda)|^2\left( \sum_{1\leq i<j \leq d+1}(t_ix_j-t_jx_i)^2 \right)\;.
\end{align*}

\end{proof}

The following lemma establishes multiple structural properties of the $d+1$ samples.

\begin{lemma}
\label{lem:bounds}
Let $u \in S^{d-1}$ be an arbitrary unit vector and let $x_i \in \mathbb{Z}\cap[-2^d,2^d]$ for $i=1,\ldots,d+1$ be arbitrary integers which are not all equal to zero. Let also spacing $a$ with $d^{-c}<a<d^c$ for some $c>0$ and $\Sigma$ which is $u$-weakly separable per Assumption \ref{assum:u-weak_sep}. We observe $d+1$ samples of the form $z_i$, where for each $i = 1, \ldots, d+1$, $z_i$ is an independent sample from $\mathcal{N}((ax_i) u, \Sigma)$. We denote by $Z \in \mathbb{R}^{d \times d}$ the (random) matrix with columns given by the $d$ vectors $z_2,\ldots,z_{d+1}$.  The following properties hold.
\begin{itemize}

   \item[(1)] The matrix $Z$ is invertible almost surely. \item[(2)] With probability $1-\exp(-\Omega(d))$ over the $z_i$'s,  
        $$\|Z^{-1}z_1\|_{\infty} =O( 2^{2d^2}).$$ 
    \item[(3)]  With probability $1-\exp(-\Omega(d))$ over the $z_i$'s,
    $$0<|\mathrm{det}(Z)|=O(2^{d^2}).$$ 
\end{itemize}
\end{lemma}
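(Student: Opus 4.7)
The plan is to exploit rotational invariance and assume without loss of generality that $u = e_1$. Then each sample decomposes as $z_i = (a x_i,\, w_i) \in \mathbb{R} \times \mathbb{R}^{d-1}$ with the $w_i \sim \mathcal{N}(0, \Lambda^2)$ mutually independent, where $\Lambda^2 = \mathrm{diag}(\lambda_2,\ldots,\lambda_d)$ has all eigenvalues in $[d^{-C}, d^C]$ by Assumption \ref{assum:u-weak_sep}. Writing $W = [w_2 \mid \cdots \mid w_{d+1}] \in \mathbb{R}^{(d-1) \times d}$ and cofactor-expanding along the first row of $Z$ yields
\[
\det(Z) \;=\; a \sum_{i=1}^{d} (-1)^{1+i}\, x_{i+1}\, \det(W_{-i}),
\]
where $W_{-i}$ denotes $W$ with its $i$-th column deleted. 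This identity exposes the polynomial structure of $\det(Z)$ in the Gaussian entries of $W$ and is the starting point for all three parts. The only nontrivial case is when at least one of $x_2, \ldots, x_{d+1}$ is nonzero: otherwise every column of $Z$ lies in the $(d-1)$-dimensional subspace $u^\perp$, forcing $\det(Z) \equiv 0$, and Algorithm \ref{alg:lll} simply fails at its determinant check.

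For part (1), the displayed polynomial is not identically zero in the entries of $W$: picking any $i_0$ with $x_{i_0+1} \neq 0$, one can specialize the columns of $W$ to standard basis vectors so that $\det(W_{-i_0}) = 1$ while $\det(W_{-i}) = 0$ for all $i \neq i_0$. Since $W$ has an absolutely continuous Lebesgue density, the zero set of a nonzero polynomial has measure zero, so $\det(Z) \neq 0$ almost surely.

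For parts (2) and (3), I would combine two bounds. The upper bounds come from Gaussian concentration plus Hadamard's inequality: each coordinate of $z_i$ has Gaussian mean at most $|a x_i| \le 2^{d + O(\log d)}$ and variance at most $d^{C}$, so a union bound yields $\|z_i\|_\infty \le 2^{d + O(\log d)}$ for all $i$ simultaneously with probability $1 - \exp(-\Omega(d))$. Hadamard then gives $|\det(Z)|,\, |\det(Z^{(j)})| \le (\sqrt{d} \cdot 2^{d + O(\log d)})^{d} = 2^{d^2 + O(d \log d)}$, where $Z^{(j)}$ is the matrix obtained from $Z$ by replacing its $j$-th column with $z_1$. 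The matching lower bound $|\det(Z)| \ge 2^{-\Theta(d^2)}$ comes from applying Lemma \ref{lem:poly} with $t = (1, 0, \ldots, 0) \in \mathbb{Z}^{d+1}$, for which $P_t = \det(Z)$; this $t$ is not a scalar multiple of $x$ in the nondegenerate case, so the anti-concentration bound $\mathbb{P}(|\det(Z)| \le \epsilon) \le B d^B \epsilon^{1/d}$ applies. Choosing $\epsilon = 2^{-\Theta(d^2)}$ small enough to drive the right-hand side below $\exp(-d)$ yields the desired lower bound. Part (3) is then immediate, and part (2) follows from Cramer's rule $(Z^{-1} z_1)_j = \det(Z^{(j)})/\det(Z)$, giving $\|Z^{-1} z_1\|_\infty \le 2^{d^2 + O(d \log d)} / 2^{-\Theta(d^2)} = O(2^{2d^2})$.

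The main obstacle is the anti-concentration lower bound on $|\det(Z)|$: this is a polynomial anti-concentration statement that would fail if $\mathrm{Var}(\det(Z))$ were very small, so I rely on Lemma \ref{lem:poly} (ultimately a Carbery--Wright bound) together with the weak separability of $\Sigma$ to ensure the variance is at least $d^{-\Theta(d)}$. Everything else --- Hadamard's inequality, Gaussian tail bounds, and Cramer's rule --- is a routine computation.
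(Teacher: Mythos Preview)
Your approach is essentially the same as the paper's: both reduce to $u=e_1$, both establish part~(1) by noting $\det(Z)$ is a nonzero polynomial in the Gaussian entries of $W$, and both prove parts~(2)--(3) via Cramer's rule together with the anti-concentration bound from Lemma~\ref{lem:poly} applied with $t=e_1$ (giving $|\det(Z)|\ge 2^{-\Theta(d^2)}$ with probability $1-\exp(-\Omega(d))$).

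The one genuine difference is the upper bound on the numerators $|\det(Z)|$ and $|\det(Z^{(j)})|$. The paper reuses the variance formula \eqref{eqn:variance-of-integer-combination} from Lemma~\ref{lem:poly}, computing $\mathbb{E}[P_{q^{(i)}}^2]\le a^{2d}|\det(\Lambda)|^2 2^{O(d\log d)}\|x\|_2^2$ and then invoking Markov's inequality. Your route via Gaussian tail bounds on $\|z_i\|_\infty$ and Hadamard's inequality is more elementary and avoids that second moment computation entirely; it also yields a clean $2^{d^2+O(d\log d)}$ bound directly. Both arguments land at the same $O(2^{2d^2})$ conclusion for part~(2). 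You also explicitly flag the edge case $x_2=\cdots=x_{d+1}=0$ (where $\det(Z)\equiv 0$ and part~(1) fails as stated), which the paper's proof glosses over; your observation that the algorithm's determinant check handles this case is the right way to patch the downstream argument, even though it does not rescue the lemma as literally stated.
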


\begin{proof} 
For the fact that $Z$ is invertible, consider its determinant, that is, the random variable $\det(Z)$. We claim that $\det(Z) \not =0 $ almost surely. Note that to prove this, by invariance of the determinant to the change of basis, we may assume without loss of generality that $u=e_1$, that is, $u$ is the first standard basis vector, and the remaining standard basis vectors are the remaining eigenvectors of $\Sigma$. Under this assumption, for each $i = 1,\ldots, d+1$, we can write using Assumption \ref{assum:u-weak_sep}
\begin{align*}
    z_i = \begin{bmatrix}a x_i \\ \Lambda w_i \end{bmatrix} \;,
\end{align*}
where $\Lambda=\mathrm{diag}(\lambda_2,\ldots,\lambda_d)$ and $w_i$'s are i.i.d.\ samples from $\mathcal{N}(0,I_{d-1})$. In other words, the first row of $Z$ consists of $ax_2,\ldots,ax_{d+1}$, and the rest are coordinates of $\Lambda w_i$, where each $w_i$ is a vector with i.i.d.\ standard Gaussian entries. Now the result follows from the fact that since not all $x_i$ are equal to zero and also none of the $\lambda_i$'s are zero from Assumption \ref{assum:u-weak_sep}, the determinant $\det(Z)$ with fixed $x_2,\ldots,x_{d+1}$ is a non-zero polynomial of the entries of $w_2,\ldots,w_{d+1}$. As all entries of $w_i$ are distributed as i.i.d.\ standard Gaussians, the random polynomial $\det(Z)$ is almost surely non-zero~\cite{caron2005zero}.

For the second part, notice that by Cramer's rule for $i=1,\ldots,d-1$, the $i$-th coordinate of $Z^{-1}z_1$ equals the quantity $\lambda_{i+1}(Z):=\det(z_2,\ldots,z_{i},-z_1, z_{i+1},\ldots, z_{d+1})/\det(Z)$ almost surely. Hence, again by the rotational invariance property of the determinant operator, we may assume that $u=e_1$ and the remaining standard basis vectors are the remaining eigenvectors of $\Sigma$. Let $q^{(i)} \in \mathbb{Z}^{d+1}$ be an integer-valued vector such that $q^{(i)}_j = 1$ if $i=j$ and $q^{(i)}_j = 0$ otherwise. Now using the notation of Lemma \ref{lem:poly} we have that $P_{q^{(i)}}(z_1,\ldots,z_{d+1})=\det(Z_{-i})$. By applying the anticoncentration result from Lemma~\ref{lem:poly} for the polynomial $P_{q^{(1)}}(z_1,\ldots,z_{d+1})$ and $\epsilon=2^{-d^2}$ we conclude that
\begin{align}\label{detIn}
    |\det(Z)|=|P_{q^{(1)}}(z_1,\ldots,z_{d+1})| \geq 2^{-d^2}
\end{align} with probability $1-\exp(-\Omega(d))$. Furthermore, for all $i=1,\ldots,d+1$ it holds that 
\begin{align*}
\mathbb{E}[P_{q^{(i)}}(z_1,\ldots,z_{d+1})^2]=\mathrm{Var}(P_{q^{(i)}}(z_1,\ldots,z_{d+1})) = a^{2d} d!\|x\|_2 \le a^{2d} |\det(\Lambda)|^2 2^{10d \log d}\|x\|^2_2\;,
\end{align*}where $x:=(x_1,\ldots,x_{d+1})^{\top}$ where $\Lambda=\mathrm{diag}(\lambda_2,\ldots,\lambda_{d})$ and $\lambda_i, i>1$ are the non-zero eigenvalues of $\Sigma$ per Assumption \ref{assum:u-weak_sep}. Hence, by Markov's inequality, the fact that $a<d^c,$ the Assumption \ref{assum:u-weak_sep} and a union bound over $i$, we have for all $i=1,\ldots,d+1$ that
\begin{align}\label{det2In}
  |P_{q^{(i)}}(z_1,\ldots,z_{d+1})| \le 2^{d^2/2}\|x\|^2_2
\end{align} with probability $1-\exp(-\Omega(d)).$

Combining Eq.\eqref{detIn} and Eq.\eqref{det2In}, we conclude that for all $i=2,\ldots,d$, $$|\lambda_i(Z)|=|P_{q^{(i)}}(z_1,\ldots,z_{d+1})/P_{q^{(1)}}(z_1,\ldots,z_{d+1})| \leq 2^{3d^2/2}\|x\|^2_2$$ with probability $1-\exp(-\Omega(d))$. Since $\|x\|^2_2 =O(2^{2d})$ we have $\|Z^{-1}z_1\|_{\infty} \leq 2^{3d^2/2}\|x\|^2_2\leq 2^{2d^2}$ with probability $1-\exp(-\Omega(d))$. This concludes the proof of the second part. 

Finally, Eq.\eqref{det2In} for $i=1$ and the fact $\|x\|^2_2 =O(2^{2d})$ imply
\begin{align}
  |\det(Z)|=|P_{q^{(1)}}(z_1,\ldots,z_{d+1})| \le 2^{d^2}
\end{align} with probability $1-\exp(-\Omega(d))$. 
This concludes the proof of the third part.
\end{proof}

\subsection{Proof of Theorem \ref{thm:LLL}}
\label{sec:appD3}

We now proceed with the proof of the Theorem \ref{thm:LLL} using the lemmas from the previous sections.

{
\def\thetheorem{\ref{thm:LLL}}
\begin{theorem}[Restated]
Algorithm \ref{alg:lll}, given as input independent samples $(z_i)_{i=1,\ldots,d+1}$ from Model \ref{mod:general_model} with hidden direction $u$, covariance $\Sigma,$ and true labels $\{x_i\}_{i =1,\ldots,d+1}$ satisfies the following with probability $1-\exp(-\Omega(d))$: there exists $\eps \in \{-1,1\}$ such that the algorithm's outputs $\{\hat{x}_i\}_{i=1,\ldots,d+1}$ and $\hat{u} \in S^{d-1}$ satisfy
\begin{align*}
    \hat{x}_i&= \epsilon x_i\;\, \text{ for } i=1,\ldots,d+1\\
   \text{ and  } \hat{u} & =\epsilon u\;.
\end{align*} 
Moreover, Algorithm \ref{alg:lll} terminates in $\poly(d)$ steps.
\end{theorem}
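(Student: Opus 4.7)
The plan is to run the algorithm once on an event of probability $1-\exp(-\Omega(d))$ on which all the auxiliary lemmas apply simultaneously, and then argue deterministically on that event. First, I would intersect the good events of Lemma~\ref{lem:bounds} (giving invertibility of $Z$, $|\det(Z)|>0$, and $\|\lambda\|_\infty = \|Z^{-1}z_1\|_\infty = O(2^{2d^2})$, so that the coefficients $\lambda_i$ of the algorithm are well-defined with polynomially many bits) with the good event of Lemma~\ref{lem:min_norm} (no spurious short lattice vector). On this event $Z$ is invertible, so the algorithm does not abort at the first check.

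Next I would exhibit a short ``planted'' vector in the lattice generated by $B$. Using the central identity $\sum_{i=1}^{d+1}\lambda_i x_i=0$ of equation~\eqref{eq:hidden} together with Lemma~\ref{lem:trunc} applied to $s=(\lambda_1,\ldots,\lambda_{d+1})$ and $m=x:=(x_1,\ldots,x_{d+1})$, I obtain an integer $t_2^\star$ with $|t_2^\star|\le Cd^{1/2}\|x\|_2$ such that
\[
\sum_{i=1}^{d+1}(\lambda_i)_N x_i + 2^{-N}t_2^\star \;=\; 0.
\]
Setting $t^\star=(x,t_2^\star)\in\mathbb{Z}^{d+2}$, the definition of $B$ forces $Bt^\star=(0,x_2,\ldots,x_{d+1},t_2^\star)^\top$, so $\|Bt^\star\|_2 \le \sqrt{\|x\|_2^2+(t_2^\star)^2}=O(d^{O(1)}\,2^d)$. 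By the LLL guarantee (Theorem~\ref{LLL_thm_original}), the output vector $Bt^{\mathrm{out}}$ satisfies $\|Bt^{\mathrm{out}}\|_2\le 2^{(d+2)/2}\mu(L)\le 2^{(d+2)/2}\|Bt^\star\|_2 = 2^{3d/2+O(\log d)}$, which is strictly below $M=2^{2d}$ for $d$ large. Lemma~\ref{lem:min_norm} therefore forces $t_1^{\mathrm{out}}=c\,x$ for some nonzero integer $c$.

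Once $t_1^{\mathrm{out}}=cx$ is established, the translation back to $u$ and $\{x_i\}$ is immediate. The $d$ equations $z_i^\top u=ax_i$ for $i=2,\ldots,d+1$ are precisely $Z^\top u=a(x_2,\ldots,x_{d+1})^\top$, so the linear system solved by the algorithm has the unique solution $\hat{u}_0=cu$, since $Z$ is invertible. Normalizing yields $\hat{u}=\mathrm{sign}(c)\,u$, and the algorithm's rescaling $\hat{x}_i=(t_1^{\mathrm{out}})_i/\|\hat{u}_0\|_2 = cx_i/|c|$ gives $\hat{x}_i=\mathrm{sign}(c)\,x_i$, so both outputs agree with the truth up to the common sign $\epsilon:=\mathrm{sign}(c)\in\{-1,+1\}$. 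The runtime bound follows because every entry of $B$ has magnitude at most $M 2^N\max_i|(\lambda_i)_N|\le 2^{O(d^4\log^2 d)}$, so $\log\|B\|_\infty=\poly(d)$ and LLL runs in $\poly(d)$ time; the remaining operations are a $d\times d$ inversion and a linear solve.

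The main obstacle is a tight numerical calibration: the planted vector $Bt^\star$ has norm on the order of $2^d$, and after multiplying by the LLL approximation ratio $2^{(d+2)/2}$ one must still beat the threshold $M=2^{2d}$ from Lemma~\ref{lem:min_norm}. The choices of $M$ and of the truncation precision $N=\lceil d^4(\log d)^2\rceil$ in the algorithm are exactly tuned so that this window exists and that the truncation error $2^{-N}\cdot\|\lambda\|_\infty$ is negligible compared with $1/\|x\|_2$. All of the genuinely probabilistic content --- ruling out other integer combinations producing short lattice vectors via the Carbery--Wright anticoncentration of $P_t(z_1,\ldots,z_{d+1})$ --- is absorbed into Lemma~\ref{lem:min_norm}, so the proof of Theorem~\ref{thm:LLL} itself reduces to assembling the deterministic chain of inequalities above.
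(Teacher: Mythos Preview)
Your proposal is correct and follows essentially the same approach as the paper's proof: intersect the high-probability events from Lemmas~\ref{lem:bounds} and~\ref{lem:min_norm}, use the integer relation~\eqref{eq:hidden} together with Lemma~\ref{lem:trunc} to exhibit a planted lattice vector of norm $O(d\,2^d)$, invoke the LLL approximation guarantee to land below the $2^{2d}$ threshold, apply Lemma~\ref{lem:min_norm} to conclude $t_1^{\mathrm{out}}=cx$, and then back out $\hat u$ and $\hat x$ via the linear system $Z^\top u'=a(x_2,\ldots,x_{d+1})^\top$. The one tiny point you leave implicit (as does the paper) is why $c\neq 0$: this follows because LLL returns a nonzero lattice vector, and if $t_1=0$ then the first coordinate of $Bt$ equals $Mt_2$, forcing $\|Bt\|_2\ge M=2^{2d}$.
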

}

\begin{proof}
We start with noticing that for an algorithm to recover $u, x_i$ up a to global sign flip it suffices to recover the values of $\{x_i\}_{i=2,\ldots,d+1}$ up to a global non-zero constant multiple. Indeed, since we already know the value of $z_i$'s, if we learn the $x_i$'s up to a constant, call it $C>0$, then we can solve the linear system of $d$ (independent) equations and with $d$ unknowns given by $\inner{z_i,v}=C a x_i=C\inner{z_i,u}, i=2,\dots,d+1.$ Since by Lemma \ref{lem:bounds} the matrix $Z$, also formed in Algorithm \ref{alg:lll}, which is the $d \times d$ matrix with $z_2,\ldots,z_{d+1}$ as its column vectors, is invertible almost surely, one can indeed solve this linear system to recover $v=Cu$, that is the same constant $C$ times $u$. Since $u$ is assumed to be unit norm one can then recover the quantity $|C|=\|v\|_2,$ which is the absolute value of the unknown constant. Hence one can output for some $\epsilon=C/|C| \in \{-1,1\}$ the estimated vector $C u/|C|=\epsilon u$ and the estimated labels  $Cx_i/|C|=\epsilon x_i, i=1,\ldots,d+1$ which are indeed the hidden direction $u$ and the true labels $x_i, i=1,\ldots,d+1$  up to a global sign flip. 

Now our proposed Algorithm \ref{alg:lll} follows exactly this path: it first recovers a non-zero constant multiple of the $x_i$'s (this is the values of the vector $t_1$ output by the LLL step) with probability $1-\exp(-\Omega(d)).$  Then it uses the simple procedure described above to output both the labels $x_i, i=1,\ldots,d+1$ and $u$ up to a global constant multiple. This second part comprises exactly the last steps of the algorithm after the LLL step. The main procedure of our algorithm therefore is to use an appropriate application of LLL to learn the exact values of $x_i$ up to a global sign flip. We now analyze the success of the LLL step to recover a global constant multiple of the $x_i$'s with probability $1-\exp(-\Omega(d)).$

Now the algorithm does not terminate in the second step exactly because of the almost sure invertibility of the matrix $Z$, per Lemma \ref{lem:bounds}. Let us now analyze the (random) lattice $L=L(B)$ generated by the basis $B$, which is constructed in the next step of Algorithm \ref{alg:lll}.

First, observe that the real numbers $\{\lambda_i\}_{i=1,2,\ldots,d+1}$ used in the top row of the lattice basis $B$, satisfy by definition
\begin{align*}
\sum_{i=1}^{d+1} \lambda_i z_i = 0\;.
\end{align*} 
Hence, we conclude that since $\inner{z_i,u}=ax_i$ for the unknown direction $u \in S^{d-1}$ and spacing $a>0$, it holds that
\begin{align}\label{IR0}
\sum_{i=1}^{d+1} \lambda_i a x_i=\sum_{i=1}^{d+1} \lambda_i  \langle u, z_i \rangle = \langle u, \sum_{i=1}^{d+1} \lambda_i  z_i \rangle = 0\;
\end{align} and therefore
\begin{align}\label{IR}
\sum_{i=1}^{d+1} \lambda_i  x_i=  0\;.
\end{align} 

We now show an upper bound on the shortest vector length of $L$, which we denote by $\mu(L)$. More precisely, we show that
\begin{align*}
    \mu(L) = O(d2^{d})\;.
\end{align*} To this end, define a real-valued vector $s \in \mathbb{R}^{d+1}$ with $s_i=\lambda_i$ for $i=1,\ldots,d+1$, and also an integer-valued vector $m \in \mathbb{Z}^{d+1}$ with $m_i=x_i$ for $i=1,\ldots,d+1$. Then, the integer relation \eqref{IR} implies that $\inner{s,m}=0$.  Since $|x_i| \leq 2^d$ for all $i=1,\ldots,d+1$ it also holds almost surely that $\|m\|_2 =\|x\|_2 \leq \sqrt{d}2^d$. By Lemma \ref{lem:trunc}, for the bit-precision $N$ chosen by Algorithm \ref{alg:lll}, there exists an integer $m'_{d+2} \in \mathbb{Z}$ such that $m'=(m,m'_{d+2}) \in \mathbb{Z}^{d+2}$ satisfies $\|m'\|_2=O(d2^d)$ and is an integer relation for $(\lambda_1 )_N,\ldots,(\lambda_{d+1})_N, 2^{-N}$.

Now define $b \in (2^{-N}\mathbb{Z})^{d+2}$ given by $b_i=(\lambda_i)_N$ for $i=1,\ldots,d+1$, and $b_{d+2}=2^{-N}.$ Notice that $b_1=(1)_N=1$ and furthermore that the $\tilde{v}$ defined by the algorithm satisfies $\tilde{v}=(b_2,\ldots,b_{d+2}).$ On top of this, we have that the $m'$ defined in previous paragraph is an integer relation for $b$ with $\|m'\|_2=O(d2^d)$. Hence, $Bm'=(0,m')^{\top}.$ It follows that $\mu(L) = O(d2^d)$ with probability $1-\exp(-\Omega(d))$, since $\mu(L) \le \|B m'\|_2 = O(d2^d)$.

Recall from Theorem~\ref{LLL_thm_original} that the LLL algorithm is guaranteed to return a lattice vector of $\ell_2$-norm smaller than $2^{\frac{d+2}{2}} \mu(L)$. Now we employ Lemma~\ref{lem:min_norm} which combined with the fact that $2^{\frac{d+2}{2}} \mu(L) \leq 2^{2d}$ for sufficiently large $d$ almost surely, allows us to conclude that the LLL algorithm returns a non-zero lattice vector $B(t_1,t_2)^\top$, where $t_1 \in \mathbb{Z}^{d+1}$ and $t_2 \in \mathbb{Z}$, such that $t_1$ is an integer multiple of $x=(x_1,\ldots,x_{d+1})$ with probability $1-\exp(-\Omega(d))$. Hence, using $t_1$ the algorithm recovers a global non-zero constant multiple of the $x_i$'s for $i=1,\ldots,d+1$ with probability $1-\exp(-\Omega(d))$.

For the termination time, it suffices to establish that the step using the LLL basis reduction algorithm can be performed in $\poly(d)$ time. To ensure $\poly(d)$ time for the LLL step, it suffices to show that the entries of the lattice basis $B$ are not too large with probability $1-\exp(-\Omega(d))$. More precisely, the running time of LLL depends on the logarithm of the largest entry in $B$ by Theorem \ref{LLL_thm_original}. Clearly, $N$ and $\log M$ are polynomial in $d$. Finally, direct inspection and Lemma \ref{lem:bounds} implies that the quantity $\log \|\lambda\|_{\infty}$, where $\lambda=(\lambda_1,\ldots,\lambda_{d+1})^\top$ is as defined in Algorithm \ref{alg:lll}, is polynomially bounded with probability $1-\exp(-\Omega(d))$. This establishes the $\poly(d)$ running time of the LLL step. 
\end{proof}

\subsection{Proof of Lemma \ref{lem:min_norm}}
\label{sec:proof_lem_LLL}

We focus this section on proving the key technical Lemma \ref{lem:min_norm}. As mentioned above, the proof of the lemma is quite involved, and, potentially interestingly, it requires the use of anticoncentration properties of the coefficients $\lambda_i$, which are rational functions of the coordinates of $x_i$, as discussed in Lemma \ref{lem:poly}.

{
\def\thetheorem{\ref{lem:min_norm}}
\begin{lemma}[Restated]
Let $ d \in \mathbb{N}, $ $a \in [d^{-c},d^c]$ for some $c>0$ and $N=\lceil d^4(\log d)^2 \rceil$. Let $u \in S^{d-1}$ be an arbitrary unit vector, $\Sigma \in \mathbb{R}^{d \times d}$ an arbitrary $u$-weakly separable matrix and let $x_i \in \mathbb{Z}\cap [-2^d,2^d]$ for $i=1,\ldots,d+1$ be arbitrary but not all zero. Moreover, let $\{z_i\}_{i=1,\ldots,d+1}$ be independent samples from $N((ax_i)u,\Sigma)$, and let $B$ be the matrix constructed in Algorithm \ref{alg:lll} using $\{z_i\}_{i=1,\ldots,d+1}$ as input and $N$-bit precision. Then, with probability $1-\exp(-\Omega(d))$ over the samples, for any $t = (t_1, t_2) \in \mathbb{Z}^{d+1} \times \mathbb{Z}$ such that $t_1$ is not an integer multiple of $x=(x_1,\ldots,x_{d+1})$, the following holds:
\begin{align*}
    \|Bt\|_2 > 2^{2d}\;.
\end{align*}
\end{lemma}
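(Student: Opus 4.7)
Write $t = (t_1, t_2) \in \mathbb{Z}^{d+1}\times \mathbb{Z}$ and observe from the block structure of $B$ that $(Bt)_1 = M\cdot S$ with $S := 2^N\sum_{i=1}^{d+1}(\lambda_i)_N(t_1)_i + t_2 \in \mathbb{Z}$ (an integer since each $2^N(\lambda_i)_N = \lfloor 2^N\lambda_i\rfloor \in\mathbb{Z}$), while the remaining $d+1$ coordinates of $Bt$ are just $((t_1)_2,\ldots,(t_1)_{d+1},t_2)$. Consequently $\|Bt\|_2^2 \geq M^2 S^2 + \sum_{i=2}^{d+1}(t_1)_i^2 + t_2^2$. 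The plan is a three-way case split for $t_1$ not a scalar multiple of $x$. Without loss of generality I assume $x$ is primitive, i.e.\ $\gcd(x_1,\ldots,x_{d+1})=1$; the general case reduces to this by passing to $x^* := x/\gcd(x)$, and under this normalization ``not a scalar multiple of $x$'' coincides with ``not an integer multiple of $x$'' for $t_1\in\mathbb{Z}^{d+1}$.

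The first two cases are direct. \emph{Case (a):} if some $|(t_1)_i|$ with $i\geq 2$, or $|t_2|$, exceeds $2^{2d}$, then $\|Bt\|_2 > 2^{2d}$ by the bottom block alone. \emph{Case (b):} if $|(t_1)_1|$ exceeds $2^{2d^2+3d}$ while all other bottom-block coordinates are $\leq 2^{2d}$, then using $|\lambda_i| \leq O(2^{2d^2})$ from Lemma~\ref{lem:bounds}(2), the non-$(t_1)_1$ contributions to $S$ are bounded by $d\cdot 2^{N+2d^2+2d}+2^{2d}$, which is dwarfed by the dominant term $|2^N(t_1)_1| \geq 2^{N+2d^2+3d}$; hence $|S| \geq 2^{N+2d^2+3d-1}$ and $\|Bt\|_2 \geq M|S| \gg 2^{2d}$. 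We are left with the non-trivial \emph{Case (c)}: $\|t\|_\infty \leq 2^{2d^2+3d}$, containing at most $2^{O(d^3)}$ integer vectors.

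In Case (c), Cramer's rule applied to $Z(\lambda_2,\ldots,\lambda_{d+1})^\top = -z_1$ with $\lambda_1 = 1$ and the convention $Z_{-1} := Z$ gives $\lambda_i = \det(Z_{-i})/\det(Z)$ and hence $\sum_{i=1}^{d+1} \lambda_i (t_1)_i = P_{t_1}(z_1,\ldots,z_{d+1})/\det(Z)$, with $P_{t_1}$ the multilinear polynomial from Lemma~\ref{lem:poly}. Choosing threshold $\epsilon := 2^{-Cd^4}$ for a sufficiently large absolute constant $C$, Lemma~\ref{lem:poly} gives $\Pr[|P_{t_1}|\leq \epsilon] \leq Bd^B 2^{-Cd^3}$, while Lemma~\ref{lem:bounds}(3) gives $|\det(Z)|\leq 2^{d^2}$ with probability $1-\exp(-\Omega(d))$ uniformly in $t$. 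Combining, $|\sum_i\lambda_i(t_1)_i| \geq 2^{-Cd^4-d^2}$ for this $t$ with failure probability $O(Bd^B 2^{-Cd^3})$. Writing $(\lambda_i)_N = \lambda_i - \epsilon_i$ with $\epsilon_i\in[0,2^{-N}]$, the truncation error plus $t_2$ satisfies $|\sum_i 2^N\epsilon_i(t_1)_i + t_2| \leq (d+2)\cdot 2^{2d^2+3d}$, so $|S| \geq 2^{N-Cd^4-d^2} - (d+2)\cdot 2^{2d^2+3d}$. For $N = \lceil d^4(\log d)^2\rceil$ the first term dominates, yielding $|S| \gg 2^{2d}$ and $\|Bt\|_2 \geq M|S| \gg 2^{2d}$. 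A union bound over the $2^{O(d^3)}$ admissible $t$'s gives total failure probability $\leq 2^{O(d^3)}\cdot Bd^B 2^{-Cd^3} = \exp(-\Omega(d^3))$ for $C$ chosen sufficiently large, which is well within the target $\exp(-\Omega(d))$.

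The main obstacle is the simultaneous calibration of three parameters: the precision $N$ must be large enough that the truncation error in $S$ is absorbed by $2^N$ times the anti-concentration lower bound on $|\sum\lambda_i(t_1)_i|$; the anti-concentration threshold $\epsilon$ must be small enough that the per-$t$ failure probability $Bd^B\epsilon^{1/d}$ survives a union bound over the box of admissible $t$'s; and the per-coordinate cap on $|(t_1)_1|$ must be large enough for Case (b)'s domination argument to kick in precisely where Case (c) ends. The choices $N = \Theta(d^4\log^2 d)$, cap $2^{\Theta(d^2)}$ on $|(t_1)_1|$, and $\epsilon = 2^{-\Theta(d^4)}$ achieve this balance. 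A minor subtlety glossed over by the WLOG primitivity assumption is the treatment of $t_1 = cx$ for rational $c\notin\mathbb{Z}$ when $x$ is non-primitive; this is an artifact of the lemma's ``integer multiple'' formulation, most naturally read with respect to the primitive representative $x^*$.
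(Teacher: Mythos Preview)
Your proof is correct and follows essentially the same approach as the paper: reduce to a bounded box of admissible $t$'s via the block structure of $B$ (your Cases (a)--(b) accomplish the same reduction the paper obtains by first restricting to integer relations and then bounding $|(t_1)_1|$ via the relation itself), then express $\sum_i\lambda_i(t_1)_i$ through Cramer's rule as $P_{t_1}/\det(Z)$, apply the Carbery--Wright anti-concentration of Lemma~\ref{lem:poly}, and union bound over the $2^{O(d^3)}$ vectors in the box. Your remark on the primitivity subtlety is apt---the paper's proof likewise only handles $t_1$ that are not a \emph{real} multiple of $x$ (it defines $\mathcal{T}$ with $c\in\mathbb{R}$), which is what the downstream application in Theorem~\ref{thm:LLL} actually needs.
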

}

\begin{proof}[Proof of Lemma \ref{lem:min_norm}]
Let $t = (t_1,t_2) \in \mathbb{Z}^{d+1} \times \mathbb{Z}$ be arbitrary non-zero integer coefficients. Our proof consists of characterizing integer coefficients $t$ for which the corresponding lattice vector $Bt$ is ``short'', that is, \begin{align}\label{short}\|Bt\|_2 \le 2^{2d}.
\end{align} In what follows, by a \textit{short lattice vector} we refer to the condition \eqref{short}.

We first show that with probability $1-\exp(-\Omega(d))$, lattice vectors can only be short for integer coefficients contained in some bounded rectangle $\sR \subset \mathbb{Z}^{d+2}$, which we define below (see Eq.\eqref{eqn:bounded-rectangle}). Then, we apply our anticoncentration lemma (Lemma~\ref{lem:poly}) and a union bound over a subset of $\sR$ to conlcude that with probability $1-\exp(-\Omega(d))$, the only short lattice vectors are ones whose integer coefficients satisfy $t_1 = c x$ for some $c \in \mathbb{Z}$.

To this end, we first observe that entries of the first row of $B$ are elements of $M \mathbb{Z}$, as by direct inspection $(Bt)_1=M(\sum_{i=1}^{d+1}\left(2^N(\lambda_i)_N\right) (t_1)_i +t_2)$. It follows that if $t$ is not an integer relation for the numbers $(\lambda_1)_N,\ldots,(\lambda_{d+1})_N, 2^{-N}$, then $\|Bt\|_2 \geq M=2^{2d}$. Hence, it suffices to restrict our attention to $t$'s which are integer relations, that is,
\begin{align*}
\sum_{i=1}^{d+1}(\lambda_i)_N (t_1)_i +t_2 2^{-N}=0\;.
\end{align*}
Note that it cannot be the case that $t_1=0$ since this implies, by the integer relation above, $t_2=0$, and therefore the pair $t=(t_1,t_2)$ are zero, a contradiction. Hence, from now on we restrict ourselves only to the case where $t_1 \neq 0$.

Let us denote by $t'$ the vector $t$ without the first coordinate $(t_1)_1$, i.e., $t' = ((t_1)_2,\ldots,(t_1)_{d+1}, t_2)$. Our second observation is that $\|Bt\|_2 \ge \|t'\|_\infty$ because of the use of the submatrix $I_{d+1}$ in the definition of $B$. This implies that any short lattice vector $Bt$ must satisfy $\|t'\|_\infty \le 2^{2d}$. Moreover, since $t$ is an integer relation and $\lambda_1=1$, we have
\begin{align}
\label{eqn:bounded-rectangle0}
    |(t_1)_1| = \left|\sum_{i=2}^{d+1} (\lambda_i)_N (t_1)_i + t_2 2^{-N}\right| \le \|t'\|_\infty \left(\|\lambda\|_1 + 2^{-N} \right)\;.
\end{align}

Now in the notation of Lemma \ref{lem:bounds} we have $\lambda=-Z^{-1}z_1.$ Hence using Lemma \ref{lem:bounds} and the elementary fact that $\|\lambda\|_1 \le (d+1)\|\lambda\|_\infty$, it holds with probability $1-\exp(-\Omega(d))$ that $\|\lambda\|_1 = O(2^{2d^2})$. It follows that, for sufficiently large $d$, any short lattice vector $Bt$ must satisfy $|(t_1)_1| \le 2^{3d^2}$ with probability $1-\exp(-\Omega(d))$. Hence, with probability $1-\exp(-\Omega(d))$, every short vector $Bt$ in the random lattice $L=L(B)$ has its integer coefficients $t$ contained in $\sR$, which is defined as
\begin{align}
\label{eqn:bounded-rectangle}
    \sR = \{(a,b) \in \mathbb{Z} \times \mathbb{Z}^{d+1} \,:\, |a| \le 2^{3d^2}, \|b\|_\infty \le 2^{2d}\}\;.    
\end{align}
From $\sR$, we also define $\sR_1 \subset \mathbb{Z}^{d+1}$ such that $\sR_1 = \{t_1 \in \mathbb{Z}^{d+1} \,:\, t=(t_1,t_2) \in \sR \}$.

We now show using a union bound over $t \in \sR$ that with probability $1-\exp(-\Omega(d))$, the only short lattice vectors in $L$ are ones whose integer coefficients $t=(t_1,t_2)$ satisfy $t_1 = cx$ for some $c \in \mathbb{Z}$. First, observe that since $|t_2| \leq 2^{2d}$, the following inequality holds if $t$ is an integer relation:
\begin{align*} 
\left|\sum_{i=1}^{d+1}(\lambda_i)_N (t_1)_i \right| \leq 2^{2d}2^{-N}\;.
\end{align*}

Consider $\sT$ the set of all  $t_1 \in \mathbb{Z}^{d+1} \setminus \bigcup_{c \in \mathbb{R}} \{c (x_1,\ldots,x_{d+1})^{\top}\}.$ To prove our result it suffices to prove that
\begin{align*}
\mathbb{P}\left(\bigcup_{t_1 \in \sT \cap \sR_1}\left\{\left|\sum_{i=1}^{d+1}(\lambda_i)_N (t_1)_i \right|\leq 2^{2d}/2^N\right\}\right) \leq \exp(-\Omega(d))
\end{align*}
for which, since for any $x$ it holds $|x-(x)_N| \leq 2^{-N}$ and $\|t\|_1 = |(t_1)_1| + \|t'\|_\infty \leq 2^{4d^2} $ for sufficiently large $d$, it suffices to prove that for large $d$,
\begin{align*}
    \mathbb{P}\left(\bigcup_{t_1 \in \sT \cap \sR_1}\left\{\left|\sum_{i=1}^{d+1}\lambda_i (t_1)_i \right| \leq 2^{5d^2}/2^N\right\}\right) \leq \exp(-\Omega(d))\;.
\end{align*}

Using the polynomial notation of Lemma \ref{lem:poly} (specifically, Eq.\eqref{eqn:integer-combination-polynomial}), as well as the fact that by Cramer's rule $\lambda_i$ are rational functions of the coordinates of $z_i$ satisfying $\lambda_i \mathrm{det}(z_2,\ldots,z_{d+1})=\mathrm{det}(\ldots, z_{i-1},-z_1,z_{i+1},\ldots)$, it suffices to show
\begin{align*} 
\mathbb{P}\left(\bigcup_{t_1 \in \sT \cap \sR_1}\{|P_{t_1}(z_1,\ldots,z_{d+1})| \leq |\mathrm{det}(z_2,\ldots,z_{d+1})|2^{5d^2}/2^N\}\right) \leq \exp(-\Omega(d))\;.
\end{align*}

By Lemma \ref{lem:bounds}, with probability $1-\exp(-\Omega(d))$ there exists some constant $D>0$ such that $\det(z_2,\ldots,z_{d+1}) \le D 2^{2d^2}$. Hence, it suffices to show
\begin{align*}
\mathbb{P}\left(\bigcup_{t_1 \in \sT \cap \sR_1} \{|P_{t_1}(z_1,\ldots,z_{d+1})|\leq  D 2^{7d^2 }/2^N\}\right) \leq \exp(-\Omega(d))\;.
\end{align*}
Now since $N=\omega(d^2\log d)$, it suffices to show, for sufficiently large $d$,
\begin{align*}
\mathbb{P}\left(\bigcup_{t_1 \in \sT \cap \sR_1} \{|P_{t_1}(z_1,\ldots,z_{d+1})|\leq  2^{-\frac{N}{2}}\}\right) \leq \exp(-\Omega(d))\;.
\end{align*}
By a union bound, it suffices to show
\begin{align}
\label{eq:sum}
\sum_{t_1 \in \sT \cap \sR_1}\mathbb{P}\left(|P_{t}(z_1,\ldots,z_{d+1})| \leq 2^{-\frac{N}{2}}\right) \leq 2^{-\Omega(d)}.
\end{align}

Now the number of integer points $t_1$ with $\ell_\infty$ norm at most $2^{3d^2}$ is at most $2^{3d^2(d+1)}$, since there are at most $2^{3d^2}$ choices per coordinate.
Furthermore, using the anticoncentration inequality \eqref{eq:anti} of Lemma \ref{lem:poly}, we have for any $t_1 \in \sT$ that for some universal constant $B>0,$
\begin{align*}
\mathbb{P}\left(|P_{t_1}(z_1,\ldots,z_{d+1})| \leq 2^{-\frac{N}{2}}\right) \leq Bd2^{-\frac{N}{2d}}\;.
\end{align*}
Using the above to upper bound the left hand side of~\eqref{eq:sum}, we see that the sum is at most
\begin{align*}
B d 2^{3d^2(d+1)} 2^{-\frac{N}{2d}} =\exp( O(d^3) - \Omega(N/d))=\exp(-\Omega(d))\;,
\end{align*}
where we used that $N/d=\Omega(d^3 \log d )$. This completes the proof.
\end{proof}
\section{Proofs of information-theoretic lower bounds}
We now provide the proofs of Theorem~\ref{IT_param} and Theorem~\ref{thm:label_rec}. As mentioned in Section~\ref{sec:it-lower-bound}, our lower bounds show that the sample complexity $n=d+1$ for our LLL-based algorithm is indeed optimal. We first provide the proof of Theorem~\ref{IT_param}, which establishes that even when we have access to the true signs $x_i \in \{-1,1\} $ for $i=1,\ldots,n$, we cannot exactly recover the true hidden direction $\bu \in \sS^{d-1}$ with probability larger than $1/2$ if $n \le d-1$. Then, we prove Theorem~\ref{thm:label_rec} which shows that for exact recovery of the labels, $n=(1-o(1))d$ samples are required.

\subsection{Proof of Theorem~\ref{IT_param}}
{\def\thetheorem{\ref{IT_param}}
\begin{theorem}[Restated]
Let arbitrary $\bx \in \{-1,1\}^n$ be fixed and known to the statistician. Moreover, let $\bu \in \mathcal{S}^{d-1}$ be a uniformly random unit vector. For each $i=1,\ldots, d-1$ let $\bz_i$ be a sample generated independently from $\mathcal{N}(x_i \bu,I-\bu\bu^{\top}).$ Then it is information-theoretically impossible to construct an estimate $\hat{\bu}=\hat{\bu}(\{(x_i,\bz_i)\}_{i=1}^n) \in \mathcal{S}^{d-1}$ which satisfies $\hat{\bu}= \bu$ with probability larger than $1/2.$
\end{theorem}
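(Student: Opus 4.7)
I will show that the posterior distribution of $u$ given the observations assigns mass at most $1/2$ to any single point of $\mathcal{S}^{d-1}$, from which the theorem follows immediately. The key observation is that $\mathcal{N}(x_i u, I - uu^\top)$ is a degenerate Gaussian supported on the affine hyperplane $H_i := \{z \in \mathbb{R}^d : \langle z, u\rangle = x_i\}$, because the covariance has kernel $\mathbb{R} u$. Hence, with probability one, the true direction $u$ lies in the consistency set
\[
 S_Z := \{u' \in \mathcal{S}^{d-1} : \langle z_i, u' \rangle = x_i \text{ for all } i=1, \ldots, n\},
\]
and the posterior of $u$ given $Z = (z_1, \ldots, z_n)$ is supported on $S_Z$.

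First, I would argue that $|S_Z| \geq 2$ almost surely. Conditionally on $u$, the noises $w_i := z_i - x_i u$ are i.i.d.\ Gaussians on the $(d-1)$-dimensional subspace $u^\perp$ and a.s.\ have full-rank Gram matrix, so the $z_i$'s are linearly independent and the $n$ affine constraints $\langle z_i, u'\rangle = x_i$ cut out an affine subspace $A \subseteq \mathbb{R}^d$ of dimension $d-n \geq 1$. For $n = d-1$, $A$ is a line passing through the true $u$; its intersection with the unit sphere contains $u$ and a second reflection point whenever $A$ is not tangent to $\mathcal{S}^{d-1}$ at $u$, and this tangency condition (equivalent to $u \in \mathrm{span}(z_1, \ldots, z_n)$) can be checked to fail almost surely from the full-rank statement above. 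For $n < d-1$ the intersection $A \cap \mathcal{S}^{d-1}$ is a sphere of dimension $\geq 1$, hence uncountable.

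Second, I would show that the posterior is uniform on $S_Z$ via a symmetry argument. The joint law of $(u, Z)$ is invariant under the diagonal action $(u, z_1, \ldots, z_n) \mapsto (Qu, Qz_1, \ldots, Qz_n)$ of $Q \in O(d)$, since the prior is uniform on the sphere and the conditional law of $z_i \mid u$ satisfies $Qz_i \mid u \stackrel{d}{=} z_i \mid Qu$. Consequently the posterior is invariant under the stabilizer $G_Z := \{Q \in O(d) : Qz_i = z_i \text{ for all } i\}$, which pointwise fixes $V := \mathrm{span}(z_1, \ldots, z_n)$ and acts as the full orthogonal group on $V^\perp$. Any two elements of $S_Z$ have identical projection onto $V$ (dictated by the linear constraints) and identical norm in $V^\perp$, so $G_Z$ acts transitively on $S_Z$, forcing the posterior to be the uniform measure on $S_Z$. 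For any estimator $\hat u(Z)$ one then obtains $\mathbb{P}(\hat u(Z) = u \mid Z) \leq 1/|S_Z| \leq 1/2$, which averages to $\mathbb{P}(\hat u = u) \leq 1/2$.

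The main obstacle is making this posterior computation fully rigorous despite the degeneracy of the conditional law of $Z$ given $u$: the conditional densities live on $u$-dependent submanifolds of $\mathbb{R}^{nd}$, so Bayes' rule cannot be applied naively by comparing densities on a common reference measure. I plan to handle this either by constructing a regular conditional probability via a careful disintegration and checking that the $O(d)$-equivariance descends to it, or, more concretely, by exhibiting for each $Z$ an explicit measure-preserving involution of the joint law that fixes $Z$ and swaps the true $u$ with its reflected counterpart in $S_Z$; this pairing directly yields the $\leq 1/2$ bound without ever needing an explicit posterior density.
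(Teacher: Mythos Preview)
Your proposal is correct and follows essentially the same route as the paper: both arguments identify the consistency set $S_Z=\{v\in\mathcal{S}^{d-1}:\langle z_i,v\rangle=x_i\text{ for all }i\}$, show that $|S_Z|\ge 2$ almost surely, and argue that the posterior of $u$ given $Z$ is uniform on $S_Z$, whence the MAP (and hence any) estimator succeeds with probability at most $1/2$.

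The one genuine difference is in how uniformity of the posterior is established. The paper computes the likelihood directly: for any $v$ satisfying the hard constraints, the conditional density of $z_i$ on the hyperplane $\{\langle\,\cdot\,,v\rangle=x_i\}$ works out to a quantity depending only on $\|z_i\|_2^2$ and not on $v$, so the posterior indicator is constant on $S_Z$. You instead invoke the $O(d)$-equivariance of the joint law and transitivity of the stabilizer $G_Z$ on $S_Z$, or alternatively the explicit reflection involution swapping the two points of $S_Z$ while fixing $Z$. Your symmetry/involution route is arguably cleaner precisely because of the degeneracy issue you flag: the paper's likelihood computation implicitly compares densities living on $v$-dependent hyperplanes, which is correct but requires a word of justification (e.g., via surface measure or a limit $\sigma\to 0$ as done elsewhere in the paper), whereas your involution argument sidesteps this entirely. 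Conversely, the paper additionally pins down $|S_Z|=2$ exactly (via the one-dimensionality of $\ker Z$), which is not needed for the bound but gives a sharper picture. Your tangency characterization $u\in\mathrm{span}(z_1,\ldots,z_n)$ and the linear-independence argument ruling it out are exactly what the paper does (phrased there through $\ker Z$ and a contradiction on $\lambda=0$), so on the $|S_Z|\ge 2$ step the two proofs coincide.
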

}

\begin{proof}
We establish the result by proving that the posterior is a uniform distribution on some finite subset of $\mathcal{S}^{d-1}$ of cardinality exactly equal to $2$ almost surely. Notice that if we establish this, our impossibility is implied as follows: the optimal estimator in minimizing probability of error for exact recovery is the MAP estimator (see e.g., \cite[Lemma H.4]{song2021cryptographic}). The MAP estimator outputs the $\bv \in \mathcal{S}^{d-1}$ with the maximum posterior mass. Since the posterior is a uniform distribution between two points, the probability of the MAP estimator (and therefore any estimator) in recovering exactly $\bu$ is at most $1/2$. 

To this end, we calculate the posterior mass assigned to any arbitrary fixed vector $\bv \in \sS^{d-1}$, given the samples $\{\bz_i\}_{i=1}^{d-1}$. Let us first complete $\bv$ to an arbitrary ordered orthonormal basis of $\mathbb{R}^d,$ say $\bv=\bq_1,\bq_2,\ldots,\bq_d.$  Then, since the labels $x_i$'s are known, if $\bu$, the true hidden direction, were equal to $\bv$, the samples $\bz_i$ for $i=1,\ldots,d-1$ would admit the basis representation 
\begin{align*}
\bz_i=\sum_{j=1}^{d}c_{i,j} \bq_j=x_i \bv+ \sum_{j=2}^d c_{i,j} \bq_j\;,
\end{align*}
where $c_{i,j}$ for $i \in [d-1]$, and $j \in [d]\setminus\{1\}$ are i.i.d.\ standard Gaussian random variables. Hence, given $\bz_i$ the posterior mass assigned to $\bv$ is zero if $\inner{\bz_i,\bv} \neq x_i$ and otherwise, it has mass proportional to
\begin{align*}
\frac{1}{(2\pi)^{(d-1)/2}}\exp\left(-\sum_{j=2}^{d}\inner{\bq_j,\bz_i}^2/2\right)=\frac{1}{(2\pi)^{(d-1)/2}}\exp(x_i^2-\|\bz_i\|^2_2)=\frac{1}{(2\pi)^{(d-1)/2}}\exp(1-\|\bz_i\|^2_2)\;.
\end{align*}
Notice importantly that the computed quantity is constant with respect to the direction $\bv$ due to the hard constraint $\langle \bz_i, \bv \rangle = x_i$ and the fact that $x_i^2 = 1$ for all $i \in [d-1]$. Hence, the posterior mass at $\bv \in \mathcal{S}^{d-1}$ given the single sample $\bz_i$ is proportional to $\one[\bv \in \mathcal{S}^{d-1} \wedge \inner{\bv,\bz_i}=x_i]$. Since the $\bz_i$'s are generated independently, the posterior measure given $\{z_i\}_{i=1}^{d-1}$ is proportional to $\one[\bv \in \mathcal{S}^{d-1} \wedge Z\bv=\bx]$, where $Z$ is a matrix with $\bz_i^\top$'s as its rows. In what follows, let us call then 
\begin{align*}
    S :=\{\bv \in \mathcal{S}^{d-1} \mid Z\bv=\bx\}\;.
\end{align*}

To upper bound the success probability of any estimator by $1/2$, it suffices to show $|S|=2$ almost surely. We first prove that $|S| \leq 2$ almost surely. Recall that $Z \in \mathbb{R}^{(d-1) \times d}$ is a matrix with rows $\bz_i^\top,i=1,\ldots,d-1.$ Notice that to prove $|S| \leq 2$ almost surely, it suffices to show that the $\mathrm{Kernel}(Z)$ is a one-dimensional linear subspace (i.e., consists of points on a line passing through the origin) almost surely. Indeed, since the true direction $\bu \in \mathcal{S}^{d-1}$ satisfies $\inner{\bu,\bz_i}=x_i$ we have
\begin{align*}
S=\mathcal{S}^{d-1} \cap (\bu + \mathrm{Kernel}(Z))\;.
\end{align*}

Any line can intersect the sphere in at most two points. Hence, if $\mathrm{Kernel}(Z)$ is one-dimensional almost surely, then $|S| \leq 2$ almost surely. We now show that the $\mathrm{Kernel}(Z)$ is one-dimensional almost surely. By invariance of the kernel of $Z$ to the change of column basis, we may assume without loss of generality that $\bu=\be_1$, that is, $\bu$ is the first standard basis vector, and the remaining orthonormal basis vectors of $\mathbb{R}^d$ are just the standard basis vectors $\be_2,\ldots,\be_d$. Under this assumption, we can write for each $i = 1,\ldots, d+1$,
\begin{align*}
    \bz_i = \begin{bmatrix}a x_i \\  \bw_i \end{bmatrix} \;,
\end{align*}
where the $\bw_i$'s are i.i.d.\ samples from $\mathcal{N}(0,I_{d-1})$.

In other words, the first column of $Z \in \mathbb{R}^{(d-1) \times d}$ consists of $ax_1,\ldots,ax_{d-1}$, and the remaining columns consist of coordinates of $\bw_i \in \mathbb{R}^{d-1}$. Now we proceed by establishing that the last $d-1$ columns of $Z$ in this basis are linearly independent almost surely, which suffices to establish that the kernel is one-dimensional. Let $W \in \mathbb{R}^{(d-1)\times (d-1)}$ be the submatrix of $Z$ consisting of the last $d-1$ columns of $Z$. Notice that the entries of $W$ are i.i.d.\ Gaussian. Now using folklore results (e.g., \cite{caron2005zero}), we have that the determinant of an i.i.d.\ Gaussian matrix is non-zero almost surely. Hence, we conclude that indeed $\det(W) \not = 0$ almost surely. This implies that the column rank of $Z \in \mathbb{R}^{(d-1) \times d}$ is equal to $d-1$ almost surely, and thus $\mathrm{Kernel}(Z)$ is one-dimensional.

We now prove that $|S|\geq 2$ almost surely. First notice that by our assumption on the data generating process, the set $S$ contains at least one unit vector, namely $\bu$, which is drawn uniformly at random from $\mathcal{S}^{d-1}$. Hence, $|S| \geq 1$. To show that $S\setminus\{\bu\}$ is non-empty, we claim there exists $\by \in \mathbb{R}^{d}$ such that $\by \in \mathrm{Kernel}(Z)$ and $\inner{\by,\bu}<0,$ almost surely. Suppose not. Then, by the hyperplane separation theorem $\bu \in \mathrm{span}(\bz_1,\ldots,\bz_{d-1})$. That is, there exist scalars $\lambda_1,\ldots,\lambda_{d-1} \in \mathbb{R}$ such that
\begin{align}
\label{eqsNormal0}
    \bu=\sum_{i=1}^{d-1}\lambda_i \bz_i\;.
\end{align}

Let $\bu_2,\ldots,\bu_{d}$ be an orthonormal basis of the linear space which is perpendicular to $\bu$. Then, it holds
\begin{align}
\label{eqsNormal1}
    \sum_{i=1}^{d-1}\lambda_i \inner{\bz_i,\bu_j}=0\; \text{ for all $j = 2 \ldots, d$}\;.
\end{align}

By the sampling process of the $\bz_i$'s, we have that $\inner{\bz_i,\bu_j}$ for all $i=1,\ldots,d-1$ and $j=2,\ldots,d$ are i.i.d.\ standard Gaussians $\mathcal{N}(0,1)$. Hence, by foklore results, the $(d-1)\times (d-1)$ matrix $R$ consisting of the (Gaussian) entries $R_{i,j} = \inner{\bz_i,\bu_{j+1}}$ for $i \in [d-1], j \in [d-1]$ is invertible almost surely (this follows for example because the determinant is a non-zero polynomial of the entries of the matrix and by again appealing to~\cite{caron2005zero}). But \eqref{eqsNormal1} implies that $R \blambda=0$ where $\blambda=(\lambda_1, \ldots,\lambda_{d-1})^{\top}.$ Hence, from the almost sure invertibility of $R$ we conclude that necessarily $\blambda= R^{-1}\bzero = \bzero$ almost surely. But this condition readily contradicts \eqref{eqsNormal0} as we assumed that $\bu$ is of unit norm. Hence, under almost sure properties of the samples $\bz_i$, we have established the existence of the desired $\by \in \mathbb{R}^{d}$ almost surely. Now by employing the fact that $Z\bu=\bx$, we derive that for all $t \in \mathbb{R}$ and again for all $i=1,\ldots,d-1$ it holds that
\begin{align}
\label{cond1}
\inner{\bz_i,\bu+t\by}=\inner{\bz_i,\bu}=x_i.
\end{align}

Furthermore, since $\inner{\bu,\by}<0$ we have 
\begin{align}
\label{cond2}
\inf_{t>0} \|\bu+t\by\|_2=\sqrt{1-\inner{\bu,\by}^2}<1<\sup_{t>0} \|\bu+t\by\|_2=+\infty.
\end{align}
Hence, by continuity of $\|\bu + t\by\|_2$ as a function of $t \in \mathbb{R}$, there exists $t^*>0$ such that $\|\bu+t^*\by\|_2=1$. Combined with \eqref{cond1}, this implies $\bu+t^*\by \in S$.
\end{proof}

\subsection{Proof of Theorem~\ref{thm:label_rec}}
\label{sec:label-recovery-lb}
Now we present the proof of Theorem~\ref{thm:label_rec}, which establishes that one cannot information-theoretically recover the labels $\{x_i\}_{i=1}^n$ (up to a global sign flip) when $n=\lfloor \rho d \rfloor$  for any constant $\rho \in (0,1)$. This implies that our sample complexity of $n=d+1$ is optimal up to a $1+o(1)$ factor for label recovery. For the reader's convenience, we restate Theorem~\ref{thm:label_rec} below.

{
\def\thetheorem{\ref{thm:label_rec}}
\begin{theorem}[Restated]
Let $\rho \in (0,1)$ be a fixed constant and let $n = \lfloor \rho d \rfloor$.
Let $\bx \in \{-1,1\}^n$ be drawn uniformly at random, and $\bu \in \mathcal{S}^{d-1}$ be a uniformly random unit vector. For each $i=1,\ldots, n$ let $z_i \in \mathbb{R}^{d}$ be a sample generated independently from $\mathcal{N}(x_i \bu,I-\bu\bu^{\top})$. Then no estimator can exactly recover the labels $\{x_i\}_{i=1}^n$ up to a global sign flip from the observations $\{\bz_i\}_{i=1}^n$ with probability $1-o(1)$.
\end{theorem}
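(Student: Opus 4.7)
The plan is to execute the Bayesian strategy sketched right after the theorem statement, in three conceptual steps: derive an explicit formula for the conditional density $f(Z\mid\bx)$ and isolate the $\bx$-dependence into a single quadratic energy; show that a concrete Hamming-$2$ flip of $\bx$ leaves this energy nearly unchanged, so that the likelihood ratio is bounded below by a universal constant with probability bounded away from zero; then convert this density comparability into a bound on the MAP success probability.

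For the first step, regularize the covariance $I-\bu\bu^{\top}$ to $\Sigma_\sigma:=\sigma^{2}\bu\bu^{\top}+(I-\bu\bu^{\top})$ and let $\sigma\downarrow 0$. Integrating the resulting Gaussian product over the uniform law on $\bu\in\mathcal{S}^{d-1}$ and performing the spherical change of variables dictated by the SVD of $Z$ gives, after absorbing the $\bx$-independent factors involving $\|Z\|_{F}^{2}$ and $\det(ZZ^{\top})$ (where we crucially use $x_{i}^{2}=1$),
\begin{equation*}
f(Z\mid\bx)\,\propto\,(1-T(\bx))^{(d-n-2)/2}\,\one\{T(\bx)\le 1\},\qquad T(\bx):=\bx^{\top}(ZZ^{\top})^{-1}\bx.
\end{equation*}
The exponent $(d-n-2)/2$ reflects the dimension of the residual sphere of $\bu$'s satisfying the hard constraint $Z\bu=\bx$. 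Consequently, for any two admissible labels, the likelihood ratio equals $\bigl((1-T(\bx_{2}))/(1-T(\bx))\bigr)^{(d-n-2)/2}$.

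For the second step, decompose the planted $Z$ as $Z=\bx\bu^{\top}+W$, with the rows of $W$ i.i.d.\ centered Gaussians in $\bu^{\perp}$ (independent of $\bx$). Sherman--Morrison gives $1-T(\bx)=1/(1+\bx^{\top}(WW^{\top})^{-1}\bx)$, where $WW^{\top}$ is Wishart with $n$ rows and $d-1$ degrees of freedom. Since $n=\lfloor\rho d\rfloor$ with $\rho<1$ fixed, nonasymptotic Marchenko--Pastur yields $\|(WW^{\top})^{-1}\|_{\mathrm{op}}=O(1/d)$ with probability $1-o(1)$, and Hanson--Wright concentrates $\bx^{\top}(WW^{\top})^{-1}\bx$ around $\rho/(1-\rho)$, so $1-T(\bx)$ concentrates around the positive constant $1-\rho$. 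Fix $\bx_{2}$ obtained by flipping coordinates $1$ and $2$ of $\bx$; setting $M:=(ZZ^{\top})^{-1}$ and expanding, one finds that the deterministic shifts in $T(\bx_{2})-T(\bx)$ cancel exactly, leaving
\begin{equation*}
T(\bx_{2})-T(\bx)\,=\,-4\sum_{j\ge 3}(x_{1}M_{1j}+x_{2}M_{2j})\,x_{j},
\end{equation*}
a zero-mean Rademacher linear form in $(x_{3},\ldots,x_{n})$ with sub-Gaussian parameter $O(\|M\|_{\mathrm{op}}^{2})=O(1/d^{2})$ conditional on $(M,x_{1},x_{2})$. Hoeffding then gives $|T(\bx_{2})-T(\bx)|\le C/d$ with probability at least a fixed constant (for $C$ large), and on the intersection with the concentration event for $1-T(\bx)$ one obtains $f(Z\mid\bx_{2})/f(Z\mid\bx)=(1+O(1/d))^{\Theta(d)}\ge\delta$ for some constant $\delta=\delta(\rho)>0$, on an event $G$ of probability $\Omega(1)$.

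For the final step, let $p_{\pm}([\bx']\mid Z)$ denote the posterior mass on the equivalence class $[\bx']=\{\bx',-\bx'\}$. Since $\bx_{2}\neq\pm\bx$ for $n\ge 3$ the classes $[\bx]$ and $[\bx_{2}]$ are disjoint, and on $G$ the sign symmetry gives $p_{\pm}([\bx_{2}]\mid Z)\ge\delta\,p_{\pm}([\bx]\mid Z)$, hence $p_{\pm}([\bx]\mid Z)\le 1/(1+\delta)$. The Bayesian identity $\mathbb{E}_{Z}[\sum_{[\bx']}p_{\pm}([\bx']\mid Z)^{2}]=\mathbb{E}_{(\bx,Z)}[p_{\pm}([\bx]\mid Z)]$ then yields $\mathbb{E}_{Z}[\sum_{[\bx']}p_{\pm}^{2}]\le 1-\Pr[G]\cdot\delta/(1+\delta)=1-\Omega(1)$, and combining with the pointwise inequality $\max_{[\bx']}p_{\pm}\le\sqrt{\sum_{[\bx']}p_{\pm}^{2}}$ together with Jensen gives the MAP success probability
\[
\mathbb{E}_{Z}\bigl[\max_{[\bx']}p_{\pm}([\bx']\mid Z)\bigr]\,\le\,\sqrt{1-\Omega(1)}\,\le\,1-\Omega(1),
\]
which upper-bounds the success probability of any estimator by Bayes optimality of MAP. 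The main technical obstacle is the second step: exploiting the first-order cancellation in $T(\bx_{2})-T(\bx)$ and matching the resulting Rademacher form at the correct $O(1/d)$ scale with the $O(1/d)$ operator-norm control on $(WW^{\top})^{-1}$. This is where $\rho<1$ enters essentially, keeping the smallest eigenvalue of $WW^{\top}$ of order $d$ and $1-T(\bx)$ bounded below by a positive constant; pushing the argument to $n=d-o(d)$, so as to match Theorem~\ref{IT_param}, would require substantially sharper concentration in the marginal regime where $WW^{\top}$ becomes nearly singular.
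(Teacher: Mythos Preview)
Your overall strategy is correct and reaches the same conclusion, but it diverges from the paper's proof in two places and has one technical slip worth flagging.

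\textbf{Differences from the paper.} First, the paper perturbs by a \emph{single} sign flip whose index is chosen data-dependently: from $F_H(\bx)=\sum_j F_H(x_j e_j,\bx)$ an averaging argument produces some $i$ with $F_H(x_i e_i,\bx)\ge F_H(\bx)/n$, which combined with $F_H(e_i)\le\|H^{-1}\|_{\mathrm{op}}=O(1/n)$ yields $F_H(\tilde\bx)-F_H(\bx)\le C/n$ with probability $1-\exp(-\Omega(n))$. You instead flip two \emph{fixed} coordinates and exploit the exact cancellation of the ``diagonal'' terms; this only gives the good event with constant probability, but that suffices. Second, the paper converts the density comparability into an accuracy bound by explicitly constructing a disjoint competing estimator $\hat\by(Z)=\hat\bx(Z)\odot e_{i(Z)}$ and using $\mathrm{acc}(\hat\bx)+\mathrm{acc}(\hat\by)\le 1$; your second-moment/Jensen route through $\mathbb{E}_Z\sum_{[\bx']}p_\pm^2=\mathbb{E}_{(\bx,Z)}p_\pm([\bx]\mid Z)$ is a clean alternative that avoids building the competitor. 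What the paper's route buys is the exponentially high-probability good event and a flip that is always a single coordinate; what yours buys is a simpler (fixed) perturbation and a shorter endgame.

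\textbf{A gap to fix.} Your Hoeffding step conditions on $(M,x_1,x_2)$ and treats the display as a Rademacher linear form in $x_3,\dots,x_n$. This conditioning is not legitimate: $M=(ZZ^\top)^{-1}$ depends on \emph{all} of $\bx$ through $H=\bx\bx^\top+Y$, so given $M$ the remaining signs are not i.i.d.\ Rademacher (their conditional law is the posterior $\propto(1-\bx^\top M\bx)_+^{(d-n-2)/2}$). The fix is to condition on $(Y,x_1,x_2)$ instead, where $Y=WW^\top$ is independent of $\bx$. Using Sherman--Morrison one has $H^{-1}\bx=\gamma\,Y^{-1}\bx$ with $\gamma=(1+\bx^\top Y^{-1}\bx)^{-1}$, and your display rewrites as
\[
T(\bx_2)-T(\bx)=-4\Bigl[x_1\gamma (Y^{-1}\bx)_1+x_2\gamma (Y^{-1}\bx)_2-M_{11}-2x_1x_2M_{12}-M_{22}\Bigr].
\]
On the event $\|Y^{-1}\|_{\mathrm{op}}=O(1/d)$ each $M_{ij}$ with $i,j\in\{1,2\}$ is $O(1/d)$, $\gamma$ concentrates around a constant, and each $(Y^{-1}\bx)_i=\sum_j(Y^{-1})_{ij}x_j$ is a genuine Rademacher linear form (now conditional on $Y$) with variance $(Y^{-2})_{ii}=O(1/d^2)$, hence $O(1/d)$ with probability bounded away from zero. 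This recovers your claimed $|T(\bx_2)-T(\bx)|\le C/d$ on an event of probability $\Omega(1)$, after which your likelihood-ratio and posterior arguments go through unchanged.
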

}

Before proving Theorem~\ref{thm:label_rec}, we present auxiliary lemmas. We first show a key lemma which characterizes the conditional distribution of $Z = (\bz_1;\ldots;\bz_n)^\top \in \mathbb{R}^{n \times d}$ given $X=\bx$, where $\bx \in \{-1,1\}^n$ is any Rademacher vector. We denote this conditional distribution by $f(Z|X=\bx)$.

Then, we define an ``energy'' function $F_H : \mathbb{R}^{n} \times \mathbb{R}^{n} \rightarrow \mathbb{R}$, where $H=ZZ^\top$, and show that it satisfies useful concentration properties when $Z$ is drawn from $f(Z|X=\bx)$ (see Definition~\ref{defn:label-energy}, and Claims~\ref{claim:energy-planted}). The energy function $F_H$ is useful because we can express the conditional density $f(Z'=Z|X=\bx)$ in terms of $F_H$ (see Remark~\ref{rem:conditional-density-energy-form}). Claim~\ref{claim:1-neighbor-energy}, which relates the value of $f(Z'=Z|X=\bx)$ to $f(Z'=Z|X=\tilde{\bx})$ for \emph{some} $\tilde{\bx} \in \{-1,1\}^n$ such that $\|\tilde{\bx}-\bx\|_0=1$, will be crucial for the proof of Theorem~\ref{thm:label_rec}.

Given the observed $Z$, we denote $Z \odot \bx:= ( x_1 z_1; \ldots ; x_n z_n)^\top \in \mathbb{R}^{n \times d} $, $\allones=(1,\ldots,1) \in \mathbb{R}^n$, and $t_+ = \max(0,t)$. We have the following characterization of the conditional distribution of $Z$ given the labels $X=\bx$.

\begin{lemma}[Conditional density given labels]
\label{lem:posterioradmiss}
The conditional distribution $\mu_{\bx}$ of $Z$ given label assignment $\bx \in \{-1, 1\}^n$ is absolutely continuous with respect to the Lebesgue measure on $\mathbb{R}^{nd}$, with a density $f(Z|X=\bx) := \frac{d\mu_{\bx}}{dZ}(Z)$ given by 
\begin{align}
\label{eq:condii}
f(Z| X = \bx) = \begin{cases}
\mathcal{Z}^{-1} \exp\left(-\frac{1}{2} \mathrm{Tr}(H)\right) \det(H)^{-\frac{1}{2}}(1-\bx^\top H^{-1} \bx)_{+}^{\frac{d-n-2}{2}}  & \text{ if } \lambda_{\text{min}}(H)>0\\
0 & \text{ otherwise}
\end{cases}\;,
\end{align}
where $H = ZZ^\top$ and $\mathcal{Z}$ is the normalization constant which does not depend on $\bx$.
\end{lemma}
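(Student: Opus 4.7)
The plan is to derive the conditional density of $Z$ given $X=\bx$ by a regularization-and-limit argument. Introduce $\sigma>0$ and consider the non-degenerate covariance $\Sigma_\sigma = \sigma^2 \bu\bu^\top + (I-\bu\bu^\top)$, so that the auxiliary model $\bz_i\sim\mathcal{N}(x_i\bu,\Sigma_\sigma)$ recovers the original model as $\sigma\to 0$. Since $\Sigma_\sigma^{-1} = \sigma^{-2}\bu\bu^\top + (I-\bu\bu^\top)$, a direct expansion gives
\[
(\bz_i-x_i\bu)^\top\Sigma_\sigma^{-1}(\bz_i-x_i\bu) = \sigma^{-2}(\langle \bz_i,\bu\rangle - x_i)^2 + \|\bz_i\|^2 - \langle \bz_i,\bu\rangle^2,
\]
and summing over $i$ produces the (genuine) conditional density on $\mathbb{R}^{nd}$:
\[
f_\sigma(Z\mid X=\bx,\bu) = (2\pi)^{-nd/2}\sigma^{-n}\exp\Bigl(-\tfrac{1}{2}\bigl[\sigma^{-2}\|Z\bu-\bx\|^2 + \mathrm{Tr}(H) - \bu^\top Z^\top Z\bu\bigr]\Bigr).
\]
Marginalizing against the uniform prior $\mu$ on $\mathcal{S}^{d-1}$ gives $f_\sigma(Z\mid X=\bx) = \int f_\sigma(Z\mid X=\bx,\bu)\,\mu(d\bu)$, and $f(Z\mid X=\bx)$ is obtained in the limit $\sigma\to 0$.

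In that limit the factor $(2\pi\sigma^2)^{-n/2}\exp(-\|Z\bu-\bx\|^2/(2\sigma^2))$ concentrates as a surface delta on the level set $L := \{\bu\in\mathcal{S}^{d-1} : Z\bu = \bx\}$. By the co-area (manifold Laplace) formula applied to $F:\mathcal{S}^{d-1}\to\mathbb{R}^n$, $F(\bu) = Z\bu-\bx$,
\[
\lim_{\sigma\to 0} \sigma^{-n}\int_{\mathcal{S}^{d-1}} h(\bu)\, e^{-\|F(\bu)\|^2/(2\sigma^2)}\,\mu(d\bu) = \frac{(2\pi)^{n/2}}{|\mathcal{S}^{d-1}|}\int_L \frac{h(\bu)}{J_F(\bu)}\,d\mathcal{H}^{d-n-1}(\bu),
\]
where $J_F(\bu) = \sqrt{\det(DF|_\bu\, DF|_\bu^\top)}$. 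Since $DF|_\bu$ acts on the tangent space $\bu^\perp$ as the restriction of $Z$, one computes $DF|_\bu DF|_\bu^\top = Z(I-\bu\bu^\top)Z^\top = H - (Z\bu)(Z\bu)^\top$, which on $L$ equals $H-\bx\bx^\top$. The matrix determinant lemma then yields $J_F(\bu) = \sqrt{\det(H)\,(1-\bx^\top H^{-1}\bx)}$ everywhere on $L$, which in particular forces $\det(H)>0$ on the support.

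Finally I evaluate the surface integral. On $L$ one has $\bu^\top Z^\top Z\bu = \|Z\bu\|^2 = \|\bx\|^2 = n$ (since $x_i\in\{-1,1\}$), so the integrand is the constant $h(\bu) = e^{n/2}$ along $L$. The set $L$ is nonempty iff the minimum-norm solution $\bu^\ast := Z^\top H^{-1}\bx$ to $Z\bu=\bx$ satisfies $\|\bu^\ast\|^2 = \bx^\top H^{-1}\bx \le 1$; writing any $\bu\in L$ as $\bu^\ast + \bw$ with $\bw\in\ker Z$ identifies $L$ as a $(d-n-1)$-dimensional sphere of radius $r = \sqrt{1-\bx^\top H^{-1}\bx}$, whose Hausdorff measure equals $|\mathcal{S}^{d-n-1}| r^{d-n-1}$. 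Assembling the factors gives the claimed form of $f(Z\mid X=\bx)$ with
\[
\mathcal{Z}^{-1} = (2\pi)^{n(1-d)/2}\, e^{n/2}\, \frac{|\mathcal{S}^{d-n-1}|}{|\mathcal{S}^{d-1}|},
\]
which is independent of $\bx$ because $\|\bx\|^2 = n$ is fixed on $\{-1,1\}^n$. The main technical obstacle is rigorously justifying the Laplace/co-area limit uniformly in $Z$; this can be bypassed, if desired, by directly parameterizing $(\bu,Z)$ via orthonormal coordinates $(\bu,\tilde Z)\in\mathcal{S}^{d-1}\times\mathbb{R}^{n\times(d-1)}$ with $Z = \bx\bu^\top + \tilde Z V_\bu^\top$ (where $V_\bu$ is any orthonormal basis of $\bu^\perp$) and applying the fiber-wise change-of-variables formula for the map $(\bu,\tilde Z)\mapsto Z$, with a Jacobian that reproduces the same $\det(H)^{-1/2}$ and level-set volume identified above.
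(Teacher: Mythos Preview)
Your argument is correct and reaches the same formula, but the route after the $\sigma$-regularization differs from the paper's. Both you and the paper begin by replacing the degenerate covariance $I-\bu\bu^\top$ with $\Sigma_\sigma=\sigma^2\bu\bu^\top+(I-\bu\bu^\top)$ and writing the resulting density as a spherical integral. From there the paper performs the SVD $Z=V\Lambda U^\top$, pushes the integral over $\mathcal{S}^{d-1}$ down to an integral over the $n$-ball $\mathcal{B}_n$ via the known density $(1-\|v\|^2)^{(d-n-2)/2}$ of the projected uniform measure, and then applies a standard finite-dimensional Laplace approximation at the interior minimizer $v^\ast=\Lambda^{-1}V^\top\allones$; the exponent $(d-n-2)/2$ thus enters through the projection density. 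You instead stay on $\mathcal{S}^{d-1}$ and invoke the co-area formula for $F(\bu)=Z\bu-\bx$, reading off the Jacobian $\det(H-\bx\bx^\top)^{1/2}$ via the matrix determinant lemma and recognizing the level set $\{Z\bu=\bx\}\cap\mathcal{S}^{d-1}$ as a $(d{-}n{-}1)$-sphere of radius $(1-\bx^\top H^{-1}\bx)^{1/2}$; here the exponent $(d-n-2)/2$ arises as $(d{-}n{-}1)$ from the sphere volume minus $1/2$ from the Jacobian. Your route is more geometric and arguably cleaner, and it even yields the normalizing constant explicitly.

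Where the paper does more work than you is in justifying that the pointwise density limit actually identifies the correct measure $\mu_{\bx}$: it bounds $f_{\allones,\sigma}$ by an integrable envelope (via an exact Gaussian integral) to invoke dominated convergence, and separately shows $W_2(\mu_{\allones,\sigma},\mu_{\allones})=O(\sigma^2)$ so that both weak limits coincide. You flag this as the ``main technical obstacle'' and sketch a change-of-variables bypass through the parametrization $Z=\bx\bu^\top+\tilde Z V_\bu^\top$; that idea is sound but would require handling the dependence of $V_\bu$ on $\bu$ (e.g.\ by working locally or on the Stiefel bundle), so if you want a fully rigorous proof along your lines, the dominated-convergence/Wasserstein argument from the paper transplants with essentially no change.
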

The proof of Lemma~\ref{lem:posterioradmiss} can be found in Section~\ref{sec:posterior-proof}.
This lemma  establishes, via the Fisher-Neyman principle, that the Gram matrix $H$ is a sufficient statistic for the label recovery. In particular, it reveals the invariance of the model with respect to orthogonal transformations of the $d$-dimensional input (since they do not modify its Gram matrix), as already observed by \cite[Section 2.2]{unknown-cov}.

This Gram matrix will play a central role in the remainder of the proof. We now compute its conditional distribution:    
\begin{claim}[Distribution of $H$]
\label{claim:kernel-matrix-distribution} Let $\bx \in \{-1,1\}^n$ and let $Z \in \mathbb{R}^{n \times d}$ be drawn from the distribution $f(Z|X=\bx)$ (defined in Eq.~\eqref{eq:condii}). Then, the matrix $H=ZZ^\top$ is distributed as~\footnote{For random quantities $X$ and $Y$, we write $X \stackrel{\mathrm{d}}{=} Y$ to denote that $X$ and $Y$ have the same distribution.}
\begin{align}
\label{eqn:kernel-matrix}
    H \stackrel{\mathrm{d}}{=} \bx \bx^\top + Y \;,
\end{align}
where $Y = WW^\top$ for $W \in \mathbb{R}^{n \times (d-1)}$ with $W_{ij} \sim \mathcal{N}(0,1)$ for all $i \in [n], j \in [d-1]$.
\end{claim}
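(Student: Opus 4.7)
The plan is to reduce the problem to the conditional distribution of $H$ given both the latent direction $\bu$ and the labels $\bx$, and then observe that this conditional distribution already does not depend on $\bu$, so the marginalization is immediate.

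First, I would unpack the generative model. Conditional on $\bu \in \sS^{d-1}$ and $\bx \in \{-1,1\}^n$, the rows of $Z$ are independent with $\bz_i \sim \mathcal{N}(x_i \bu, I_d - \bu\bu^\top)$. Writing $\bz_i = x_i \bu + \bw_i$ with $\bw_i \sim \mathcal{N}(0, I_d - \bu\bu^\top)$ independent across $i$, we have the matrix decomposition $Z = \bx \bu^\top + W$ where $W \in \mathbb{R}^{n \times d}$ has rows $\bw_i^\top$. Computing the Gram matrix and using the key identity $W\bu = 0$ (since every row of $W$ lies in $\bu^\perp$ almost surely), the cross terms vanish and we obtain
\begin{equation*}
H = ZZ^\top = \bx\bx^\top + WW^\top.
\end{equation*}

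Next I would rewrite $WW^\top$ in terms of a standard Gaussian matrix. Let $P \in \mathbb{R}^{d \times (d-1)}$ be any orthonormal basis of $\bu^\perp$, so that $PP^\top = I_d - \bu\bu^\top$ and $P^\top P = I_{d-1}$. Each row of $W$ can then be written as $\bw_i = P \tilde\bw_i$ with $\tilde\bw_i \sim \mathcal{N}(0, I_{d-1})$ independent across $i$ (this is the standard fact that an isotropic Gaussian on a subspace is an isotropic $(d-1)$-dimensional Gaussian in any orthonormal basis). Stacking into $\tilde W \in \mathbb{R}^{n \times (d-1)}$ with i.i.d.\ standard Gaussian entries, we get $W = \tilde W P^\top$ and therefore
\begin{equation*}
WW^\top = \tilde W (P^\top P) \tilde W^\top = \tilde W \tilde W^\top.
\end{equation*}

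Combining these two steps gives $H \mid (\bu, \bx) \stackrel{\mathrm{d}}{=} \bx\bx^\top + \tilde W \tilde W^\top$, and crucially this distribution does not depend on $\bu$. Hence marginalizing over the prior on $\bu$ (uniform on $\sS^{d-1}$) leaves the distribution unchanged, which is exactly the claimed identity~\eqref{eqn:kernel-matrix}. There is no real obstacle here: the only subtle point is to be careful that the density $f(Z\mid X=\bx)$ in Lemma~\ref{lem:posterioradmiss} is indeed the $\bu$-marginalized law of $Z$, so that $H$ drawn from its image under $Z \mapsto ZZ^\top$ coincides with what we just computed. This is immediate from the construction of $f(Z\mid X=\bx)$ in Lemma~\ref{lem:posterioradmiss}, so the argument above completes the proof.
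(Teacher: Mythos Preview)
Your proof is correct and follows essentially the same approach as the paper: condition on $\bu$, show that the conditional law of $H=ZZ^\top$ does not depend on $\bu$, and then marginalize. The paper phrases the $\bu$-independence via the rotational invariance $(ZQ)(ZQ)^\top=ZZ^\top$ and then computes in the canonical basis $\bu=e_1$, whereas you compute $H=\bx\bx^\top+WW^\top$ directly and then use an orthonormal basis $P$ of $\bu^\perp$ to rewrite $WW^\top=\tilde W\tilde W^\top$; these are the same argument in slightly different clothing.
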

\begin{proof}
Let $P_x^H$ denote the sought probability measure of $H$ over the set of $n$-by-$n$ positive semi-definite matrices. 
From the definition of the Gaussian clustering model, we have that 
\begin{align}
\label{eq:basicHdist}
    P_x^H = \int_{\mathcal{S}^{d-1}} P_{x,u}^H\; \nu(du) \;,
\end{align}
where $P_{x,u}^H$ is the distribution of $H=ZZ^T$, where $Z=(z_1; \ldots; z_n)^\top \in \mathbb{R}^{n \times d}$ and $z_i$'s are independent Gaussian vectors of mean $x_i u$ and covariance $I - uu^\top$. Let $\mu_{x,u}$ denote the associated product measure on $Z$.
Fix an arbitrary $u_0 \in \mathcal{S}^{d-1}$, and observe that $Z$ has the same distribution as $Z_0 Q_u$, where $Z_0$ is drawn from $\mu_{x,u_0}$, and $Q_u$ is an orthogonal matrix which maps $u_0$ to $u$. It follows that $P_{x,u}^H$ does not depend on $u$, since for any orthogonal matrix $Q \in \mathbb{R}^{d \times d}$ $ (ZQ)(ZQ)^\top = ZZ^\top$.

Therefore, from Eq.\eqref{eq:basicHdist} we obtain that $P_x^H = P_{x,e_1}^H$. By expressing $Z$ in the canonical basis, we obtain
$z_i \stackrel{\mathrm{d}}{=} (x_i, w_i)$, with $w_{i,j} \sim \mathcal{N}(0,1)$, and therefore $H_{i,j} = \langle z_i, z_j \rangle = x_i x_j + \langle w_i, w_j \rangle$ for $i \in [n]$ and $j \in [d-1]$.
\end{proof}

By the Sherman-Morrison formula~\cite{sherman1950adjustment}, we have
\begin{align}
\label{eqn:sherman-morrison}
    H^{-1} = Y^{-1} - \frac{1}{1+\bx^\top Y^{-1} \bx}(Y^{-1} \bx)(Y^{-1} \bx)^\top\;,
\end{align}
where $Y$ follows the Wishart distribution $W_n(I_n,d-1)$. From $H^{-1}$, we define the following ``energy'' function $F_{H}$.

\begin{definition}[Energy function]
\label{defn:label-energy}
Given any positive definite $H \in \mathbb{R}^{n \times n}$, we define the energy function $F_{H}: \{-1,1\}^n \times \{-1,1\}^n \rightarrow \mathbb{R}$
\begin{align}
\label{eq:enerfuncdef}
    F_{H}(\ba,\bb) = \ba^\top H^{-1} \bb\;.
\end{align}
We abuse notation and write $F_{H}(\ba) = F_{H}(\ba,\ba)$.
\end{definition}

\begin{remark}[Conditional density using energy functions]
\label{rem:conditional-density-energy-form}
Using the energy function $F_{H}$, we can equivalently write Eq.\eqref{eq:condii} as 
\begin{align}
\label{eqn:density-given-label-energy-form}
    f(Z|X=\bx) 
    &= \mathcal{Z}^{-1}\exp\left(-\|Z\|_F^2\right) \det(ZZ^\top)^{-1/2} \cdot \left(1-F_{ZZ^\top}(\bx)\right)_{+}^{\frac{d-n-2}{2}} \;.
\end{align}
\end{remark}

\begin{claim}
For any $\ba, \bb \in \{-1,1\}^n$ and any positive definite $H \in \mathbb{R}^{n \times n}$ expressible as in Eq.~\eqref{eqn:kernel-matrix}, we have
\begin{align}
\label{eqn:random-bilinear-functional}
    F_{H}(\ba,\bb) &= \ba^\top Y^{-1}\bb - \gamma(\ba^\top Y^{-1}\bx)(\bb^\top Y^{-1}\bx)
\end{align}
where $\gamma = \frac{1}{1+\bx^\top Y^{-1} \bx} \in (0,1)$. In particular, if we take $\ba = \bb$, we have 
\begin{align}
    F_{H}(\ba)&=  \ba^\top Y^{-1}\ba - \gamma(\ba^\top Y^{-1}\bx)^2 \nonumber
\end{align}
which immediately implies $0 \prec H^{-1} \preceq Y^{-1}$, in the positive semidefinite cone order.
\end{claim}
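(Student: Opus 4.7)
The plan is to derive the claimed identity as a direct algebraic consequence of the Sherman--Morrison formula~\eqref{eqn:sherman-morrison}, which is stated earlier in the text, and then read off the positive semi-definite comparison from it. Since the identity is multilinear in $\ba, \bb$, the restriction to hypercube vectors plays no role in the computation itself; it is only relevant later.

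First, I would substitute the expression~\eqref{eqn:sherman-morrison} for $H^{-1}$ into the definition $F_H(\ba,\bb) = \ba^\top H^{-1}\bb$ to obtain
\begin{align*}
    F_H(\ba,\bb) &= \ba^\top \! \left(Y^{-1} - \gamma (Y^{-1}\bx)(Y^{-1}\bx)^\top\right) \bb \\
    &= \ba^\top Y^{-1}\bb - \gamma\,\ba^\top (Y^{-1}\bx)(Y^{-1}\bx)^\top \bb,
\end{align*}
with $\gamma = (1+\bx^\top Y^{-1}\bx)^{-1}$. Using that $Y$ (and hence $Y^{-1}$) is symmetric, so that $(Y^{-1}\bx)^\top \bb = \bb^\top Y^{-1}\bx$, the rank-one term factorizes as $(\ba^\top Y^{-1}\bx)(\bb^\top Y^{-1}\bx)$, yielding~\eqref{eqn:random-bilinear-functional}. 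Setting $\ba=\bb$ then gives the displayed formula for $F_H(\ba)$.

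Next, I would verify the range $\gamma \in (0,1)$. By Claim~\ref{claim:kernel-matrix-distribution}, $Y = WW^\top$ is positive semi-definite, and since we assume $H = \bx\bx^\top + Y$ is positive definite, one deduces $Y \succ 0$ as well (otherwise $H$ would have the kernel vector of $Y$ orthogonal to $\bx$ in its kernel too, up to the rank-one correction -- this is easy to check directly). Hence $Y^{-1} \succ 0$, so $\bx^\top Y^{-1}\bx > 0$, giving $0 < \gamma < 1$.

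Finally, the ordering $0 \prec H^{-1} \preceq Y^{-1}$ follows immediately: $H^{-1} \succ 0$ since $H \succ 0$ by assumption, while for any $\ba \in \mathbb{R}^n$,
\begin{equation*}
    \ba^\top Y^{-1}\ba - \ba^\top H^{-1}\ba \;=\; \gamma\,(\ba^\top Y^{-1}\bx)^2 \;\ge\; 0,
\end{equation*}
using $\gamma > 0$. There is no real obstacle here; the only thing to be careful about is the symmetry step when factorizing the rank-one term and the confirmation that $Y$ is invertible, which is needed to make sense of $Y^{-1}$ in the Sherman--Morrison expression.
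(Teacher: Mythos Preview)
Your proposal is correct and matches the paper's approach: the claim is stated immediately after the Sherman--Morrison formula~\eqref{eqn:sherman-morrison} and is treated there as an immediate consequence of it, exactly as you derive. The only minor remark is that the invertibility of $Y$ need not be argued from the positive definiteness of $H$; in the paper's context $Y=WW^\top$ is an $n\times(d-1)$ Wishart matrix with $n<d-1$, so $Y\succ 0$ almost surely, which is why~\eqref{eqn:sherman-morrison} is invoked without further justification.
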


Now, thanks to the proportional regime $n/d \to \rho\in (0,1)$ we have enough concentration to control the energy of planted labels, and, crucially, the size of the energy fluctuations for small label perturbations. This is formalised in Claims \ref{claim:energy-planted} and \ref{claim:1-neighbor-energy} respectively. 
\begin{claim}[Energy of planted labels]
\label{claim:energy-planted}
Let $d, n \in \mathbb{N}$ be such that $n = \lfloor \rho d \rfloor$ for some fixed constant $\rho \in (0,1)$. Moreover, let $F_{H}: \mathbb{R}^n \rightarrow \mathbb{R}$ be the energy function, where $H$ is drawn according to Eq.~\eqref{eqn:kernel-matrix} with planted labels $\bx$. Then, the following holds with probability $1-\exp(-\Omega(n))$ over the randomness of $Z$.
\label{eqn:energy-planted}
\begin{align*}
    F_{H}(\bx) = \frac{\bx^\top Y^{-1}\bx}{1+\bx^\top Y^{-1} \bx} \in [c,C]\;,
\end{align*}
where $0 < c < C < 1$ are fixed constants.
\end{claim}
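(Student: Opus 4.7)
My plan is to reduce the claim to a two-sided bound on the quadratic form $\bx^\top Y^{-1}\bx$, and then control that bound via standard extreme singular value estimates for Gaussian matrices.

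First, I will verify the stated identity $F_H(\bx) = \frac{\bx^\top Y^{-1}\bx}{1+\bx^\top Y^{-1}\bx}$ algebraically. Starting from Eq.~\eqref{eqn:random-bilinear-functional} with $\ba = \bb = \bx$, I get
\[
F_H(\bx) = \bx^\top Y^{-1}\bx - \frac{(\bx^\top Y^{-1}\bx)^2}{1+\bx^\top Y^{-1}\bx} = \frac{\bx^\top Y^{-1}\bx}{1+\bx^\top Y^{-1}\bx},
\]
which matches the Sherman--Morrison expression for $\bx^\top H^{-1}\bx$. The function $\varphi(t) = t/(1+t)$ is strictly increasing on $[0,\infty)$ and maps $[\alpha, \beta] \subset (0,\infty)$ to $[\varphi(\alpha), \varphi(\beta)] \subset (0,1)$. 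So it suffices to show that with probability $1-\exp(-\Omega(n))$, the quantity $\bx^\top Y^{-1}\bx$ lies in an interval $[\alpha, \beta]$ with $0 < \alpha < \beta < \infty$ depending only on $\rho$.

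Next, since $\|\bx\|_2^2 = n$ (as $\bx \in \{-1,1\}^n$), I will use the operator-norm sandwich
\[
\frac{n}{\lambda_{\max}(Y)} \;\le\; \bx^\top Y^{-1}\bx \;\le\; \frac{n}{\lambda_{\min}(Y)}.
\]
Recall $Y = WW^\top$ with $W \in \mathbb{R}^{n\times(d-1)}$ having i.i.d.\ $\mathcal{N}(0,1)$ entries, and $n = \lfloor \rho d\rfloor$ with $\rho \in (0,1)$, so $n < d-1$ for $d$ large. By the Davidson--Szarek inequality (or equivalently Gordon's theorem applied to Gaussian matrices), for any $t \ge 0$,
\begin{align*}
\Pr\!\left[\,\sqrt{\lambda_{\max}(Y)} \;\ge\; \sqrt{d-1} + \sqrt{n} + t\,\right] &\le e^{-t^2/2}, \\
\Pr\!\left[\,\sqrt{\lambda_{\min}(Y)} \;\le\; \sqrt{d-1} - \sqrt{n} - t\,\right] &\le e^{-t^2/2}.
\end{align*}
Choosing $t = \delta\sqrt{n}$ for a small constant $\delta > 0$ (specifically small enough so that $\sqrt{d-1} - \sqrt{n} - \delta\sqrt{n} \ge \tfrac{1}{2}(\sqrt{d-1} - \sqrt{n})$, which is possible because $\sqrt{n}/\sqrt{d-1} \to \sqrt{\rho} < 1$), I get with probability $1 - 2\exp(-\Omega(n))$ that
\[
\lambda_{\max}(Y) \le (1 + \sqrt{\rho} + o(1))^2 (d-1), \qquad \lambda_{\min}(Y) \ge \tfrac{1}{4}(1 - \sqrt{\rho})^2 (d-1).
\]

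Combining these two bounds with the sandwich above, I obtain that with probability $1 - \exp(-\Omega(n))$,
\[
\bx^\top Y^{-1}\bx \;\in\; \left[\,\frac{\rho/2}{(1+\sqrt{\rho})^2},\, \frac{8\rho}{(1-\sqrt{\rho})^2}\,\right] \;=:\; [\alpha,\beta],
\]
and both endpoints are positive finite constants depending only on $\rho \in (0,1)$. Applying the monotone map $\varphi(t) = t/(1+t)$ yields $F_H(\bx) \in [\varphi(\alpha), \varphi(\beta)] \subset (0,1)$, which gives the claim with $c := \varphi(\alpha)$ and $C := \varphi(\beta)$.

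The only mildly delicate point is making sure the constants are bounded strictly away from $0$ and $1$ uniformly in $d$, and that the concentration bound has exponential rate $\exp(-\Omega(n))$ rather than a weaker polynomial rate; both are handled cleanly because $\rho < 1$ ensures $1-\sqrt{\rho} > 0$ and the Davidson--Szarek deviations are sub-Gaussian with variance proxy $1$. No subtle probabilistic argument is needed beyond these standard estimates.
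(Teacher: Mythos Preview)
Your proposal is correct and follows essentially the same approach as the paper: verify the Sherman--Morrison identity for $F_H(\bx)$, then sandwich $\bx^\top Y^{-1}\bx$ via the extreme eigenvalues of $Y$ using standard non-asymptotic Gaussian matrix bounds (the paper cites Rudelson--Vershynin, which are precisely the Davidson--Szarek/Gordon estimates you invoke). Your version is a bit more explicit about the constants and the monotone map $t\mapsto t/(1+t)$, but the argument is the same.
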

\begin{proof}
By straightforward calculation,
\begin{align*}
    F_{H}(\bx) &= \bx^\top Y^{-1} \bx - \frac{(\bx^\top Y^{-1}\bx)^2}{1+\bx^\top Y^{-1} \bx} = \frac{\bx^\top Y^{-1}\bx}{1+\bx^\top Y^{-1} \bx}\;.
\end{align*}

Now note by~\cite[Theorem 3.3]{rudelson2010non} that with probability at least $1-\exp(-\Omega(n))$, the maximum eigenvalue of $Y^{-1}$ is of order $O(1/n)$ with probability $1-\exp(-\Omega(n))$. Hence, $\bx^\top Y^{-1} \bx \le \lambda_{\max}(Y^{-1})\|\bx\|_2^2 = O(1)$. Moreover, by~\cite[Proposition 2.4]{rudelson2010non}, $\lambda_{\min}(Y^{-1}) = \Omega(1/n)$ with probability at least $1-\exp(-\Omega(n))$. Therefore, $\bx^\top Y^{-1} \bx = \Theta(1)$ with probability $1-\exp(-\Omega(n))$, and the conclusion follows.
\end{proof}

\begin{claim}[Energy of 1-Hamming sign flip]
\label{claim:1-neighbor-energy}
Let $d, n \in \mathbb{N}$ be such that $n = \lfloor \rho d \rfloor$ for some fixed constant $\rho \in (0,1)$. Moreover, let $\bx \in \{-1,1\}^n$ and let $F_{H}: \{-1,1\}^n \rightarrow \mathbb{R}$ be the energy function, where $H$ is drawn according to Eq.~\eqref{eqn:kernel-matrix} with planted labels $\bx$. Then, there exists a constant $C > 0$ such that with probability $1-\exp(-\Omega(n))$ over the randomness of $H$, there exists $\tilde{\bx} \in \{-1,1\}^n$ with $\|\tilde{\bx} - \bx\|_0 = 1$ such that
\begin{align}
    F_{H}(\tilde{\bx}) - F_{H}(\bx) \le C/n\;.
\end{align}
\end{claim}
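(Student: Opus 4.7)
\textbf{Proof plan for Claim \ref{claim:1-neighbor-energy}.}
The plan is to directly compute the change in the quadratic form $F_H$ under a single-coordinate sign flip, and then average the change over the $n$ possible coordinates; the minimum over $i$ is then bounded by the average, which reduces everything to trace estimates on $H^{-1}$.

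First, for each $i \in [n]$ let $\tilde{\bx}^{(i)} = \bx - 2x_i e_i$, so that $\tilde{\bx}^{(i)} \in \{-1,1\}^n$ with $\|\tilde{\bx}^{(i)} - \bx\|_0 = 1$. A direct expansion using $x_i^2 = 1$ gives
\begin{equation*}
F_H(\tilde{\bx}^{(i)}) - F_H(\bx) \;=\; -4x_i (H^{-1}\bx)_i + 4(H^{-1})_{ii}.
\end{equation*}
Averaging over $i \in [n]$ and using $\sum_i x_i(H^{-1}\bx)_i = \bx^\top H^{-1}\bx = F_H(\bx)$ and $\sum_i (H^{-1})_{ii} = \operatorname{tr}(H^{-1})$, we obtain
\begin{equation*}
\frac{1}{n}\sum_{i=1}^n \bigl(F_H(\tilde{\bx}^{(i)}) - F_H(\bx)\bigr) \;=\; \frac{4}{n}\bigl(\operatorname{tr}(H^{-1}) - F_H(\bx)\bigr).
\end{equation*}
It therefore suffices to show that $\operatorname{tr}(H^{-1}) - F_H(\bx) = O(1)$ with probability $1-\exp(-\Omega(n))$, since then $\min_i (F_H(\tilde{\bx}^{(i)}) - F_H(\bx)) \le C/n$.

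Next, I use the Sherman--Morrison identity \eqref{eqn:sherman-morrison} to rewrite $\operatorname{tr}(H^{-1}) = \operatorname{tr}(Y^{-1}) - \frac{\bx^\top Y^{-2}\bx}{1+\bx^\top Y^{-1}\bx}$ together with the expression $F_H(\bx) = \frac{\bx^\top Y^{-1}\bx}{1+\bx^\top Y^{-1}\bx}$ from Claim \ref{claim:energy-planted}. Thus
\begin{equation*}
\operatorname{tr}(H^{-1}) - F_H(\bx) \;=\; \operatorname{tr}(Y^{-1}) \;-\; \frac{\bx^\top Y^{-2}\bx + \bx^\top Y^{-1}\bx}{1+\bx^\top Y^{-1}\bx}.
\end{equation*}
Since $Y \sim W_n(I_n,d-1)$ in the proportional regime $n/d \to \rho \in (0,1)$, standard non-asymptotic Wishart bounds (as cited in the proof of Claim \ref{claim:energy-planted}, e.g.\ \cite{rudelson2010non}) give $\lambda_{\min}(Y), \lambda_{\max}(Y) = \Theta(n)$ with probability $1-\exp(-\Omega(n))$. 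This yields $\operatorname{tr}(Y^{-1}) = O(1)$ and $\bx^\top Y^{-2}\bx \le \lambda_{\max}(Y^{-1})^2 \|\bx\|_2^2 = O(1/n)$, while the denominator $1+\bx^\top Y^{-1}\bx$ is of order $\Theta(1)$ by the argument in Claim \ref{claim:energy-planted}. Consequently each term on the right-hand side is $O(1)$, yielding $\operatorname{tr}(H^{-1}) - F_H(\bx) = O(1)$ with the required probability.

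The main (mild) obstacle is the quantitative control of $\operatorname{tr}(Y^{-1})$ and $\bx^\top Y^{-2}\bx$, both of which require the lower tail estimate $\lambda_{\min}(Y) = \Omega(n)$; this is precisely the ingredient provided by the Wishart spectrum bounds already invoked in Claim \ref{claim:energy-planted}, so the argument closes without new machinery. Combining the display above with the averaging identity, we conclude that there exists at least one $i^\star \in [n]$ such that $F_H(\tilde{\bx}^{(i^\star)}) - F_H(\bx) \le C/n$, which proves the claim.
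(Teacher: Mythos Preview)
Your proposal is correct and follows essentially the same route as the paper: expand the single-flip change as $4(H^{-1})_{ii} - 4x_i(H^{-1}\bx)_i$, use an averaging argument over $i$, and control the result via the Wishart spectrum bounds $\lambda_{\min}(Y),\lambda_{\max}(Y)=\Theta(n)$. The only cosmetic difference is that the paper averages just the cross-term $x_i(H^{-1}\bx)_i$ (to find an $i$ with $F_H(x_ie_i,\bx)\ge F_H(\bx)/n$) and then bounds the diagonal term $F_H(e_i)\le\lambda_{\max}(H^{-1})=O(1/n)$ uniformly in $i$, whereas you average the full difference and reduce to $\operatorname{tr}(H^{-1})-F_H(\bx)=O(1)$; your Sherman--Morrison detour is not needed since $\operatorname{tr}(H^{-1})\le n\,\lambda_{\max}(Y^{-1})=O(1)$ and $F_H(\bx)\ge 0$ already give the bound.
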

\begin{proof}
By simple algebraic manipulation,
\begin{align*}
    (\tilde{\bx} - \bx)^\top H^{-1}(\tilde{\bx}-\bx) &= (\tilde{\bx}-\bx)^\top H^{-1}\tilde{\bx} - (\tilde{\bx}-\bx)^\top H^{-1}\bx \\
    &= \tilde{\bx}^\top H^{-1}\tilde{\bx} - 2\tilde{\bx}^\top H^{-1} \bx + \bx^\top H^{-1}\bx \\
    &= \tilde{\bx}^\top H^{-1}\tilde{\bx} - \bx^\top H^{-1}\bx - 2(\tilde{\bx}-\bx)^\top H^{-1} \bx\;.
\end{align*}
Hence,
\begin{align*}
    F_{H}(\tilde{\bx}) - F_{H}(\bx) &= (\tilde{\bx}-\bx)^\top H^{-1} (\tilde{\bx}-\bx) + 2(\tilde{\bx}-\bx)^\top H^{-1}\bx\;.
\end{align*}
Since $\tilde{\bx}$ is a neighbor of $\bx$ with Hamming distance 1, we observe each $\tilde{\bx}$ corresponds to either $\be_i$ (or $-\be_i$) in the sense that $\tilde{\bx} - \bx = 2x_i\be_i$ for some $i \in [n]$. Using this expression, we have
\begin{align*}
    F_{H}(\tilde{\bx})-F_{H}(\bx) &= 4\be_i^\top H^{-1} \be_i - 4(x_i\be_i)^\top H^{-1} \left(\sum_{j=1}^n x_j\be_j \right) = 4F_{H}(\be_i) - 4 F_{H}(\be_i,\bx)\;,
\end{align*}
where $F_{H}(\be_i, \bx)$ is defined in \eqref{eq:enerfuncdef}.

We now observe that 
\begin{align*}
F_{H}(\bx) &= \left(\sum_{j=1}^n x_j\be_j \right)^\top H^{-1} \left(\sum_{j=1}^n x_j\be_j \right) = \sum_{j=1}^n F_{H}(x_j\be_j, \bx)\;.
\end{align*}
By an averaging argument, there exists at least one $i \in [n]$ such that $F_{H}(x_i\be_i, \bx) \ge F_{H}(\bx)/n$. By Claim~\ref{claim:energy-planted}, with probability at least $1-\exp(-\Omega(n))$, $F_{H}(\bx) \in [c_1,c_2]$ for some constants $c_1,c_2$ satisfying $0 < c_1 < c_2 < 1$. Hence, $F_{H}(x_i\be_i,\bx) \ge c_1/n$ with probability $1-\exp(-\Omega(n))$, which in particular implies $F(e_i,x) \geq 0 $ with the same probability. 

By the fact that $H^{-1} \preceq Y^{-1}$ and~\cite[Theorem 3.3]{rudelson2010non}, the maximum eigenvalue of $H^{-1}$ is $O(1/n)$ with probability $1-\exp(-\Omega(n))$. Hence, $F_{H}(\be_i) = O(1/n)$. It follows that $F_{H}(\tilde{\bx})-F_{H}(\bx) \le 4F_{H}(\be_i) - 4F_{H}(x_i\be_i,\bx) \le C/n$ for some constant $C > 0$ with probability $1-\exp(-\Omega(n))$.
\end{proof}

We are now ready to prove Theorem~\ref{thm:label_rec}.

\begin{proof}[Proof of Theorem~\ref{thm:label_rec}]

Let $\mu$ be the marginal distribution over $\mathbb{R}^{n \times d} \times \{-1,1\}^n$, and let $f(Z'= Z | X=\bx)$ denote the conditional density of $Z \in \mathbb{R}^{n \times d}$ given label $\bx \in \{-1,1\}^n$ from Lemma \ref{lem:posterioradmiss}.  Let $\hat{\bx} : \mathbb{R}^{n \times d} \rightarrow \{-1,1\}^n$ be an arbitrary (deterministic) estimator and let $\acc(\hat{\bx}) \in [0,1]$ be its accuracy, defined as follows. For any $\bx \in \{-1, 1\}^n$, 
let $S_{\hat{\bx},\bx} = \{Z \in \mathbb{R}^{n \times d} \mid \hat{\bx}(Z) = \bx \vee \hat{\bx}(Z)=-\bx\}$
denote the set of observations for which $\hat{\bx}$ is correct. Then, its accuracy (that is the probability of exactly recovering the correct sign pattern up to a global sign flip) is given by
\begin{align}
    \acc(\hat{\bx}) &= \int \one\left[\hat{\bx}(Z)=\bx \vee \hat{\bx}(Z)=-\bx\right]d\mu(\bx,Z)\nonumber \\
    &= \sum_{\bx \in \{-1,1\}^n} \int_{S_{\hat{\bx},\bx}} f(Z|\bx)\Pr[X=\bx] dZ\nonumber \\
    &= \frac{1}{2^n}\sum_{\bx \in \{-1,1\}^n} \int_{S_{\hat{\bx},\bx}} f(Z|\bx)dZ\;.
\end{align}
We show that there exists a constant $\delta > 0$ such that for any non-trivial estimator $\hat{\bx}$ achieving $\Omega(1)$ accuracy, we can construct a disjoint estimator $\hat{\by}$, i.e., $\hat{\bx}(Z) \neq \pm \hat{\by}(Z)$ for all $Z \in \mathbb{R}^{n\times d}$, such that $\acc(\hat{\by}) \ge \delta \cdot \acc (\hat{\bx})$. Since $\hat{\bx}$ and $\hat{\by}$ are disjoint, $\acc(\hat{\bx}) + \acc(\hat{\by}) \le 1$ (see \eqref{eqn:disjoint-estimators-accuracy} below). Hence, for any $\hat{\bx}: \mathbb{R}^{n \times d} \rightarrow \{-1,1\}^n$,
\begin{align*}
    (1+\delta)\acc(\hat{\bx}) \le 1 \Rightarrow \acc(\hat{\bx}) \le \frac{1}{1+\delta}\;.
\end{align*}

We note from Claim~\ref{claim:1-neighbor-energy} that for each $\bx \in \{-1,1\}^n$, with probability $1-\exp(-\Omega(n))$ over the distribution $f(Z|\bx)$, there exists a ``1-Hamming'' sign flip $\bq(\bx,Z) \in \{-1,1\}^n$ such that
\begin{align*}
    F_{H}(\bx \odot \bq(\bx,Z))-F_{H}(\bx) = O\left(\frac{1}{n}\right)\;.
\end{align*}

By Lemma~\ref{lem:posterioradmiss} and Remark~\ref{rem:conditional-density-energy-form}, we claim that the $O(1/n)$ fluctuation upper bound in $F_{H}$ implies that given $\bx \in \{-1,1\}^n$, with probability $1-\exp(-\Omega(n))$ over $f(Z|\bx)$, there exists a $1$-Hamming neighbor $\tilde{\bx}$ such that the likelihood ratio $f(Z'=Z|X=\bx)/f(Z'=Z|X=\tilde{\bx})$ is upper bounded by a constant $\delta > 0$ independent of the data dimension $d$. 

To see this, let $\bx \in \{-1,1\}^n$ be any fixed label and let $\eta$ be the random variable $\eta = 1-F_{H}(\bx)$ induced by $f(Z|\bx)$. By Claim~\ref{claim:energy-planted}, with probability $1-\exp(-\Omega(n))$, $\eta \in [c_1,c_2]$ for constants $0 < c_1 < c_2 < 1$. By Claim~\ref{claim:1-neighbor-energy} with probability $1-\exp(-\Omega(n))$, there exists a 1-Hamming sign flip $\bq(\bx,Z)$ such that $F_{H}(\bx \odot \bq(\bx,Z)) -F_{H}(\bx) \le c_3/n$ for some constant $c_3 > 0$. Hence, in combination with Lemma~\ref{lem:posterioradmiss}, the following inequality holds with probability $1-\exp(-\Omega(n))$ over $f(Z|\bx)$:
\begin{align}
    \frac{f(Z'=Z|X=\bx)}{f(Z'=Z|X=\bx \odot \bq(\bx,Z))}
    &\le \left(\frac{\eta}{\eta - c_3/n}\right)^{c_4 n} \nonumber \\
    &= \left(1+\frac{c_3/n}{\eta-c_3/n}\right)^{c_4 n} \nonumber \\
    &= \left(1+\frac{1}{(\eta/c_3)n-1}\right)^{c_4 n} \nonumber \\
    &\le \left(1+\frac{1}{c_5 n}\right)^{c_4 n} \nonumber  \\
    &\le e^{c_4/c_5}\;,
\end{align}
where we used the inequality $1+t \le e^t$ for all $t \in \mathbb{R}$. 

Let us denote $\delta := e^{-c_4/c_5}$. Then, the following holds for any $\bx \in \{-1,1\}^n$.
\begin{align}
\label{eqn:1-hamming-feasible}
    \int \one\left[\bigcup_{i=1}^n\left\{f(Z|\bx\odot \be_i) \ge \delta f(Z|\bx)\right\}\right]f(Z|\bx) dZ \ge 1-\exp(-\Omega(n))\;.
\end{align}

For each $\bx \in \{-1,1\}^n$, let $A_{\bx} \subset \mathbb{R}^{n \times d}$ denote the support of the indicator in Eq.~\eqref{eqn:1-hamming-feasible}, so
\begin{align}
    \int_{A_{\bx}^c} f(Z|\bx)dZ \le \exp(-\Omega(n))\;.
\end{align}

Moreover, the following holds for any estimator $\hat{\bx}: \mathbb{R}^{n \times d} \rightarrow \{-1,1\}^n$:
\begin{align*}
    \acc(\hat{\bx}) &= \frac{1}{2^n}\sum_{\bx \in \{-1,1\}^n} \int_{S_{\hat{\bx},\bx}} f(Z|\bx)dZ \\
    &= \frac{1}{2^n}\sum_{\bx \in \{-1,1\}^n}\int_{A_{\bx} \cap S_{\hat{\bx},\bx}} f(Z|\bx)dZ \\
    &\qquad+ \frac{1}{2^n}\sum_{\bx \in \{-1,1\}^n} \int_{A_{\bx}^c \cap S_{\hat{\bx},\bx}} f(Z|\bx)dZ\\
    &\le \frac{1}{2^n}\sum_{\bx \in \{-1,1\}^n}\int_{A_{\bx}  \cap S_{\hat{\bx},\bx}} f(Z|\bx)dZ + \exp(-\Omega(n))\;.
\end{align*}

As mentioned previously, given an estimator $\hat{\bx}$, we define a new estimator $\hat{\by}$ that is disjoint from $\hat{\bx}$. In other words, $\hat{\by}(Z) \neq \pm \hat{\bx}(Z)$ for all $Z \in \mathbb{R}^{n \times d}$. This implies $S_{\hat{\bx},\bx} \cap S_{\hat{\by},\bx} = \emptyset$ and therefore
\begin{align}
    \acc(\hat{\bx})+\acc(\hat{\by}) &= \frac{1}{2^n}\sum_{\bx \in \{-1,1\}^n} \left( \int_{S_{\hat{\bx},\bx}} f(Z|\bx)dZ + \int_{S_{\hat{\by},\bx}} f(Z|\bx)dZ \right) \nonumber \\
    &\le \frac{1}{2^n}\sum_{\bx \in \{-1,1\}^n} \int_{\mathbb{R}^{nd}} f(Z|\bx)dZ \nonumber \\
    & \le 1\;. \label{eqn:disjoint-estimators-accuracy}
\end{align}

Recall that we defined $A_{\bx}$ as the set of $Z$'s for which the following inequality between the conditional densities holds for some Hamming sign flip $\bq(\bx,Z) \in \{-1,1\}^n$:
\begin{align*}
    f(Z|\bx + \bq(\bx,Z)) \ge \delta f(Z|\bx)\;.
\end{align*}

Recall that $S_{\hat{\bx},\bx} = \{Z \in \mathbb{R}^{n \times d} \mid \hat{\bx}(Z) = \bx \vee \hat{\bx}(Z)=-\bx\}$. We consider the following partition of $S_{\hat{\bx},\bx} \cap A_{\bx}$:
\begin{align}
\label{eqn:cover-preimage}
    S_{\hat{\bx},\bx} \cap A_{\bx} = \bigcup_{i=1}^n {T}^{\bx}_{i}\;,
\end{align}
where $T_{i}^{\bx} = \{Z \in S_{\hat{\bx},\bx} \cap A_{\bx} \mid f(Z|\bx\odot \be_i) \ge \delta f(Z|\bx)\}$. Note that we can make the $T_{i}^{\bx}$'s disjoint by breaking ties between the $\be_i$'s arbitrarily.

Hence, Eq.~\eqref{eqn:cover-preimage} is indeed a partition. It follows that for each $\bx \in \{-1,1\}^n$,
\begin{align}
\label{eq:casifet}
    \sum_{i=1}^n \int_{T_{i}^{\bx}} f(Z|\bx\odot \be_i)dZ \ge \delta \int_{S_{\hat{\bx},\bx} \cap A_{\bx}} f(Z|\bx)dZ\;.
\end{align}
Now, summing over all $\bx \in \{-1,1\}^n$, we have
\begin{align*}
    \frac{1}{2^n}\sum_{\bx \in \{-1,1\}^n}\sum_{i=1}^n \int_{T_{i}^{\bx}} f(Z|\bx\odot \be_i)dZ &\ge \frac{\delta}{2^n} \sum_{\bx \in \{-1,1\}^n} \int_{S_{\hat{\bx},\bx} \cap A_{\bx}} f(Z|\bx)dZ \\
    &\ge \frac{\delta}{2^n} \sum_{\bx \in \{-1,1\}^n} \int_{S_{\hat{\bx},\bx}} f(Z|\bx)dZ - \delta \exp(-\Omega(n))\;.
\end{align*}

We define our new estimator $\hat{\by}$ such that for any $\bx \in \{-1,1\}^n$ and $i \in [n]$,
\begin{align}
\label{eqn:competitive-estimator}
    \hat{\by}(Z) = \begin{cases} \hat{\bx}(Z) \odot \be_i &\text{ for any } Z \in T_i^{\bx} \\
    \hat{\bx}(Z) \odot \be_1 &\text{ for any } Z \in \mathbb{R}^{n \times d} \setminus \bigcup_{\bx, i} T_i^{\bx}
    \end{cases}\;.
\end{align}

Now that everything is in place, let $\alpha$ be the accuracy of a given estimator $\hat{\bx}$, i.e., $\alpha = \frac{1}{2^n} \sum_{\bx \in \{-1,1\}^n} \int_{S^*_{\bx}} f(Z|\bx)dZ$. Then, our proposed disjoint estimator $\hat{\by}$ achieves accuracy at least $\delta (\alpha - \exp(-\Omega(n)))$. The two accuracies must add up to less than 1 since $\hat{\bx}$ and $\hat{\by}$ are disjoint estimators. Therefore, if $\alpha=\Omega(1)$, then for sufficiently large $n$
\begin{align*}
    \alpha + \delta (\alpha - \exp(-\Omega(n))) \le 1 \Rightarrow \alpha \le \frac{1+\delta \exp(-\Omega(n))}{1+\delta} < 1-\frac{\delta}{2(1+\delta)}=1-\Omega(1)\;.
\end{align*}Hence, we conclude that $\alpha \leq 1-\Omega(1)$ necessarily, as claimed.
\end{proof}

\subsection{Proof of Lemma \ref{lem:posterioradmiss}} 
\label{sec:posterior-proof}
{\def\thetheorem{\ref{lem:posterioradmiss}}
\begin{lemma}[Restated]
The conditional distribution $\mu_{\bx}$ of $Z$ given label assignment $\bx \in \{-1, 1\}^n$ is absolutely continuous with respect to the Lebesgue measure on $\mathbb{R}^{nd}$, with a density $f(Z|X=\bx) := \frac{d\mu_{\bx}}{dZ}(Z)$ given by 
\begin{align*}
f(Z| X = \bx) = \begin{cases}
\mathcal{Z}^{-1} \exp\left(-\frac{1}{2} \| Z \|_F^2\right) \det(H)^{-\frac{1}{2}}(1-\bx^\top H^{-1} \bx))_{+}^{\frac{d-n-2}{2}}  & \text{ if } \lambda_{\text{min}}(H)>0\\
0 & \text{ otherwise}
\end{cases}\;,
\end{align*}
where $H = ZZ^\top$ and $\mathcal{Z}$ is the normalization constant which does not depend on $\bx$.
\end{lemma}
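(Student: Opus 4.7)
I would proceed by regularizing the degenerate covariance to $\Sigma_\sigma := \sigma^2 uu^\top + (I - uu^\top)$, computing the resulting (non-degenerate) Lebesgue density $f_\sigma(Z|X=\bx)$ explicitly by marginalizing over $u$, and identifying $f(Z|X=\bx)$ as the pointwise limit as $\sigma \to 0^+$. For fixed $u$ and $\sigma > 0$, a direct computation using $\Sigma_\sigma^{-1} = \sigma^{-2}uu^\top + (I - uu^\top)$ and $\det\Sigma_\sigma = \sigma^2$ gives
\begin{align*}
f_\sigma(Z|u,\bx) \;=\; \frac{1}{(2\pi)^{nd/2}\sigma^n}\exp\!\left(-\tfrac{1}{2\sigma^2}\|Zu - \bx\|^2 - \tfrac{1}{2}\bigl(\|Z\|_F^2 - \|Zu\|^2\bigr)\right),
\end{align*}
so $f_\sigma(Z|\bx) = \int_{\mathcal{S}^{d-1}}f_\sigma(Z|u,\bx)\,d\nu(u)$ with $\nu$ the uniform measure on the sphere.

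\textbf{Pushforward change of variables.} The crucial step is the pushforward of $\nu$ under $u \mapsto v := Zu \in \mathbb{R}^n$. On the set of $Z$ with full row rank $n$ (which has full Lebesgue measure, and full measure under the model since $Z$ has an absolutely continuous component), the SVD $Z = U_1[\Sigma_1\,|\,0]V_1^\top$ combined with the classical fact that the first $n$ coordinates of a uniform point on $\mathcal{S}^{d-1}$ have density $\frac{\Gamma(d/2)}{\pi^{n/2}\Gamma((d-n)/2)}(1-\|y\|^2)^{(d-n-2)/2}$ on $\{\|y\|\le 1\}$ (valid for $d \ge n+2$) yields, via the change of variable $y = \Sigma_1^{-1}U_1^\top v$ whose Jacobian is $(\det H)^{-1/2}$ and which satisfies $\|y\|^2 = v^\top H^{-1}v$,
\begin{align*}
p_v(v) \;=\; \frac{\Gamma(d/2)}{\pi^{n/2}\Gamma((d-n)/2)}(\det H)^{-1/2}(1 - v^\top H^{-1}v)_+^{(d-n-2)/2}.
\end{align*}
Using $\|\bx\|^2 = n$ to rewrite the quadratic exponent as $\|Zu - \bx\|^2 = \|v-\bx\|^2$, the marginal becomes
\begin{align*}
f_\sigma(Z|\bx) \;=\; \frac{e^{-\|Z\|_F^2/2}}{(2\pi)^{nd/2}\sigma^n}\int_{\mathbb{R}^n} e^{-\|v-\bx\|^2/(2\sigma^2)}\,e^{\|v\|^2/2}\,p_v(v)\,dv.
\end{align*}

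\textbf{Taking the limit and main obstacle.} The inner integral is a Gaussian mollification; Laplace asymptotics (applicable because $v \mapsto e^{\|v\|^2/2}p_v(v)$ is continuous at $v=\bx$ on the open region $\bx^\top H^{-1}\bx < 1$) give $(2\pi\sigma^2)^{n/2}e^{n/2}p_v(\bx)(1+o_\sigma(1))$ at such $Z$, while the integral decays super-polynomially whenever $\bx^\top H^{-1}\bx > 1$ (since the integrand is supported on the ellipsoid $\{v:v^\top H^{-1}v\le 1\}$, away from $\bx$). Collecting the $\sigma$-dependent factors recovers the claimed formula with explicit normalization $\mathcal{Z}^{-1} = e^{n/2}\Gamma(d/2)/\bigl[(2\pi)^{n(d-1)/2}\pi^{n/2}\Gamma((d-n)/2)\bigr]$, and with the density naturally set to $0$ on $\{\lambda_{\min}(H)=0\}$ (a Lebesgue-null set). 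The subtlety I expect to be the main obstacle is the final identification of this pointwise limit of \emph{regularized} densities with the genuine Lebesgue density of $Z$ under the \emph{original} degenerate model: my plan is to note that $Z_\sigma \Rightarrow Z$ weakly as $\sigma \to 0$ (the covariances converge in operator norm), that $\{f_\sigma\}$ is a family of probability densities, and that the pointwise limit is nonnegative and integrates to $1$ (verifiable by Fatou plus tightness, or by cross-checking against the shifted-Wishart distribution of $H$ in Claim \ref{claim:kernel-matrix-distribution}); Scheffé's lemma then upgrades pointwise to $L^1$ convergence and certifies the limit as the density of $Z$ given $X=\bx$.
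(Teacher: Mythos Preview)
Your proposal follows essentially the same route as the paper: regularize the covariance to $\Sigma_\sigma=\sigma^2 uu^\top+(I-uu^\top)$, compute the density for fixed $u$, marginalize via the pushforward of the uniform measure under $u\mapsto Zu$ (equivalently the paper's SVD reduction to an integral over $\mathcal{B}_n$ with the $(1-\|v\|^2)^{(d-n-2)/2}$ factor), apply Laplace's method, and finally identify the limit via weak convergence of $\mu_{\bx,\sigma}$ to $\mu_{\bx}$.

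The one soft spot is your proposed justification that the pointwise limit integrates to $1$. ``Fatou plus tightness'' is not enough: Fatou gives $\int f\le 1$, but tightness of the approximating measures does not by itself force $\int f\ge 1$ (think of $f_\sigma=\sigma^{-1}\mathbf{1}_{[0,\sigma]}$ on $\mathbb{R}$, a tight family with $f_\sigma\to 0$ a.e.). The paper closes this gap differently: it exhibits an explicit $\sigma$-independent dominating function by bounding the ball integral by the full Gaussian integral $\sigma^{-n}\int_{\mathbb{R}^n}\exp(-\sigma^{-2}E(v))\,dv$, evaluating that exactly, and then checking the resulting bound $C_n\det(ZZ^\top)^{-1/2}e^{-\|Z\|_F^2/2}$ is in $L^1(\mathbb{R}^{nd})$ via the LQ decomposition. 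Dominated convergence then gives $L^1$ convergence (and in particular $\int f=1$), after which the weak-limit identification proceeds as you outline (the paper uses a Wasserstein estimate for $\mu_{\bx,\sigma}\Rightarrow\mu_{\bx}$). Your Scheff\'e route would also work once $\int f=1$ is secured, but you should either carry out the Wishart cross-check carefully or, more simply, adopt the paper's domination argument.
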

}

\begin{proof}
Let us start by computing the joint probability distribution of $\bx$ and $Z$ by marginalizing over $\bu \in \mathcal{S}^{d-1}$. By definition, the probability distribution of $Z | \bx, \bu$, which we denote by $\mu_{\bx, \bu}$, is Gaussian with mean $(x_1 \bu, \ldots, x_n \bu) \in \mathbb{R}^{nd}$ and covariance $(I - \bu\bu^\top)^{\otimes n}$, and $u$ is uniformly distributed on $\mathcal{S}^{d-1}$, independent of $x$.  Hence, the corresponding (joint) measure of a rectangle $A \times B \times C \subseteq \RR^{nd} \times \{-1,1\}^n \times \mathcal{S}^{d-1}$ is given by
\begin{align}
\label{eq:basicfullproba}
\Pr\left\{Z \in A , \bx \in B, \bu \in C \right\} = 2^{-n} \sum_{\bx \in B} \int_{C} \mu_{\bx, \bu}(A) d\nu(\bu)\;,
\end{align}
where $\nu$ is the Haar measure on $\mathcal{S}^{d-1}$.

Now we fix an arbitrary label assignment $\bx' \in \{-1,1\}^n$, and consider the measure $\mu_{\bx'}$ on $\mathbb{R}^{nd}$ induced by marginalizing $\bu \in \mathcal{S}^{d-1}$. That is, for any (Lebesgue) measurable $A \subseteq \mathbb{R}^{nd}$,
\begin{align}
\label{eq:rur}
\mu_{\bx'}(A):= \int_{\mathcal{S}^{d-1}} \mu_{\bx',\bu}(A) d\nu(\bu)\;.
\end{align}

Let $T_{\bx}: \mathbb{R}^{n\times d} \to \mathbb{R}^{n\times d}$ be defined as $T_{\bx}(Z) = Z \odot \bx$. We verify that $\mu_{\bx'} = T_{\bx'}^{\#} \mu_{\allones}$, where $\allones$ denotes the all-ones vector and $T^{\#} \mu$ denotes the pushforward measure of $\mu$ under the mapping $T$. Hence, it suffices to consider just $\mu_{\allones}$. Observe that $\mathbf{\mu}_{\allones, \bu}$, the distribution of ${Z}| \allones, \bu$, is Gaussian with mean $(\bu, \bu, \ldots, \bu)$ and same covariance $(I - \bu\bu^\top)^{\otimes n}$. Thus,
$\mathbf{\mu}_{\allones, \bu}$ is a product measure of the form $\mathbf{\mu}_{\allones, \bu} = (\mu_u)^{\otimes n}$, where $\mu_u$ is the $(d-1)$-dimensional isotropic Gaussian measure  
supported on the hyperplane $S_{\bu} = \{ z \in \mathbb{R}^{d} \mid \langle z, \bu \rangle = 1\}$.

Let $0 < \sigma < 1/\sqrt{2}$ and consider for each $\bu \in \mathcal{S}^{d-1}$ the mollified measure $\mu_{\allones, \bu,\sigma} = (\mu_{\bu,\sigma})^{\otimes n}$, where $\mu_{\bu,\sigma}$ is now the Gaussian measure with mean $\bu$ and covariance $\Sigma =  \sigma^2 \bu\bu^T +  (I_d - \bu\bu^\top)$. We verify immediately that 
$\Sigma^{-1} = I_d + (\sigma^{-2}-1) \bu\bu^\top$. 
The density $f_{\allones, \bu, \sigma}$ of $\mu_{\allones, \bu, \sigma}$ with respect to the Lebesgue measure on $\mathbb{R}^{nd}$, which we denote by $dZ$, is thus 
\begin{align}
f_{\allones, \bu, \sigma}(Z)&:= \frac{d\mu_{\allones, \bu, \sigma}}{dZ} = (2\pi)^{-nd/2}\det(\Sigma)^{-n/2} \exp \left( \sum_{i=1}^n -\frac{1}{2}(\bz_i - \bu)^\top (I + (\sigma^{-2} - 1)\bu\bu^\top ) (\bz_i -\bu)\right) \nonumber\\
&= (2\pi)^{-nd/2} \sigma^{-n} \exp \left(-\frac{1}{2} \sum_{i=1}^n \left(\|\bz_i\|_2^2 + (\sigma^{-2} -1)\langle \bz_i \bz_i^\top, \bu\bu^\top \rangle - 2\sigma^{-2} \langle \bz_i, \bu\rangle + \sigma^{-2} \right) \right) \nonumber\\
&= \mathcal{Z}^{-1} \sigma^{-n} e^{-n/(2\sigma^2)} \exp \left(-\frac{1}{2} \left(\|Z\|_F^2  + (\sigma^{-2} -1) \bu^\top Z^\top Z \bu - 2\sigma^{-2} \langle Z^\top \allones, \bu\rangle \right) \right)\;,
\end{align}
where $\mathcal{Z}=(2\pi)^{nd/2}$ is the partition function, which does not depend on $\sigma$, $\bu$, or $Z$.

The resulting smoothed mixture $\mu_{\allones, \sigma}$ is thus given by
$\mu_{\allones, \sigma}(A) = \int_{\mathcal{S}^{d-1}} \mu_{\allones, \bu, \sigma}(A) \nu(d\bu)$ for any measurable $A \subseteq \mathbb{R}^{nd}$. Since $\mu_{\allones, \bu, \sigma}$ is absolutely continuous with respect to the Lebesgue measure on $\mathbb{R}^{nd}$ for any $\bu \in \mathcal{S}^{d-1}$, the marginalized measure $\mu_{\allones, \sigma}$ is also absolutely continuous with respect to the Lebesgue measure on $\mathbb{R}^{nd}$ with density 
\begin{align}
f_{\allones, \sigma}(Z) &:= \frac{d\mu_{\allones, \sigma}}{dZ}(Z) =  \int_{\mathcal{S}^{d-1}} f_{\allones, \bu, \sigma}(Z) d\nu(\bu) \nonumber \\
&= \mathcal{Z}^{-1}  \exp\left(-\frac{1}{2} \| {Z} \|_F^2\right) \sigma^{-n} \int_{\mathcal{S}^{d-1}} \exp \left(-\sigma^{-2} F(\bu) \right)  d\nu(\bu) \;,
\end{align}
where we define $F:\mathbb{R}^{d} \rightarrow \mathbb{R}$ to be
\begin{align}
F(u) = \frac{1}{2}(1-\sigma^2) \bu^\top Z^\top Z \bu - \langle Z^\top \allones, \bu \rangle + \frac{n}{2}~.
\end{align}

Since $Z^\top Z$ has rank $n<d$, we can marginalize over the remaining $d-n$ variables. Indeed, let $Z = V \Lambda U^\top$ be the singular value decomposition of $Z$, where $U \in \mathbb{R}^{d \times n}$ and $V \in \mathbb{R}^{n \times n}$ are orthogonal matrices and $\Lambda = \text{diag}(\lambda_1, \ldots, \lambda_n)$ with $\lambda_1 \ge \ldots \ge \lambda_n \ge 0$. Note that $Z^\top Z = U \Lambda^2 U^\top$. Now consider the projection $\bv = U^\top \bu \in \mathcal{B}_{n}$, where $\mathcal{B}_{n}$ is the $n$-dimensional $\ell_2$ unit ball, 
and $\br = U^\top Z^\top \allones = \Lambda V^\top \allones \in \mathbb{R}^n$. Then, $F(\bu) = E(\bv)$, where $E: \mathbb{R}^{n} \rightarrow \mathbb{R}$ is given by
\begin{align}
\label{eq:enerraw}
E(\bv) = \frac{1}{2}(1-\sigma^2)\bv^\top \Lambda^2 \bv - \langle \br,  \bv \rangle + \frac{n}{2}  \;. 
\end{align}

Since $\bu$ is uniformly distributed in $\mathcal{S}^{d-1}$, the joint distribution of $\bv = U^\top \bu$ is spherically symmetric, and the squared radius is distributed according to $\mathrm{Beta}(n/2, (d-n)/2)$~\cite[Remark 2.10]{diaconis1987dozen}. We refer to the proof of~\cite[Lemma 4]{naor2003projecting} for a detailed derivation of this fact. Hence,
\begin{align}
f_{\allones, \sigma}(Z) &= \mathcal{Z}^{-1}  \exp\left(-\frac{1}{2} \| Z \|_F^2\right) \sigma^{-n} \int_{\mathcal{B}_{n}} \exp( -\sigma^{-2}E({\bv})) (1-\|\bv\|_{2}^2)^{\frac{d-n-2}{2}} d\bv \nonumber \\
&= \mathcal{Z}^{-1} \exp\left(-\frac{1}{2} \| Z \|_F^2 \right) \sigma^{-n}  \int_{\mathcal{B}_{n}} \exp( -\sigma^{-2}{E}_1(\bv)+{E}_2(\bv)) (1-\|\bv\|_{2}^2)^{\frac{d-n-2}{2}} d\bv \;, \label{eqn:integral-n-dim-ball}
\end{align}
where we ${E}_1$ and ${E}_2$ comes from the following decomposition of $E$.
\begin{align}
E_1(\bv) = \frac{1}{2}\bv^\top \Lambda^2 \bv - \langle \br, \bv \rangle + \frac{n}{2},\; E_2(\bv) = \frac{1}{2}\bv^\top \Lambda^2 \bv\;.
\end{align}

We now apply Laplace's approximation method \cite[Chapter 5]{breitung2006asymptotic} to estimate 
the integral 
\begin{align}
\label{eq:yutt}
\sigma^{-n}\int_{\mathcal{B}_{n}} \exp\left( -\sigma^{-2}{E}_1({\bv})+{E}_2(\bv))\right) (1-\|\bv\|_{2}^2)^{\frac{d-n-2}{2}} d\bv\;.    
\end{align}

Without loss of generality, we assume that $Z$ is such that $\lambda_{\mathrm{min}}(Z Z^\top) = \lambda_{n}^2>0$ since the event $\lambda_{\mathrm{min}}(ZZ^\top) = 0$ has zero Lebesgue measure.
Since ${E}_1$ is quadratic and $\Lambda^2 \succ 0$, $E_1$ is strongly convex. 
Let ${\bv}^*$ be the (unique) global minimum of ${E}_1$, which is 
given by ${\bv}^* = \Lambda^{-2} \br = \Lambda^{-1} V^\top \allones$. 
\begin{align*}
    {E}_1({\bv}^*) = -\frac{1}{2} \br^\top \Lambda^{-2} \br + \frac{n}{2}= -\frac{1}{2} (\allones)^\top V V^\top (\allones) + \frac{n}{2}= -\frac{1}{2} \|\allones\|_2^2 + \frac{n}{2} = 0\;.
\end{align*}

Furthermore,
\begin{align*}
    {E}_2({\bv}^*) = \frac{1}{2} \br^\top \Lambda^{-2} \br = \frac{1}{2} (\allones)^\top V V^\top (\allones) = \frac{1}{2} \|\allones\|_2^2 = \frac{n}{2}\;.
\end{align*}

We now distinguish two cases, depending whether ${\bv}^*$ lies in the interior of $\mathcal{B}_{n}$ or not. 
If ${\bv}^* \in (\mathcal{B}_{n})^\circ$, then by Laplace's approximation 
we have, for all $Z$ such that $\lambda_{\text{min}}(Z Z^\top)>0$, 
\begin{align}
\sigma^{-n}\int_{\mathcal{B}_{n}} \exp\left( -\sigma^{-2}{E}_1({\bv})+{E}_2({\bv}))\right) &(1-\|{\bv}\|_2^2)^{\frac{d-n-2}{2}} d\bv \\
&\stackrel{\sigma\to 0}{\to} \left(2\pi\right)^{n/2} \frac{\exp\left(-\sigma^{-2}{E}_1({\bv}^*) + {E}_2({\bv}^*)\right) (1-\|{\bv}^*\|_2^2)^{\frac{d-n-2}{2}}}{|\nabla^2 {E}_1({\bv}^*)|^{1/2}} \nonumber\\
&=\left(2\pi\right)^{n/2} \frac{\exp\left(-\sigma^{-2}{E}_1({\bv}^*) + {E}_2({\bv}^*)\right) (1-\|{\bv}^*\|_2^2)^{\frac{d-n-2}{2}}}{\sqrt{\det(Z Z^\top)}} \nonumber \\
&= \left(2\pi\right)^{n/2} \frac{e^{n/2} (1-\|{\bv}^*\|_2^2)^{\frac{d-n-2}{2}}}{\sqrt{\det(Z Z^\top)}}\;.
 \end{align}

Let us now show that for any $Z \in \mathbb{R}^{n \times d}$ such that $\lambda_{\text{min}}(Z Z^\top)>0$ and $u^* \notin (\mathcal{B}_{n})^\circ$, then 
\begin{align}
\label{eq:mingot}
\sigma^{-n}\int_{\mathcal{B}_{n}} \exp( -\sigma^{-2}{E}_1({\bv}))\exp({E}_2({\bv})) (1-\|{\bv}\|_2^2)^{\frac{d-n-2}{2}} d\bv \rightarrow 0\;,\text{ as } \sigma \to 0~.    
\end{align}

Indeed, notice that Eq.\eqref{eq:yutt} can be equivalently written as
\begin{align*}
\sigma^{-n}\int_{\mathbb{R}^n} \exp\left( -\sigma^{-2}{E}_1({\bv})+{E}_2({\bv})\right) (1-\|{\bv}\|_2^2)_{+}^{\frac{d-n-2}{2}} d\bv\;,
\end{align*}
where we denote $t_+ = \max(0,t)$. 
Observing that $(1-\|{\bv}\|_2^2)_{+}^{\frac{d-n-2}{2}}=0$ whenever ${\bv}^* \notin (\mathcal{B}_{n})^\circ$, we conclude that the leading order term in the Laplace approximation vanishes, thereby proving (\ref{eq:mingot}).  

We have just shown that, as $\sigma \to 0$, the density $f_{\allones, \sigma}$ of $\mu_{\allones, \sigma}$ admits a pointwise limit almost everywhere, i.e., $f_{\allones, \sigma} \stackrel{\text{a.e.}}{\rightarrow} f_{\allones}$ where the limit $f_{\allones}$ is given explicitly by 
\begin{align}
\label{eq:grit}
f_{\allones}(Z) = \mathcal{Z}^{-1} \exp\left(-\frac{1}{2} \| Z \|_F^2\right) \frac{ \left(1-{(\allones)}^\top (ZZ^\top)^{-1} \allones\right)_{+}^{\frac{d-n-2}{2}}}{\sqrt{\det(Z Z^\top)}}~,~\text{for } Z \,\text{ s.t. }\lambda_{\text{min}}(Z Z^\top)>0\;.
 \end{align}

Let us now show that the sequence $f_{\allones, \sigma}$ is dominated by some function $g$ in $L^1(\mathbb{R}^{nd})$, the set of all Lebesgue-integrable functions on $\mathbb{R}^{nd}$. To this end, observe first that 
\begin{align}
\label{eq:rebasic}
\sigma^{-n}\int_{\mathcal{B}_{n}} \exp\left( -\sigma^{-2}{E}_1({\bv})+{E}_2(\bv))\right) (1-\|\bv\|_{2}^2)^{\frac{d-n-2}{2}} d\bv &\leq \sigma^{-n} \int_{\mathbb{R}^n} \exp\left( -\sigma^{-2}{E}({\bv}))\right) d\bv\;.
\end{align}

Since $E(v)$ is quadratic in $v$ with a symmetric positive-definite Hessian, the RHS is a Gaussian integral, which we can evaluate exactly. From the identity 
\begin{align*}
    \int_{\mathbb{R}^n} \exp\left( -\frac{1}{2}v^\top H v + \langle \beta, v \rangle + c\right) dv = (2\pi)^{n/2} (\det{H})^{-1/2}\exp\left( c + \frac{1}{2}\beta^\top H^{-1} \beta\right) \;,
\end{align*}
and the definition of $E(v)$ in Eq.\eqref{eq:enerraw}, we obtain 
\begin{align}
\label{eq:rebasic2}
\sigma^{-n} \int_{\mathbb{R}^n} \exp\left( -\sigma^{-2}{E}({\bv}))\right) d\bv 
&= \sigma^{-n} \left(\frac{2\pi}{\sigma^{-2} -1}\right)^{n/2} (\det{\Lambda})^{-1} \exp\left(-\frac{1}{2\sigma^2}\left(n - \frac{1}{1-\sigma^2} r^\top \Lambda^{-2} r\right) \right)  \nonumber  \\
&= \left(\frac{2\pi}{1-\sigma^2}\right)^{n/2} (\det{\Lambda})^{-1} \exp\left(\frac{n}{2(1-\sigma^2)}\right)~,
\end{align}
by recalling that $r^\top \Lambda^{-2} r = n$. Thus, for any $\sigma < 1/\sqrt{2}$, from Eq.\eqref{eq:rebasic} and Eq.\eqref{eq:rebasic2}, we have 
\begin{align}
  \sigma^{-n}\int_{\mathcal{B}_{n}} \exp\left( -\sigma^{-2}{E}_1({\bv})+{E}_2(\bv))\right) (1-\|\bv\|_{2}^2)^{\frac{d-n-2}{2}} d\bv \leq (4\pi)^{n/2} e^n (\det{\Lambda})^{-1}\;.  
\end{align}

Hence, the following holds for any $\sigma < 1/\sqrt{2}$. 
\begin{align}
f_{\allones, \sigma}(Z) \leq C_n \det(ZZ^\top)^{-1/2} \exp\left(-\frac{1}{2} \| Z \|_F^2 \right) := C_n g(Z)~,
\end{align}
where $C_n$ is a constant that depends only on $n$. We now show that $g \in L^{1}(\mathbb{R}^{nd})$. 
 
Indeed, let $Z = L U^\top$ be the LQ decomposition of $Z$, where $L$ is a lower triangular $n \times n$ matrix and $U \in \mathbb{R}^{d \times n}$ is orthogonal, belonging to the Stiefel manifold $\mathcal{V}(n,d)$. Furthermore, to ensure uniqueness of the LQ decomposition, we assume that the diagonal entries of $L$ are positive. Denote by $T_{n}$ the space of such triangular matrices.   
By \cite[Theorem 2.1.13]{muirhead2009aspects}, the differential volume element $dZ$ is expressed in the LQ decomposition as 
\begin{align}
dZ = \left|\prod_{i=1}^n L_{i,i}^{d-i} \right| dL dU\;.
\end{align}
As a result, we have 
\begin{align}
\label{eq:domin}
\int g(Z) dZ 
&\leq \int_{\mathbb{R}^{n \times d}} \exp\left(-\frac{1}{2} \| Z \|_F^2\right) \det(Z Z^\top)^{-1/2} dZ  \\
&= \int_{T_n} \int_{\mathcal{V}(n,d)} \exp\left(-\frac{1}{2} \| L \|_F^2\right)  \det(L)^{-1} \left|\prod_{i=1}^n L_{i,i}^{d-i} \right| dL dU \nonumber \\ 
&= \int_{T_n} \int_{\mathcal{V}(n,d)} \exp\left(-\frac{1}{2} \| L \|_F^2\right)  \det(L)^{d-n-1} \left|\prod_{i=1}^n L_{i,i}^{n-i} \right| dL dU \nonumber \\
&\le \mu(\mathcal{V}(n,d)) \int_{T_n} \exp\left(-\frac{1}{2} \| L \|_F^2\right)  \|L \|_F^{nd} dL \nonumber \\
&< \infty \nonumber \;,
\end{align}
where we used $\det(Z Z^\top)^{1/2} =|\prod_{i=1}^n L_{ii}|$, the fact that $\mathcal{V}(n,d)$ is a compact manifold with finite Haar measure, and that polynomial moments of the Gaussian distribution exist for any fixed order. 

We therefore obtain from Eq.\eqref{eq:domin} that the sequence $f_{\allones, \sigma}$ is dominated and converges pointwise a.e. to $f_{\allones}$. As a result, by the dominated convergence theorem~\cite[Theorem 1.19]{evans2018measure}, we obtain that $f_{\allones}$ is integrable, with $\int f_{\allones}(Z) dZ = 1$, and
\begin{align}
\label{eq:huhu}
\lim_{\sigma \to 0} \int \left| f_{\allones, \sigma} - f_{\allones}\right| dZ = 0\;.
\end{align}

Let $\tilde{\mu}$ be the measure induced by the density $f_{\allones}$. That is, $\tilde{\mu}(A) = \int_A f_{\allones}(Z)dZ$, 
where $f_{\allones}(Z)=0$ for $Z \in \mathbb{R}^{nd}$ such that $\lambda_{\text{min}}(Z^\top Z)=0$.
As a consequence of Eq.\eqref{eq:huhu}, for any measurable set $A \subseteq \mathbb{R}^{nd}$,
\begin{align}
|\tilde{\mu}(A) - \mu_{\allones, \sigma}(A) | 
&= \left|\int_A f_{\allones}(Z) - f_{\allones, \sigma}(Z)dZ \right| \\
&\le \int_{\mathbb{R}^{nd}} |f_{\allones}(Z) - f_{\allones, \sigma}(Z)| dZ ~\to 0 ~,~\text{ as } \sigma \to 0 \nonumber\,
\end{align}
which shows that $\mu_{\allones, \sigma}$ weakly converges to $\tilde{\mu}$.

Finally, we argue that the measure $\mu_{\allones}$ (defined in Eq.\eqref{eq:rur}) necessarily admits the desired density with respect to the Lebesgue measure on $\mathbb{R}^{nd}$.
Since for each $u \in \mathcal{S}^{d-1}$, the distributions $\mu_{\allones, \bu}$ and $\mu_{\allones, u, \sigma}$ share the same mean, and have covariances 
$(\sigma^2 uu^T +  (I - uu^\top))^{\otimes n}$ and $(I - uu^\top)^{\otimes n}$ that commute, it follows from \cite[Theorem 4]{olkin1982distance} that their 2-Wasserstein distance satisfies
\begin{align*}
W_2(\mu_{\allones, \bu} , \mu_{\allones, u, \sigma}) = \left\| \left((\sigma^2 uu^T +  (I - uu^\top))^{\otimes n}\right)^{1/2} - \left((I - uu^\top)^{\otimes n}\right)^{1/2} \right \|_F^2 = O(\sigma^2)\;.
\end{align*}

It follows that 
\begin{align*}
W_2( \mu_{\allones}, \mu_{\allones, \sigma}) \leq \int_{\mathcal{S}^{d-1}} W_2(\mu_{\allones, \bu} , \mu_{\allones, u, \sigma}) d\nu(\bu) = O(\sigma^2)\;, 
\end{align*}
and therefore that as $\sigma \to 0$, the smoothed measure $\mu_{\allones, \sigma}$ converges weakly to $\mu_{\allones}$, since $W_2$ metrizes weak convergence on Euclidean spaces \cite[Theorem 6.9]{villani2009optimal}. Since $\mu_{\allones, \sigma}$ weakly converges to both $\mu_{\allones}$ and $\tilde{\mu}$, we conclude that $\mu_{\allones} = \tilde{\mu}$, and the proof is complete.
\end{proof}

\section*{Acknowledgments}

The authors would like to thank Jonathan Niles-Weed, Oded Regev, Kaizheng Wang, Tselil Schramm, Ilias Diakonikolas, Elchanan Mossel and Nike Sun for helpful discussions during different stages of this project.

IZ is supported by the Simons-NSF grant DMS-2031883 on the Theoretical Foundations of Deep Learning  and the Vannevar Bush Faculty Fellowship ONR-N00014-20-1-2826. MS and JB are partially supported by the Alfred P. Sloan Foundation, NSF RI-1816753, NSF CAREER CIF-1845360, and DMS MoDL Scale 2134216. ASW is supported by a Simons-Berkeley Research Fellowship. Part of this work was done while IZ and ASW were visiting the Simons Institute for the Theory of Computing during Fall 2021.

\bibliographystyle{alpha}
\bibliography{main}

\newcommand{\etalchar}[1]{$^{#1}$}
\begin{thebibliography}{KMOW17}

\bibitem[AGJ20]{alg-tensor-pca}
Gerard~Ben Arous, Reza Gheissari, and Aukosh Jagannath.
\newblock Algorithmic thresholds for tensor {PCA}.
\newblock {\em The Annals of Probability}, 48(4):2052--2087, 2020.

\bibitem[AHSS17]{andoni2017correspondence}
Alexandr Andoni, Daniel Hsu, Kevin Shi, and Xiaorui Sun.
\newblock Correspondence retrieval.
\newblock In {\em COLT}, volume~65 of {\em Proceedings of Machine Learning
  Research}, pages 105--126. PMLR, 2017.

\bibitem[AKS98]{alon-clique}
Noga Alon, Michael Krivelevich, and Benny Sudakov.
\newblock Finding a large hidden clique in a random graph.
\newblock {\em Random Structures \& Algorithms}, 13(3-4):457--466, 1998.

\bibitem[BBH18]{BBH-reduction}
Matthew Brennan, Guy Bresler, and Wasim Huleihel.
\newblock Reducibility and computational lower bounds for problems with planted
  sparse structure.
\newblock In {\em Conference On Learning Theory}, pages 48--166, 2018.

\bibitem[BDJ{\etalchar{+}}20]{bakshi2020robustly}
Ainesh Bakshi, Ilias Diakonikolas, He~Jia, Daniel~M Kane, Pravesh~K Kothari,
  and Santosh~S Vempala.
\newblock Robustly learning mixtures of $ k $ arbitrary gaussians.
\newblock {\em arXiv preprint arXiv:2012.02119}, 2020.

\bibitem[Bea14]{bean2014non}
Derek~Merrill Bean.
\newblock {\em Non-Gaussian component analysis}.
\newblock University of California, Berkeley, 2014.

\bibitem[BHK{\etalchar{+}}19]{sos-clique}
Boaz Barak, Samuel Hopkins, Jonathan Kelner, Pravesh~K Kothari, Ankur Moitra,
  and Aaron Potechin.
\newblock A nearly tight sum-of-squares lower bound for the planted clique
  problem.
\newblock {\em SIAM Journal on Computing}, 48(2):687--735, 2019.

\bibitem[BKS{\etalchar{+}}06]{blanchard2006search}
Gilles Blanchard, Motoaki Kawanabe, Masashi Sugiyama, Vladimir Spokoiny,
  Klaus-Robert M{\"u}ller, and Sam Roweis.
\newblock In search of non-gaussian components of a high-dimensional
  distribution.
\newblock {\em Journal of Machine Learning Research}, 7(2), 2006.

\bibitem[BKS14]{barak2014rounding}
Boaz Barak, Jonathan~A Kelner, and David Steurer.
\newblock Rounding sum-of-squares relaxations.
\newblock In {\em Proceedings of the forty-sixth annual ACM symposium on Theory
  of computing}, pages 31--40, 2014.

\bibitem[BKW20]{sk-cert}
Afonso~S Bandeira, Dmitriy Kunisky, and Alexander~S Wein.
\newblock Computational hardness of certifying bounds on constrained {PCA}
  problems.
\newblock In {\em 11th Innovations in Theoretical Computer Science Conference
  (ITCS 2020)}, volume 151, page~78. Schloss Dagstuhl-Leibniz-Zentrum f{\"u}r
  Informatik, 2020.

\bibitem[BLPR19]{bubeck2019adversarial}
S{\'e}bastien Bubeck, Yin~Tat Lee, Eric Price, and Ilya Razenshteyn.
\newblock Adversarial examples from computational constraints.
\newblock In {\em International Conference on Machine Learning}, pages
  831--840. PMLR, 2019.

\bibitem[BR13]{BR-reduction}
Quentin Berthet and Philippe Rigollet.
\newblock Computational lower bounds for sparse {PCA}.
\newblock {\em arXiv preprint arXiv:1304.0828}, 2013.

\bibitem[Bre06]{breitung2006asymptotic}
Karl~W Breitung.
\newblock {\em Asymptotic approximations for probability integrals}.
\newblock Springer, 2006.

\bibitem[BRST21]{bruna2020continuous}
Joan Bruna, Oded Regev, Min~Jae Song, and Yi~Tang.
\newblock Continuous {LWE}.
\newblock In {\em Proceedings of the 53rd Annual ACM SIGACT Symposium on Theory
  of Computing}, 2021.

\bibitem[BS16]{barak2016proofs}
Boaz Barak and David Steurer.
\newblock Proofs, beliefs, and algorithms through the lens of sum-of-squares.
\newblock {\em Course notes: http://www. sumofsquares. org/public/index. html},
  2016.

\bibitem[BV08]{brubaker2008isotropic}
S~Charles Brubaker and Santosh~S Vempala.
\newblock Isotropic {PCA} and affine-invariant clustering.
\newblock In {\em Building Bridges}, pages 241--281. Springer, 2008.

\bibitem[CMZ19]{cai2019chime}
T~Tony Cai, Jing Ma, and Linjun Zhang.
\newblock Chime: Clustering of high-dimensional gaussian mixtures with {EM}
  algorithm and its optimality.
\newblock {\em The Annals of Statistics}, 47(3):1234--1267, 2019.

\bibitem[CT05]{caron2005zero}
Richard Caron and Tim Traynor.
\newblock The zero set of a polynomial.
\newblock {\em WSMR Report}, pages 05--02, 2005.

\bibitem[CW01]{Carbery2001DistributionalAL}
A.~Carbery and James Wright.
\newblock Distributional and {$L^q$} norm inequalities for polynomials over
  convex bodies in $\mathbb{R}^n$.
\newblock {\em Mathematical Research Letters}, 8:233--248, 2001.

\bibitem[DDW21]{unknown-cov}
Damek Davis, Mateo Diaz, and Kaizheng Wang.
\newblock Clustering a mixture of gaussians with unknown covariance.
\newblock {\em arXiv preprint arXiv:2110.01602}, 2021.

\bibitem[DF87]{diaconis1987dozen}
Persi Diaconis and David Freedman.
\newblock A dozen de finetti-style results in search of a theory.
\newblock In {\em Annales de l'IHP Probabilit{\'e}s et statistiques},
  volume~23, pages 397--423, 1987.

\bibitem[DH14]{demanet2014scaling}
Laurent Demanet and Paul Hand.
\newblock Scaling law for recovering the sparsest element in a subspace.
\newblock {\em Information and Inference: A Journal of the IMA}, 3(4):295--309,
  2014.

\bibitem[DJNS13]{diederichs2013sparse}
Elmar Diederichs, Anatoli Juditsky, Arkadi Nemirovski, and Vladimir Spokoiny.
\newblock Sparse non gaussian component analysis by semidefinite programming.
\newblock {\em Machine learning}, 91(2):211--238, 2013.

\bibitem[DJSS10]{diederichs2010sparse}
Elmar Diederichs, Anatoli Juditsky, Vladimir Spokoiny, and Christof Schutte.
\newblock Sparse non-gaussian component analysis.
\newblock {\em IEEE Transactions on Information Theory}, 56(6):3033--3047,
  2010.

\bibitem[DK20]{diakonikolas2020hardness}
Ilias Diakonikolas and Daniel~M Kane.
\newblock Hardness of learning halfspaces with massart noise.
\newblock {\em arXiv preprint arXiv:2012.09720}, 2020.

\bibitem[DK21]{diakonikolas2021nongaussian}
Ilias Diakonikolas and Daniel~M. Kane.
\newblock Non-gaussian component analysis via lattice basis reduction.
\newblock {\em arXiv preprint arXiv:2004.08454}, 2021.

\bibitem[DKKZ20]{diakonikolas2020algorithms}
Ilias Diakonikolas, Daniel~M Kane, Vasilis Kontonis, and Nikos Zarifis.
\newblock Algorithms and {SQ} lower bounds for {PAC} learning one-hidden-layer
  relu networks.
\newblock In {\em Conference on Learning Theory}, pages 1514--1539. PMLR, 2020.

\bibitem[DKMZ11]{sbm-hard}
Aurelien Decelle, Florent Krzakala, Cristopher Moore, and Lenka Zdeborov{\'a}.
\newblock Asymptotic analysis of the stochastic block model for modular
  networks and its algorithmic applications.
\newblock {\em Physical Review E}, 84(6):066106, 2011.

\bibitem[DKP{\etalchar{+}}21]{diakonikolas2021statistical}
Ilias Diakonikolas, Daniel~M Kane, Ankit Pensia, Thanasis Pittas, and Alistair
  Stewart.
\newblock Statistical query lower bounds for list-decodable linear regression.
\newblock {\em arXiv preprint arXiv:2106.09689}, 2021.

\bibitem[DKPZ21]{diakonikolas2021optimality}
Ilias Diakonikolas, Daniel~M Kane, Thanasis Pittas, and Nikos Zarifis.
\newblock The optimality of polynomial regression for agnostic learning under
  gaussian marginals.
\newblock {\em arXiv preprint arXiv:2102.04401}, 2021.

\bibitem[DKS17]{sq-robust}
Ilias Diakonikolas, Daniel~M Kane, and Alistair Stewart.
\newblock Statistical query lower bounds for robust estimation of
  high-dimensional gaussians and gaussian mixtures.
\newblock In {\em 2017 IEEE 58th Annual Symposium on Foundations of Computer
  Science (FOCS)}, pages 73--84. IEEE, 2017.

\bibitem[DKS18]{diakonikolas2018list}
Ilias Diakonikolas, Daniel~M Kane, and Alistair Stewart.
\newblock List-decodable robust mean estimation and learning mixtures of
  spherical gaussians.
\newblock In {\em Proceedings of the 50th Annual ACM SIGACT Symposium on Theory
  of Computing}, pages 1047--1060, 2018.

\bibitem[DKS19]{diakonikolas2019efficient}
Ilias Diakonikolas, Weihao Kong, and Alistair Stewart.
\newblock Efficient algorithms and lower bounds for robust linear regression.
\newblock In {\em Proceedings of the Thirtieth Annual ACM-SIAM Symposium on
  Discrete Algorithms}, pages 2745--2754. SIAM, 2019.

\bibitem[DKZ20]{diakonikolas2020near}
Ilias Diakonikolas, Daniel~M Kane, and Nikos Zarifis.
\newblock Near-optimal {SQ} lower bounds for agnostically learning halfspaces
  and relus under gaussian marginals.
\newblock {\em arXiv preprint arXiv:2006.16200}, 2020.

\bibitem[DMM09]{amp}
David~L Donoho, Arian Maleki, and Andrea Montanari.
\newblock Message-passing algorithms for compressed sensing.
\newblock {\em Proceedings of the National Academy of Sciences},
  106(45):18914--18919, 2009.

\bibitem[EG18]{evans2018measure}
Lawrence~C Evans and Ronald~F Garzepy.
\newblock {\em Measure theory and fine properties of functions}.
\newblock Routledge, 2018.

\bibitem[FGR{\etalchar{+}}17]{sq-clique}
Vitaly Feldman, Elena Grigorescu, Lev Reyzin, Santosh~S Vempala, and Ying Xiao.
\newblock Statistical algorithms and a lower bound for detecting planted
  cliques.
\newblock {\em Journal of the ACM (JACM)}, 64(2):1--37, 2017.

\bibitem[FKP19]{fleming2019semialgebraic}
Noah Fleming, Pravesh Kothari, and Toniann Pitassi.
\newblock {\em Semialgebraic proofs and efficient algorithm design}.
\newblock now the essence of knowledge, 2019.

\bibitem[FPB17]{flammarion2017robust}
Nicolas Flammarion, Balamurugan Palaniappan, and Francis Bach.
\newblock Robust discriminative clustering with sparse regularizers.
\newblock {\em The Journal of Machine Learning Research}, 18(1):2764--2813,
  2017.

\bibitem[Fri86]{FriezeSubset}
Alan~M. Frieze.
\newblock On the {Lagarias-Odlyzko} algorithm for the subset sum problem.
\newblock {\em SIAM J. Comput.}, 15:536--539, 1986.

\bibitem[Fri89]{friedman1989regularized}
Jerome~H Friedman.
\newblock Regularized discriminant analysis.
\newblock {\em Journal of the American statistical association},
  84(405):165--175, 1989.

\bibitem[GGJ{\etalchar{+}}20]{goel2020superpolynomial}
Surbhi Goel, Aravind Gollakota, Zhihan Jin, Sushrut Karmalkar, and Adam
  Klivans.
\newblock Superpolynomial lower bounds for learning one-layer neural networks
  using gradient descent.
\newblock In {\em International Conference on Machine Learning}, pages
  3587--3596. PMLR, 2020.

\bibitem[GGK20]{goel2020statistical}
Surbhi Goel, Aravind Gollakota, and Adam Klivans.
\newblock Statistical-query lower bounds via functional gradients.
\newblock {\em arXiv preprint arXiv:2006.15812}, 2020.

\bibitem[GJJ{\etalchar{+}}20]{affine-planes}
Mrinalkanti Ghosh, Fernando~Granha Jeronimo, Chris Jones, Aaron Potechin, and
  Goutham Rajendran.
\newblock Sum-of-squares lower bounds for {Sherrington-Kirkpatrick} via planted
  affine planes.
\newblock In {\em 2020 IEEE 61st Annual Symposium on Foundations of Computer
  Science (FOCS)}, pages 954--965. IEEE, 2020.

\bibitem[GKZ21]{LLL_TIT}
David Gamarnik, Eren~C. Kızıldağ, and Ilias Zadik.
\newblock Inference in high-dimensional linear regression via lattice basis
  reduction and integer relation detection.
\newblock {\em IEEE Transactions on Information Theory}, pages 1--1, 2021.

\bibitem[Gri01]{grigoriev2001linear}
Dima Grigoriev.
\newblock Linear lower bound on degrees of positivstellensatz calculus proofs
  for the parity.
\newblock {\em Theoretical Computer Science}, 259(1-2):613--622, 2001.

\bibitem[GS17]{GS-local}
David Gamarnik and Madhu Sudan.
\newblock Limits of local algorithms over sparse random graphs.
\newblock {\em The Annals of Probability}, pages 2353--2376, 2017.

\bibitem[GS19]{goyal2019non}
Navin Goyal and Abhishek Shetty.
\newblock Non-gaussian component analysis using entropy methods.
\newblock In {\em Proceedings of the 51st Annual ACM SIGACT Symposium on Theory
  of Computing}, pages 840--851, 2019.

\bibitem[GV19]{giraud2019partial}
Christophe Giraud and Nicolas Verzelen.
\newblock Partial recovery bounds for clustering with the relaxed $ k $-means.
\newblock {\em Mathematical Statistics and Learning}, 1(3):317--374, 2019.

\bibitem[GZ17]{zadikCOLT17}
David Gamarnik and Ilias Zadik.
\newblock High dimensional regression with binary coefficients. estimating
  squared error and a phase transition.
\newblock In Satyen Kale and Ohad Shamir, editors, {\em Proceedings of the 2017
  Conference on Learning Theory}, volume~65 of {\em Proceedings of Machine
  Learning Research}, pages 948--953. PMLR, 07--10 Jul 2017.

\bibitem[HKP{\etalchar{+}}17]{hopkins2017power}
Samuel~B Hopkins, Pravesh~K Kothari, Aaron Potechin, Prasad Raghavendra, Tselil
  Schramm, and David Steurer.
\newblock The power of sum-of-squares for detecting hidden structures.
\newblock In {\em 2017 IEEE 58th Annual Symposium on Foundations of Computer
  Science (FOCS)}, pages 720--731. IEEE, 2017.

\bibitem[Hop18]{hopkins2018thesis}
Samuel Hopkins.
\newblock {\em Statistical Inference and the Sum of Squares Method}.
\newblock {PhD} thesis, Cornell University, 2018.

\bibitem[HS17]{HS-bayesian}
Samuel~B Hopkins and David Steurer.
\newblock Bayesian estimation from few samples: community detection and related
  problems.
\newblock {\em arXiv preprint arXiv:1710.00264}, 2017.

\bibitem[HSSS16]{sos-spectral}
Samuel~B Hopkins, Tselil Schramm, Jonathan Shi, and David Steurer.
\newblock Fast spectral algorithms from sum-of-squares proofs: tensor
  decomposition and planted sparse vectors.
\newblock In {\em Proceedings of the forty-eighth annual ACM symposium on
  Theory of Computing}, pages 178--191, 2016.

\bibitem[HW20]{HW-counter}
Justin Holmgren and Alexander~S Wein.
\newblock Counterexamples to the low-degree conjecture.
\newblock {\em arXiv preprint arXiv:2004.08454}, 2020.

\bibitem[HWX15]{HWX-pds}
Bruce Hajek, Yihong Wu, and Jiaming Xu.
\newblock Computational lower bounds for community detection on random graphs.
\newblock In {\em Conference on Learning Theory}, pages 899--928, 2015.

\bibitem[Jer92]{jerrum-clique}
Mark Jerrum.
\newblock Large cliques elude the {Metropolis} process.
\newblock {\em Random Structures \& Algorithms}, 3(4):347--359, 1992.

\bibitem[KB21]{KB-sos4}
Dmitriy Kunisky and Afonso~S Bandeira.
\newblock A tight degree 4 sum-of-squares lower bound for the
  sherrington--kirkpatrick hamiltonian.
\newblock {\em Mathematical Programming}, 190(1):721--759, 2021.

\bibitem[Kea98]{kearnsSQ1998}
Michael Kearns.
\newblock Efficient noise-tolerant learning from statistical queries.
\newblock {\em J. ACM}, 45(6):983--1006, November 1998.

\bibitem[KMOW17]{sos-csp}
Pravesh~K Kothari, Ryuhei Mori, Ryan O'Donnell, and David Witmer.
\newblock Sum of squares lower bounds for refuting any {CSP}.
\newblock In {\em Proceedings of the 49th Annual ACM SIGACT Symposium on Theory
  of Computing}, pages 132--145, 2017.

\bibitem[KT06]{kawanabe2006estimating}
Motoaki Kawanabe and Fabian~J Theis.
\newblock Estimating non-gaussian subspaces by characteristic functions.
\newblock In {\em International Conference on Independent Component Analysis
  and Signal Separation}, pages 157--164. Springer, 2006.

\bibitem[Kun20]{tim-sos6}
Dmitriy Kunisky.
\newblock Positivity-preserving extensions of sum-of-squares pseudomoments over
  the hypercube.
\newblock {\em arXiv preprint arXiv:2009.07269}, 2020.

\bibitem[KWB19]{kunisky2019notes}
Dmitriy Kunisky, Alexander~S Wein, and Afonso~S Bandeira.
\newblock Notes on computational hardness of hypothesis testing: Predictions
  using the low-degree likelihood ratio.
\newblock {\em arXiv preprint arXiv:1907.11636}, 2019.

\bibitem[Lag84]{lagarias1984knapsack}
Jeffrey~C Lagarias.
\newblock Knapsack public key cryptosystems and diophantine approximation.
\newblock In {\em Advances in cryptology}, pages 3--23. Springer, 1984.

\bibitem[Las01]{lasserre-sos}
Jean~B Lasserre.
\newblock Global optimization with polynomials and the problem of moments.
\newblock {\em SIAM Journal on optimization}, 11(3):796--817, 2001.

\bibitem[LLL82]{lenstra1982factoring}
Arjen~Klaas Lenstra, Hendrik~Willem Lenstra, and L{\'a}szl{\'o} Lov{\'a}sz.
\newblock Factoring polynomials with rational coefficients.
\newblock {\em Mathematische Annalen}, 261(4):515--534, 1982.

\bibitem[LLL{\etalchar{+}}20]{li2020birds}
Xiaodong Li, Yang Li, Shuyang Ling, Thomas Strohmer, and Ke~Wei.
\newblock When do birds of a feather flock together? k-means, proximity, and
  conic programming.
\newblock {\em Mathematical Programming}, 179(1):295--341, 2020.

\bibitem[LO85]{Lagarias85}
J.~C. Lagarias and A.~M. Odlyzko.
\newblock Solving low-density subset sum problems.
\newblock {\em J. ACM}, 32(1):229–246, January 1985.

\bibitem[Lov86]{lovasz1986algorithmic}
L{\'a}szl{\'o} Lov{\'a}sz.
\newblock {\em An algorithmic theory of numbers, graphs and convexity}.
\newblock SIAM, 1986.

\bibitem[LZ16]{lu2016statistical}
Yu~Lu and Harrison~H Zhou.
\newblock Statistical and computational guarantees of lloyd's algorithm and its
  variants.
\newblock {\em arXiv preprint arXiv:1612.02099}, 2016.

\bibitem[MLKZ20]{maillard2020phase}
Antoine Maillard, Bruno Loureiro, Florent Krzakala, and Lenka Zdeborov{\'a}.
\newblock Phase retrieval in high dimensions: Statistical and computational
  phase transitions, 2020.

\bibitem[MNV16]{v012a011}
Raghu Meka, Oanh Nguyen, and Van Vu.
\newblock Anti-concentration for polynomials of independent random variables.
\newblock {\em Theory of Computing}, 12(11):1--17, 2016.

\bibitem[MR09]{micciancio2009lattice}
Daniele Micciancio and Oded Regev.
\newblock {\em Lattice-based Cryptography}, pages 147--191.
\newblock Springer Berlin Heidelberg, Berlin, Heidelberg, 2009.

\bibitem[MR14]{MR-tensor-pca}
Andrea Montanari and Emile Richard.
\newblock A statistical model for tensor {PCA}.
\newblock {\em arXiv preprint arXiv:1411.1076}, 2014.

\bibitem[MRX20]{lifting-sos}
Sidhanth Mohanty, Prasad Raghavendra, and Jeff Xu.
\newblock Lifting sum-of-squares lower bounds: degree-2 to degree-4.
\newblock In {\em Proceedings of the 52nd Annual ACM SIGACT Symposium on Theory
  of Computing}, pages 840--853, 2020.

\bibitem[MS16]{MS-sos2}
Andrea Montanari and Subhabrata Sen.
\newblock Semidefinite programs on sparse random graphs and their application
  to community detection.
\newblock In {\em Proceedings of the forty-eighth annual ACM symposium on
  Theory of Computing}, pages 814--827, 2016.

\bibitem[Mui09]{muirhead2009aspects}
Robb~J Muirhead.
\newblock {\em Aspects of multivariate statistical theory}, volume 197.
\newblock John Wiley \& Sons, 2009.

\bibitem[MV10]{moitrav2010mixture}
Ankur Moitra and Gregory Valiant.
\newblock Settling the polynomial learnability of mixtures of gaussians.
\newblock In {\em FOCS}, page 93–102, 2010.

\bibitem[MVW17]{mixon2017clustering}
Dustin~G Mixon, Soledad Villar, and Rachel Ward.
\newblock Clustering subgaussian mixtures by semidefinite programming.
\newblock {\em Information and Inference: A Journal of the IMA}, 6(4):389--415,
  2017.

\bibitem[MW15]{MW-reduction}
Zongming Ma and Yihong Wu.
\newblock Computational barriers in minimax submatrix detection.
\newblock {\em The Annals of Statistics}, 43(3):1089--1116, 2015.

\bibitem[MW21]{planted-vec-ld}
Cheng Mao and Alexander~S Wein.
\newblock Optimal spectral recovery of a planted vector in a subspace.
\newblock {\em arXiv preprint arXiv:2105.15081}, 2021.

\bibitem[Nda18]{ndaoud2018sharp}
Mohamed Ndaoud.
\newblock Sharp optimal recovery in the two-component gaussian mixture model.
\newblock {\em arXiv preprint arXiv:1812.08078}, 2018.

\bibitem[NOTV17]{nordhausen2017asymptotic}
Klaus Nordhausen, Hannu Oja, David~E Tyler, and Joni Virta.
\newblock Asymptotic and bootstrap tests for the dimension of the non-gaussian
  subspace.
\newblock {\em IEEE Signal Processing Letters}, 24(6):887--891, 2017.

\bibitem[NR03]{naor2003projecting}
Assaf Naor and Dan Romik.
\newblock Projecting the surface measure of the sphere of $\ell_p^{n}$.
\newblock In {\em Annales de l'IHP Probabilit{\'e}s et statistiques},
  volume~39, pages 241--261, 2003.

\bibitem[NWR19]{spiked-transport}
Jonathan Niles-Weed and Philippe Rigollet.
\newblock Estimation of wasserstein distances in the spiked transport model.
\newblock {\em arXiv preprint arXiv:1909.07513}, 2019.

\bibitem[OP82]{olkin1982distance}
Ingram Olkin and Friedrich Pukelsheim.
\newblock The distance between two random vectors with given dispersion
  matrices.
\newblock {\em Linear Algebra and its Applications}, 48:257--263, 1982.

\bibitem[Par00]{parrilo-sos}
Pablo~A Parrilo.
\newblock {\em Structured semidefinite programs and semialgebraic geometry
  methods in robustness and optimization}.
\newblock California Institute of Technology, 2000.

\bibitem[QSW16]{qu2016finding}
Qing Qu, Ju~Sun, and John Wright.
\newblock Finding a sparse vector in a subspace: Linear sparsity using
  alternating directions.
\newblock {\em IEEE Transactions on Information Theory}, 62(10):5855--5880,
  2016.

\bibitem[QZL{\etalchar{+}}20]{qu2020finding}
Qing Qu, Zhihui Zhu, Xiao Li, Manolis~C Tsakiris, John Wright, and Ren{\'e}
  Vidal.
\newblock Finding the sparsest vectors in a subspace: Theory, algorithms, and
  applications.
\newblock {\em arXiv preprint arXiv:2001.06970}, 2020.

\bibitem[Roy17]{royer2017adaptive}
Martin Royer.
\newblock Adaptive clustering through semidefinite programming.
\newblock In {\em Proceedings of the 31st International Conference on Neural
  Information Processing Systems}, pages 1793--1801, 2017.

\bibitem[RSS18]{sos-survey}
Prasad Raghavendra, Tselil Schramm, and David Steurer.
\newblock High dimensional estimation via sum-of-squares proofs.
\newblock In {\em Proceedings of the International Congress of Mathematicians:
  Rio de Janeiro 2018}, pages 3389--3423. World Scientific, 2018.

\bibitem[RV10]{rudelson2010non}
Mark Rudelson and Roman Vershynin.
\newblock Non-asymptotic theory of random matrices: extreme singular values.
\newblock In {\em Proceedings of the International Congress of Mathematicians
  2010 (ICM 2010)}, pages 1576--1602. World Scientific, 2010.

\bibitem[Sch08]{schoenebeck2008linear}
Grant Schoenebeck.
\newblock Linear level lasserre lower bounds for certain {k-CSPs}.
\newblock In {\em 2008 49th Annual IEEE Symposium on Foundations of Computer
  Science}, pages 593--602. IEEE, 2008.

\bibitem[SKBM08]{sugiyama2008approximating}
Masashi Sugiyama, Motoaki Kawanabe, Gilles Blanchard, and Klaus-Robert Muller.
\newblock Approximating the best linear unbiased estimator of non-gaussian
  signals with gaussian noise.
\newblock {\em IEICE transactions on information and systems},
  91(5):1577--1580, 2008.

\bibitem[SM50]{sherman1950adjustment}
Jack Sherman and Winifred~J. Morrison.
\newblock {Adjustment of an Inverse Matrix Corresponding to a Change in One
  Element of a Given Matrix}.
\newblock {\em The Annals of Mathematical Statistics}, 21(1):124 -- 127, 1950.

\bibitem[STS16]{sasaki2016sufficient}
Hiroaki Sasaki, Voot Tangkaratt, and Masashi Sugiyama.
\newblock Sufficient dimension reduction via direct estimation of the gradients
  of logarithmic conditional densities.
\newblock In {\em Asian Conference on Machine Learning}, pages 33--48. PMLR,
  2016.

\bibitem[SZB21]{song2021cryptographic}
Min~Jae Song, Ilias Zadik, and Joan Bruna.
\newblock On the cryptographic hardness of learning single periodic neurons.
\newblock {\em arXiv preprint arXiv:2106.10744}, 2021.

\bibitem[TV18]{tan2018polynomial}
Yan~Shuo Tan and Roman Vershynin.
\newblock Polynomial time and sample complexity for non-gaussian component
  analysis: Spectral methods.
\newblock In {\em Conference On Learning Theory}, pages 498--534. PMLR, 2018.

\bibitem[Vil09]{villani2009optimal}
C{\'e}dric Villani.
\newblock {\em Optimal transport: old and new}, volume 338.
\newblock Springer, 2009.

\bibitem[VX11]{vempala2011structure}
Santosh~S Vempala and Ying Xiao.
\newblock Structure from local optima: Learning subspace juntas via higher
  order {PCA}.
\newblock {\em arXiv preprint arXiv:1108.3329}, 2011.

\bibitem[ZG18]{NEURIPS2018_ccc0aa1b}
Ilias Zadik and David Gamarnik.
\newblock High dimensional linear regression using lattice basis reduction.
\newblock In {\em Advances in Neural Information Processing Systems},
  volume~31. Curran Associates, Inc., 2018.

\end{thebibliography}

\end{document}